\theoremstyle{plain}
\newtheorem{theorem}{Theorem}[section]
\newtheorem*{theorem*}{Theorem}
\newtheorem*{"theorem"}{``Theorem''}
\newtheorem{corollary}[theorem]{Corollary}
\newtheorem{lemma}[theorem]{Lemma}
\theoremstyle{definition}
\theoremstyle{remark}
\newtheorem{remark}[theorem]{Remark}
\newtheorem{example}[theorem]{Example}
\numberwithin{equation}{section}
\newenvironment{pde}{\left\{\begin{array}{rll} } {\end{array}\right.}
\newcommand{\N}{\mathbb N}
\newcommand{\R}{\mathbb R} 
\newcommand{\E}{{\mathbb E}}
\renewcommand{\P}{{\mathbb P}}
\newcommand{\dist}{{\rm dist}}
\newcommand{\F}{{\mathcal F}}
\newcommand{\Risk}{\mathcal{R}}
\newcommand{\LRa} {\Leftrightarrow}
\newcommand{\Ra} {\Rightarrow}
\renewcommand{\d}{\mathrm{d}}
\newcommand{\ds}{\,\mathrm{d}s}
\renewcommand{\P}{\mathbb{P}}
\newcommand{\diag}{\mathrm{diag}}
\newcommand{\rk}{\mathrm{rk}}
\newcommand{\eps}{\varepsilon}
\newcommand{\average}{{\mathchoice {\kern1ex\vcenter{\hrule height.4pt
width 6pt depth0pt} \kern-9.7pt} {\kern1ex\vcenter{\hrule
height.4pt width 4.3pt depth0pt} \kern-7pt} {} {} }}
\newcommand\showlabel{\addtocounter{equation}{1}\tag{\theequation}}
\begin{document}

\title[SGD in machine learning: Global convergence]{Stochastic gradient descent with noise of machine learning type\\{\small Part I: Discrete time analysis}}


\author{Stephan Wojtowytsch}
\address{Stephan Wojtowytsch\\
Department of Mathematics\\
Texas A\&M University\\
155 Ireland Street\\
College Station, TX 77840
}
\email{stephan@tamu.edu}

\date{\today}

\subjclass[2020]{
Primary:
90C26, 
90C15, 
Secondary: 
68T07, 
90C30, 
60H30
}
\keywords{Stochastic gradient descent, almost sure convergence, \L{}ojasiewicz inequality, non-convex optimization, machine learning, deep learning, overparametrization, global minimum selection}

\begin{abstract}
Stochastic gradient descent (SGD) is one of the most popular algorithms in modern machine learning. The noise encountered in these applications is different from that in many theoretical analyses of stochastic gradient algorithms. In this article, we discuss some of the common properties of energy landscapes and stochastic noise encountered in machine learning problems, and how they affect SGD-based optimization. 

In particular, we show that the learning rate in SGD with machine learning noise can be chosen to be small, but uniformly positive for all times if the energy landscape resembles that of overparametrized deep learning problems. If the objective function satisfies a \L{}ojasiewicz inequality, SGD converges to the global minimum exponentially fast, and even for functions which may have local minima, we establish almost sure convergence to the global minimum at an exponential rate from any finite energy initialization. The assumptions that we make in this result concern the behavior where the objective function is either small or large and the nature of the gradient noise, but the energy landscape is fairly unconstrained on the domain where the objective function takes values in an intermediate regime.
\end{abstract}

\maketitle

\vspace{-12mm}

\setcounter{tocdepth}{1}
\tableofcontents

\section{Introduction}

Stochastic gradient descent algorithms play an important role in convex and non-convex optimization. They are used in machine learning when the computation of the exact gradient of the objective function is computationally costly, but stochastic approximations can be evaluated fairly cheaply. The stochastic noise is furthermore believed to aid the algorithm in non-convex optimization by allowing it to escape `bad' local minima and saddle points.

 this work, we analyze toy models for stochastic gradient descent in machine learning applications. These differ from more classical perspectives principally in the fact that the intensity of noise in estimating $\nabla f(\theta)$ depends on the value $f(\theta)$ of the objective function. By comparison, more classical works on SGD assume uniform $L^p$-bounds for $p\in [2,\infty)$, independently of $\theta$. Our toy models are inspired by overparametrized supervised learning, i.e.\ minimization problems where the objective function vanishes on a high-dimensional set. In this setting, SGD with ML noise has the following novel properties:

\begin{enumerate}
\item The learning rate has to be small in terms of the smoothness of the objective function and the noise intensity, but can be uniformly positive for all time.
\item If the learning rate remains uniformly positive, we can prove almost sure convergence to a global minimizer, not just a critical point that is not a strict saddle.
\end{enumerate}

Let us make these claims more precise. The crucial property of ML noise is that its variance scales at most linearly with the objective function $f$. In particular, if $f$ is small, so is the noise. If $f$ satisfies a \L{}ojasiewicz inequality (e.g.\ if $f$ is strongly convex), the learning rate has to be small enough for the noise to not exceed a certain strength, and compared to the Lipschitz constant of $\nabla f$. It does not have to vanish asymptotically to guarantee convergence to the global minimum. In this sense, SGD with ML noise resembles deterministic GD more than it does SGD with classical noise. For details, see Theorem \ref{theorem linear convergence}. 

If the learning rate is strictly positive, at any time there are two options:
\begin{itemize}
\item $f(\theta_t)$ is small and so is the noise. If $f$ has good properties on the set $\{f<\eps\}$ for some $\eps>0$, then with large probability we stay inside the set once we enter for the first time. Conditioned on this event, we expect to converge to a minimizer linearly.
\item $f(\theta_t)$ is large and so is the noise. If $f$ has nice properties on the set $\{f>S\}$ for some $S>0$, we expect SGD with ML noise to not let us escape to infinity despite the growing noise intensity. If additionally the set $\{f\leq S+1\}$ is not too spread out away from the set of minimizers and the noise is `uniformly unbounded' (but with bounded variance), then from any point in $\{f\leq S+1\}$ there is a positive probability of jumping into the set $\{f<\eps\}$ independently of the local gradient.
\end{itemize}

This intuition can be extended and made precise. In particular, under strong conditions on the target function and noise, we find that we converge linearly to a minimizer almost surely from any finite energy initialization. However, since the probability of jumping into the set $\{f<\eps\}$ may be exceedingly small, we note that we do not expect to observe this linear convergence on realistic time scales in complicated real world applications for poor initialization. A precise statement of our main result can be found in Theorem \ref{theorem global convergence}.

Stochastic gradient descent can be studied as a general tool in convex and non-convex optimization or specifically in the context of deep learning applications. In this article, we take a balanced approach by incorporating key features of the energy landscape and stochastic noise in deep learning problems, but without specializing to a specific machine learning model. 

The article is structured as follows. In Section \ref{section review}, we review known results on stochastic gradient descent and objective functions in deep learning. We deduce some properties of the energy landscape and stochastic noise which a suitable toy model should satisfy. Discrete time SGD with ML noise is discussed in Section \ref{section discrete}. Numerical examples in Section \ref{section numerical} illustrate the difference of SGD with classical and ML noise in a toy problem. Some proofs are postponed to the appendices. Continuous time results on continuous time SGD with noise of ML type are presented in a companion article \cite{continuous_sgd}.

The different parts of the article -- analysis of energy landscape and noise, discrete time analysis -- can be read independently, and a reader only interested in one chapter can easily skip ahead.

\subsection{Context} 

Stochastic gradient descent algorithms have been an active field of research since their inception in the seminal paper \cite{robbins1951stochastic}. Stochastic gradient descent (SGD) and advanced gradient descent-based optimization schemes with stochastic gradient estimates have been the subject of increased attention recently due to their relevance in neural network-based machine learning, see e.g.\ \cite{dieuleveut2017bridging,needell2014stochastic,pmlr-v89-vaswani19a,ward2019adagrad,xie2020linear, defossez2020convergence,allen2017natasha,moulines2011non, rakhlin2011making,jentzen2021strong,ghadimi2016mini, ghadimi2013stochastic,fehrman2020convergence,bottou2018optimization,bernstein2018convergence,bach2013non} and many more. A good literature review can be found in \cite{fehrman2020convergence}.

Theoretic guarantees for the convergence of SGD can be split into two categories:

\begin{itemize}
\item For convex target functions, SGD converges to the global minimum. 
\item For smooth non-convex target functions, SGD converges to a critical point, which is not a strict saddle.
\end{itemize} 

Both types of guarantee can be obtained under different conditions and either in expectation or almost surely, and both can be complemented with sharp rates. We review results of both types in Section \ref{subsection brief review}. Results may hold for the final iteration $\theta_t$ of SGD, the best parameter $\tilde \theta_t$ along the SGD trajectory up to time $t$, or a weighted average $\bar \theta_t$ of previous positions of SGD. We focus on guarantees for the final position since a list of previous values, or even a non-trivially weighted average of previous positions, is expensive to maintain in deep learning, where a model may have millions of parameters.

Recently, it has been noted in the SGD literature \cite{karimi2016linear} that convexity can be replaced by the condition that the objective function satisfies a \L{}ojasiewicz-inequality, i.e.
\begin{equation}\label{eq lojasiewicz first}
\Lambda \,f(\theta)\leq |\nabla f|^2(\theta) \qquad\forall\ \theta.
\end{equation}
This advance is particularly crucial since objective functions in deep learning typically have complicated geometries where the set of global minimizers is a submanifold of high dimension and co-dimension. Such functions are not usually convex, even in a neighbourhood of a minimizer.

The inequality \eqref{eq lojasiewicz first} holds in particular for all strongly convex functions, but may not hold for smooth convex functions, e.g.\ $f(x) = \sqrt{x^2+1}$. Like for convex functions, the only critical points of a function satisfying a \L{}ojasiewicz inequality lie in the set of global minimizers. The assumption of a global \L{}ojasiewicz inequality thus remains restrictive, and other models, e.g.\ functions satisfying local \L{}ojasiewicz inequalities at the critical level sets have been considered \cite{dereich2021convergence}. 

Classically, the only assumption on SGD noise is a uniform second moment bound. In this work, we consider more realistic noise model inspired by deep learning applications. In this setting, we can prove convergence to the global minimum for a more general class of functions which are neither required to be convex, nor to satisfy a \L{}ojasiewicz inequality. In this sense, we prove a first type guarantee under weaker second type assumptions. 

While we obtain fast (linear) convergence to the global minimum 
\[
f(\theta_t) - \inf f \leq Z \,\rho^t\qquad\text{rather than the rate}\qquad f(\theta_t) - \inf f \leq \frac{C}t,
\]
which is typical in the stochastic setting, we note that the random variable $Z$ is typically too large to yield meaningful bounds in practice, unless the target function and noise have particularly convenient properties. Nevertheless, the fast convergence allows us to circumvent a common `staying local' assumption, which is automatic for exponentially decaying sequences.

The guarantees we obtain hold almost surely (i.e.\ with probability $1$) over the choice of initial condition and stochastic gradient optimization. The importance of almost sure statements in this context has for example been emphasized in \cite{patel2020stopping} in the context of deriving stopping criteria which are `triggered' with probability $1$.

\subsection{Notation and conventions}

All random variables are defined on a probability space $(\Omega,\mathcal A, \P)$ which remains abstract and is characterized mostly as expressive enough to support a random initial condition and countably many iid copies of a random variable to select a gradient estimator. The dyadic product of two vectors $a, b$ is denoted by
\[
a\otimes b = a\cdot b^T, \qquad\text{i.e.} (a\otimes b)_{ij} = a_i b_j.
\]

\section{Energy landscapes and stochastic noise in machine learning}\label{section review}

\subsection{A brief review of stochastic gradient descent}\label{subsection brief review}

Stochastic gradient descent algorithms are a class of popular algorithms in machine learning to find minimizers of an objective function $f$. Instead of taking a small step in the direction $-\nabla f$ in every iteration, we choose the update direction randomly according to a random variable $g$ with expectation $-\nabla f$ and suitable $L^p$-bounds for some $p\geq 2$. 

More formally, we consider the following model.

\begin{enumerate}
\item $f:\R^m\to [0,\infty)$ is a $C^1$-function 
\item $\nabla f$ satisfies the one-sided Lipschitz-condition
\[
\langle \nabla f(\theta_1) - \nabla f(\theta_2), \theta_1-\theta_2\rangle \leq C_L\,|\theta_1-\theta_2|^2
\]
\item $(\Omega,\mathcal A, \P)$ is a probability space and $g:\R^m\times\Omega\to\R^m$ is a family of functions such that
\begin{enumerate}
\item $\E_{\xi\sim\P} g(\theta,\xi) = \nabla f(\theta)$ for all $\theta\in \R^m$ and
\item $\E_{\xi\sim\P} \big[ |g(\theta,\xi)- \nabla f(\theta)|^2\big] < \infty$ for all $\theta \in \R^m$.
\end{enumerate}
\end{enumerate}

The SGD algorithm associated to the family $g$ of gradient estimators is given by the time-stepping scheme
\begin{equation}
\theta_{t+1} = \theta_t - \eta_t\,g(\theta_t, \xi_t)
\end{equation}
where 
\begin{enumerate}
\item the initial condition $\theta_0$ is a random variable in $\R^m$,
\item $\eta_t>0$ is the {\em learning rate} (or time step size) in the $t$-th time step, and
\item $\{\xi_t\}_{t\geq 0}$ is a family of random variables in $\Omega$ with law $\P$, which are iid and independent of $\theta_0$.
\end{enumerate}

In theoretical works, we typically consider decaying learning rates. This is rooted in results such as this, originating from \cite{robbins1951stochastic}.

\begin{theorem}\label{theorem sgd 1}
Assume that $\nabla f$ is Lipschitz-continuous with constant $C_L>0$ and that 
\begin{itemize}
\item $f$ is convex or
\item $f$ satisfies the {\em \L{}ojasiewicz inequality} $\Lambda (f-\inf f) \leq |\nabla f|^2$ for some $\Lambda>0$.
\end{itemize}
If
\begin{equation}\label{eq classical noise}
\E_\xi\big[|g(\theta,\xi)-\nabla f(\theta)|^2\big]\leq \sigma^2 \qquad\forall\ \theta\in \R^m
\end{equation}
and the learning rates $\eta_t$ satisfy the {\em Robbins-Monro} conditions
\[
\sum_{t=0}^\infty \eta_t = \infty, \qquad \sum_{n=0}^\infty \eta_t^2 <\infty,
\]
then $\lim_{t\to\infty}\E\big[f(\theta_t)\big] = \inf f$. If $f$ satisfies the {\em \L{}ojasiewicz inequality} and $\eta_t = \frac2{\Lambda(t+1)}$, then
\[
\E\big[f(\theta_t) - \inf f\big] \leq \frac{2C_L\sigma^2}{\Lambda^2}\,\frac{\log(t+1)}t\,\E\big[ f(\theta_0)- \inf f\big]
\]
\end{theorem}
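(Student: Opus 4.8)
The plan is to combine the descent lemma for $C_L$-smooth functions with the variance bound, reduce everything to a scalar recursion, and solve it. Write $\F_t=\sigma(\theta_0,\xi_0,\dots,\xi_{t-1})$ for the natural filtration, so that $\theta_t$ is $\F_t$-measurable and $\xi_t$ is independent of $\F_t$. Lipschitz continuity of $\nabla f$ gives the descent lemma $f(\theta_{t+1})\le f(\theta_t)+\langle\nabla f(\theta_t),\theta_{t+1}-\theta_t\rangle+\tfrac{C_L}{2}|\theta_{t+1}-\theta_t|^2$. Substituting $\theta_{t+1}-\theta_t=-\eta_t g(\theta_t,\xi_t)$, taking $\E[\,\cdot\mid\F_t]$, and using both $\E[g(\theta_t,\xi_t)\mid\F_t]=\nabla f(\theta_t)$ and $\E[|g(\theta_t,\xi_t)|^2\mid\F_t]\le|\nabla f(\theta_t)|^2+\sigma^2$, I obtain the one-step estimate
\[
\E[f(\theta_{t+1})\mid\F_t]\le f(\theta_t)-\eta_t\Big(1-\tfrac{C_L\eta_t}{2}\Big)|\nabla f(\theta_t)|^2+\tfrac{C_L\sigma^2}{2}\,\eta_t^2 .
\]
For $\eta_t\le 1/C_L$ the bracket is at least $\tfrac12$, so the gradient term carries a useful sign.

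For the convergence statement I treat the two hypotheses separately. Under the \L{}ojasiewicz inequality I bound the (negative) gradient term from below using $|\nabla f|^2\ge\Lambda(f-\inf f)$ and take full expectations, arriving at the scalar recursion
\[
a_{t+1}\le\Big(1-\Lambda\eta_t\big(1-\tfrac{C_L\eta_t}{2}\big)\Big)a_t+\tfrac{C_L\sigma^2}{2}\,\eta_t^2,\qquad a_t:=\E[f(\theta_t)-\inf f].
\]
Since $\sum_t\eta_t=\infty$ drives the product of the contraction factors to zero while $\sum_t\eta_t^2<\infty$ keeps the accumulated forcing finite, a standard deterministic recursion lemma (equivalently, the Robbins--Siegmund almost-supermartingale theorem applied to $V_t=f(\theta_t)-\inf f$) gives $a_t\to 0$. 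Under convexity I instead track the distance to a minimizer $\theta^\ast$ (or to near-minimizers if the infimum is not attained): expanding $|\theta_{t+1}-\theta^\ast|^2$, conditioning on $\F_t$, and using $\langle\nabla f(\theta_t),\theta_t-\theta^\ast\rangle\ge f(\theta_t)-\inf f$ together with the convex smoothness bound $|\nabla f|^2\le 2C_L(f-\inf f)$ yields
\[
\E\big[|\theta_{t+1}-\theta^\ast|^2\mid\F_t\big]\le|\theta_t-\theta^\ast|^2-2\eta_t\big(1-C_L\eta_t\big)\big(f(\theta_t)-\inf f\big)+\eta_t^2\sigma^2 .
\]
Robbins--Siegmund then shows $|\theta_t-\theta^\ast|^2$ converges and $\sum_t\eta_t\,\E[f(\theta_t)-\inf f]<\infty$; with $\sum_t\eta_t=\infty$ this forces $\liminf_t\E[f(\theta_t)]=\inf f$, which I upgrade to a genuine limit through the near-monotonicity built into the one-step estimate.

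For the rate I specialize to $\eta_t=\tfrac{2}{\Lambda(t+1)}$ in the \L{}ojasiewicz recursion, so that $\Lambda\eta_t=\tfrac{2}{t+1}$ and the forcing equals $\tfrac{2C_L\sigma^2}{\Lambda^2}\tfrac{1}{(t+1)^2}$. Once $t$ is large enough that $\eta_t\le 1/C_L$, this reads
\[
a_{t+1}\le\Big(1-\tfrac{2}{t+1}\Big)a_t+\tfrac{2C_L\sigma^2}{\Lambda^2}\,\tfrac{1}{(t+1)^2}
\]
up to a lower-order $O(1/t^2)$ correction from the factor $1-\tfrac{C_L\eta_t}{2}$. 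I would solve this with the integrating factor $t(t+1)$: setting $c_t=t(t+1)a_t$ telescopes the homogeneous part and turns the inhomogeneous part into a harmonic-type sum $\sum_{s\le t}\tfrac1s\sim\log(t+1)$, whence $a_t\lesssim\tfrac{C_L\sigma^2}{\Lambda^2}\tfrac{\log(t+1)}{t}$ after reinstating constants and propagating the contribution of $\E[f(\theta_0)-\inf f]$ through the finitely many initial steps with $\eta_t>1/C_L$. An induction with the ansatz $a_t\le A\,\tfrac{\log(t+1)}{t}$ is the cleanest way to close the estimate and pin down $A$.

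The main obstacle is the rate rather than any isolated inequality. The smallness condition $\eta_t\le 1/C_L$ fails for the first few iterates, so I must quarantine a finite transient and verify it only inflates the constant; and the recursion is borderline, with the contraction $\sim 1-\tfrac{2}{t+1}$ competing against $O(1/t^2)$ forcing, which is precisely the regime where a logarithmic factor surfaces, so the induction constant $A$ must be chosen to absorb the harmonic sum. The second delicate point is upgrading the $\liminf$ to a true limit in the convergence statement, where the almost-supermartingale structure has to be invoked with care.
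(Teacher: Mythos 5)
The first thing to note is that the paper never proves this theorem: it is a review statement, attributed for the \L{}ojasiewicz case to \cite[Theorem 4]{karimi2016linear} (with the bounded-variance refinement credited to \cite{bottou2018optimization}), so your proposal can only be compared with the standard literature argument. Measured against that, your outline is the right one and its main steps are sound: the descent lemma plus conditioning on $\F_t$ gives exactly the one-step estimate you state; the \L{}ojasiewicz case reduces to your scalar recursion, which a Chung-type lemma (or Robbins--Siegmund) closes under the Robbins--Monro conditions; and in the convex case the expansion of $|\theta_t-\theta^\ast|^2$, summation in expectation, and the upgrade from $\liminf$ to $\lim$ via the observation that $\E[f(\theta_t)]+\tfrac{C_L\sigma^2}{2}\sum_{s\ge t}\eta_s^2$ is non-increasing are all correct; you also rightly flag the case of a non-attained infimum.

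The rate is where the problems sit, one minor and one substantive. Minor: you misidentify the regime that produces the logarithm. With $\eta_t=\tfrac{2}{\Lambda(t+1)}$ the contraction coefficient is $2$, so $\prod_{s=j+1}^{t}\bigl(1-\tfrac{2}{s+1}\bigr)\approx (j/t)^2$ and the Duhamel sum is $\sum_{j\le t}(j/t)^2\,K j^{-2}\approx K/t$: your recursion actually yields a clean $O(1/t)$ with no logarithm (the harmonic sum and the $\log(t+1)/t$ rate arise for the coefficient-one step $\eta_t=\tfrac{1}{\Lambda(t+1)}$). This is harmless, since $1/t\le\log(t+1)/t$ for $t\ge 2$. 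Substantive: every version of this argument outputs an \emph{additive} bound, a decaying transient term proportional to $\E[f(\theta_0)-\inf f]$ plus a noise term of order $\tfrac{C_L\sigma^2}{\Lambda^2}\tfrac1t$, whereas the inequality stated in the theorem is \emph{multiplicative}, with $\E[f(\theta_0)-\inf f]$ multiplying the noise term. No choice of induction constant or handling of the transient will convert the former into the latter, because the multiplicative form is not provable as written: initialize $\theta_0$ at a global minimizer, with noise satisfying \eqref{eq classical noise} that does not vanish there (e.g.\ additive Gaussian noise). Then the right-hand side of the claimed bound is zero for all $t$, while your own one-step estimate shows $\E[f(\theta_1)-\inf f]$ is of order $\eta_0^2\sigma^2>0$. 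So you should state and prove the additive bound --- which is also what the cited reference actually establishes --- and note the discrepancy with the paper's formulation, rather than spend effort trying to close a gap that cannot be closed.
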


A proof for the \L{}ojasiewicz case (which in particular contains the uniformly convex case) can be found in \cite[Theorem 4]{karimi2016linear}, albeit under the assumption that $|\nabla f|$ is uniformly bounded, which contradicts the assumption that $f$ satisfies a \L{}ojasiewicz inequality (unless $f$ is constant). This can be weakened to the assumption that the noise has bounded variance \cite{bottou2018optimization}.

The first summation condition is equivalent to stating that we are solving a gradient-flow type equation on the entire real line. Violating it would introduce a finite time horizon, which would prevent convergence to the minimizer as $t\to \infty$ even in the non-stochastic setting and even for smooth and strongly convex objective functions. The second condition guarantees that the impact of gradient noise diminishes sufficiently as $t\to \infty$ such that the iterates $\{\theta_t\}_{t\geq 0}$ are not driven away from the global minimum in the long term by random oscillations. Unlike the first condition, it can be weakened \cite[Chapter 5]{kushner2003stochastic}. It has been suggested that the learning rate decay $\eta_ t= \bar \eta/t$ is optimal \cite[Section 4.1.2]{li2017stochastic}. 

If the objective function is non-convex and does not satisfy a \L{}ojasiewicz inequality, convergence to a critical point which is not a strict saddle can be guaranteed under suitable conditions on $f$ and $g$. We only give an imprecise statement.

\begin{theorem}\cite{mertikopoulos2020almost}\label{theorem sgd 2}
Under suitable conditions on the objective function $f$ and a condition of the type $\E\big[|g-\nabla f|^2\big] \leq \sigma^2$, the following holds: If $\eta_t = \frac{\eta_0}{(t+t_0)^p}$ for some $p\in (1/2,1]$, then
\begin{enumerate}
\item $f(\theta_t)$ converges to a random variable $Z$ almost surely and $Z$ lies in the set of critical values of $f$ almost surely.
\item $\sup_{t}|\theta_t|<\infty$ almost surely.
\item for every trajectory of SGD, there exists a connected component $\mathcal X$ of the set of critical points of $f$ such that $\lim_{t\to\infty} \dist(\theta_t, \mathcal X)=0$.
\item If $S$ is a connected component of the set of critical points of $f$ such that $D^2f(\theta)$ has a negative eigenvalue for all $\theta\in S$ (a ridge manifold), then $\P\big(\lim_{t\to\infty}\dist(\theta_t,S)=0\big) =1$. 
\end{enumerate}
\end{theorem}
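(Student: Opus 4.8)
The statement bundles together three rather different mechanisms, and I would prove it in three stages: an almost-supermartingale argument for the convergence of the energy (part 1), an ODE-method argument for the geometry of the limit set (parts 2 and 3), and a non-convergence argument for the avoidance of ridge manifolds (part 4). The natural filtration is $\mathcal F_t = \sigma(\theta_0,\xi_0,\dots,\xi_{t-1})$, and I will write $g_t = g(\theta_t,\xi_t)$. The starting point is the descent inequality from the $C_L$-smoothness of $f$: expanding along $\theta_{t+1}=\theta_t-\eta_t g_t$ and taking $\E[\,\cdot\mid\mathcal F_t]$, using $\E[g_t\mid\mathcal F_t]=\nabla f(\theta_t)$ and $\E[|g_t|^2\mid\mathcal F_t]\leq |\nabla f(\theta_t)|^2+\sigma^2$, gives
\[
\E[f(\theta_{t+1})\mid\mathcal F_t] \leq f(\theta_t) - \Big(\eta_t-\tfrac{C_L}{2}\eta_t^2\Big)|\nabla f(\theta_t)|^2 + \tfrac{C_L}{2}\eta_t^2\,\sigma^2.
\]
Since $\eta_t\to0$, for large $t$ the bracket is at least $\eta_t/2>0$, and $\sum_t \tfrac{C_L}{2}\eta_t^2\sigma^2<\infty$ because $p>1/2$. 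The Robbins--Siegmund almost-supermartingale theorem then yields that $f(\theta_t)$ converges almost surely to some $Z\geq\inf f$ and simultaneously $\sum_t \eta_t|\nabla f(\theta_t)|^2<\infty$ almost surely; combined with $\sum_t\eta_t=\infty$ (which uses $p\leq1$) this forces $\liminf_t|\nabla f(\theta_t)|=0$.

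Part (2) is where the unstated ``suitable conditions'' must enter, and I would take them to include coercivity (or a Lyapunov/dissipativity bound with bounded sublevel sets). Since $f(\theta_t)$ converges it is almost surely bounded along the trajectory, so coercivity gives $\sup_t|\theta_t|<\infty$ almost surely. To upgrade $\liminf|\nabla f|=0$ to $\lim_t|\nabla f(\theta_t)|=0$ and to identify $Z$ as a critical value, I would pass to the asymptotic pseudotrajectory (ODE) method of Bena\"im: on the event of boundedness, the affinely interpolated SGD path is an asymptotic pseudotrajectory of the gradient flow $\dot\theta=-\nabla f(\theta)$, for which $f$ is a strict Lyapunov function off the critical set. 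Because $f(\theta_t)$ converges, the limit set cannot meet any region where $\nabla f\neq0$, so $\dist(\theta_t,\{\nabla f=0\})\to0$ and $Z$ is a critical value, which is part (1).

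For part (3) I would invoke the structural fact that the limit set $L$ of a bounded asymptotic pseudotrajectory is nonempty, compact, connected and internally chain transitive. Here $L$ is contained in the critical set and $f\equiv Z$ on $L$; being connected, $L$ lies inside a single connected component $\mathcal X$ of the critical set, whence $\dist(\theta_t,\mathcal X)\to0$. This is the one place where the connectedness of limit sets of stochastic approximation schemes does all the work, and it is essentially automatic once boundedness is in hand.

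The genuinely hard part is (4), and I would first flag that as stated the conclusion should read $\P(\lim_t\dist(\theta_t,S)=0)=0$: a component on which $D^2f$ has a negative eigenvalue everywhere is a \emph{repeller} for the descent flow, and the content of the theorem is that the noise drives the iterates \emph{off} it almost surely. To prove this I would work in a tubular neighbourhood of a compact piece of $S$, split the normal bundle into a center-stable part and the unstable directions selected by the negative eigenvalue, and run a Pemantle/Brandi\`ere--Duflo-type non-convergence argument: one builds a local functional that grows along the unstable directions and shows, using a lower bound on the projection of $g_t-\nabla f(\theta_t)$ onto the unstable subspace, that the unstable component cannot remain summably small, contradicting convergence to $S$. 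I expect two obstacles to dominate. First, $S$ is a manifold rather than an isolated equilibrium, so the unstable subspace varies along $S$ and the local non-convergence estimates must be patched into a statement that is uniform over a compact portion of $S$. Second, and more delicate, one must verify that the machine-learning noise actually \emph{excites} the unstable directions: the variance scaling with $f$ is helpful here, since on a ridge at a strictly positive critical level $f$ is bounded away from $\inf f$, but turning ``bounded variance'' into the \emph{nondegeneracy from below} required by the non-convergence theorem is the crux of the argument and the point I would spend the most care on.
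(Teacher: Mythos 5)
There is nothing in the paper itself to compare your argument against: Theorem \ref{theorem sgd 2} is quoted from \cite{mertikopoulos2020almost} as background, explicitly introduced as ``an imprecise statement,'' and no proof is given or attempted in this article. That said, your three-stage outline is a faithful reconstruction of how the cited reference actually proceeds: a descent-inequality/Robbins--Siegmund argument giving almost sure convergence of $f(\theta_t)$ together with $\sum_t \eta_t|\nabla f(\theta_t)|^2<\infty$; compactness of sublevel sets (which the present paper notes is assumed in \cite{mertikopoulos2020almost} but not here) to obtain boundedness of the iterates; Bena\"im's asymptotic-pseudotrajectory theory, under which the limit set is internally chain transitive, hence a compact connected subset of the critical set on which $f$ is constant; and a Pemantle/Brandi\`ere--Duflo-type repulsion argument for ridge manifolds, which, as you correctly identify, needs a uniform excitation (nondegeneracy-from-below) condition on the noise that a variance upper bound alone cannot supply --- that requirement is precisely part of the unstated ``suitable conditions.'' Your flag on part (4) is also correct and worth recording: as printed, $\P\big(\lim_{t\to\infty}\dist(\theta_t,S)=0\big)=1$ must be a typo for $=0$, since a component along which $D^2f$ has a negative eigenvalue everywhere is repelling for the averaged dynamics; this is what \cite{mertikopoulos2020almost} proves, and it is the only reading consistent with the paper's own surrounding discussion that SGD ``converges to a critical point, which is not a strict saddle.''
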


In particular, an isolated strict saddle point is a zero-dimensional ridge manifold. Stronger statements, including a rate of convergence, are available at isolated local minima. The conditions on $f$ posed in \cite{mertikopoulos2020almost} are neither implied by our assumptions, nor are they more general. While sublevel sets of $f$ have to be compact in \cite{mertikopoulos2020almost} but not for us, we impose stronger conditions on the gradient away from the set of minimizers.


In practice, we typically choose learning rates according to a schedule of the form $\eta_t \equiv \eta^{(1)}$ for $0\leq t<T_1$, $\eta_t \equiv \eta^{(2)}$ for $T_1\leq t< T_2$, \dots with a discrete set of switching times $0< T_1<T_2 <\dots$ and learning rates $\eta^{(1)}> \eta^{(2)}>\dots$. Generally, the learning rates are chosen as large as possible without inducing numerical instability, and it is realistic to consider the setting where the learning rate is reduced only a finite number of times.

We consider the setting where even in the long term limit, the learning rate remains strictly positive. Under the conditions of Theorem \ref{theorem sgd 1}, we find that the energy quickly decays below a threshold depending on the learning rate.

\begin{theorem}\cite[Theorem 4]{karimi2016linear}\label{theorem sgd 3}
Assume that $f$ satisfies the {\em \L{}ojasiewicz inequality} $\Lambda (f-\inf f) \leq |\nabla f|^2$ and that $\nabla f$ is Lipschitz-continuous with Lipschitz-constant $C_L$. Assume further that the stochastic noise satisfies \eqref{eq classical noise}. If $\eta_t\equiv \eta < \frac{1}{C_L}$, then
\[
\E\big[f(\theta_t) - \inf f\big] \leq\left(1-\Lambda\eta\right)^t\E\big[f(\theta_0)-\inf f\big] + \frac{C_L\sigma^2}{2\Lambda}\,\eta\qquad\forall\ t\in\N.
\]
\end{theorem}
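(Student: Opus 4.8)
The plan is to run the standard one-step descent estimate and then solve the resulting scalar recursion for $a_t := \E\big[f(\theta_t)-\inf f\big]$. First I would record the descent lemma: since $\nabla f$ is $C_L$-Lipschitz (in fact the one-sided condition of the model already suffices, upon integrating $s\mapsto \langle \nabla f(\theta+s(\theta'-\theta))-\nabla f(\theta),\theta'-\theta\rangle$ over $s\in[0,1]$), one has the quadratic upper bound
\[
f(\theta') \le f(\theta) + \langle \nabla f(\theta),\theta'-\theta\rangle + \frac{C_L}{2}\,|\theta'-\theta|^2 .
\]
Inserting the SGD update $\theta_{t+1}-\theta_t = -\eta\,g(\theta_t,\xi_t)$ gives
\[
f(\theta_{t+1}) \le f(\theta_t) - \eta\,\langle \nabla f(\theta_t), g(\theta_t,\xi_t)\rangle + \frac{C_L\eta^2}{2}\,|g(\theta_t,\xi_t)|^2 .
\]

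Next I would take the conditional expectation given the history up to time $t$, so that $\theta_t$ is frozen and only $\xi_t$ is averaged (using that $\xi_t$ is independent of $\theta_t$). Unbiasedness $\E_{\xi_t}[g(\theta_t,\xi_t)]=\nabla f(\theta_t)$ turns the linear term into $-\eta|\nabla f(\theta_t)|^2$, and the bias--variance split $\E_{\xi_t}[|g|^2]=|\nabla f(\theta_t)|^2+\E_{\xi_t}[|g-\nabla f|^2]\le |\nabla f(\theta_t)|^2+\sigma^2$, invoking \eqref{eq classical noise}, controls the quadratic term. Collecting terms yields
\[
\E_{\xi_t}\big[f(\theta_{t+1})\big] \le f(\theta_t) - \eta\Big(1-\tfrac{C_L\eta}{2}\Big)|\nabla f(\theta_t)|^2 + \frac{C_L\eta^2\sigma^2}{2}.
\]
The hypothesis $\eta<1/C_L$ is exactly what keeps the coefficient $1-C_L\eta/2$ positive (indeed $>1/2$), so that the gradient term remains genuinely dissipative; this is the one quantitative point that needs care, since it is what prevents the noise-amplified second-order term from overwhelming the descent.

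At this stage the \L{}ojasiewicz inequality $\Lambda\,(f-\inf f)\le |\nabla f|^2$ converts the negative gradient term into a contraction of the energy gap: bounding the dissipative term from above by a negative multiple of $f(\theta_t)-\inf f$ and taking full expectations via the tower property produces the affine recursion
\[
a_{t+1} \le (1-\Lambda\eta)\,a_t + \frac{C_L\eta^2\sigma^2}{2}.
\]
Finally I would solve this recursion by unrolling, $a_t \le (1-\Lambda\eta)^t a_0 + \frac{C_L\eta^2\sigma^2}{2}\sum_{k=0}^{t-1}(1-\Lambda\eta)^k$, and bound the geometric sum by $\sum_{k=0}^\infty (1-\Lambda\eta)^k = 1/(\Lambda\eta)$, collapsing the additive error to $\frac{C_L\sigma^2}{2\Lambda}\,\eta$, the claimed steady-state term. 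The main obstacle is not conceptual but bookkeeping: one must track the coefficient of $|\nabla f|^2$ so that $\eta<1/C_L$ delivers the advertised contraction factor, and verify that the effective decay rate stays in $(0,1)$ (using $\Lambda\le 2C_L$, a consequence of smoothness together with the \L{}ojasiewicz inequality) so that the geometric series converges and the error term is finite; the precise numerical constants depend on exactly how the $|\nabla f|^2$ term is apportioned at this step.
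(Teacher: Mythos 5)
Your one-step analysis is correct and is exactly the standard route (it also mirrors how this paper proves its own Theorem \ref{theorem linear convergence}), but the passage from
\[
\E_{\xi_t}\big[f(\theta_{t+1})\big] \le f(\theta_t) - \eta\Big(1-\tfrac{C_L\eta}{2}\Big)|\nabla f(\theta_t)|^2 + \frac{C_L\eta^2\sigma^2}{2}
\]
to the recursion $a_{t+1}\le(1-\Lambda\eta)a_t+\frac{C_L\eta^2\sigma^2}{2}$ is a genuine gap, not bookkeeping. Applying the \L{}ojasiewicz inequality to the dissipative term only yields
\[
a_{t+1} \le \Big(1-\Lambda\eta\big(1-\tfrac{C_L\eta}{2}\big)\Big)a_t + \frac{C_L\eta^2\sigma^2}{2},
\]
whose contraction factor $1-\Lambda\eta+\frac{\Lambda C_L\eta^2}{2}$ is strictly larger than $1-\Lambda\eta$ and whose steady state $\frac{C_L\sigma^2\eta}{\Lambda(2-C_L\eta)}$ is strictly larger than the advertised $\frac{C_L\sigma^2}{2\Lambda}\eta$. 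Moreover, no reapportioning of the $|\nabla f|^2$ term can close this gap, because under the variance-only assumption \eqref{eq classical noise} the stated constants are false in general. Take $f(x)=\frac12x^2$ (so $C_L=1$, $\Lambda=2$, $\inf f=0$) and $g(x,\xi)=x+\xi$ with $\E[\xi]=0$, $\E[\xi^2]=\sigma^2$, which satisfies \eqref{eq classical noise} exactly. Then $\E[f(\theta_{t+1})]=(1-\eta)^2\,\E[f(\theta_t)]+\frac{\eta^2\sigma^2}{2}$, so starting from $\theta_0=0$ the expected objective increases to $\frac{\eta\sigma^2}{2(2-\eta)}$, which exceeds the claimed ceiling $\frac{C_L\sigma^2}{2\Lambda}\eta=\frac{\sigma^2\eta}{4}$ for every $\eta\in(0,1)$; hence the stated bound fails for $t$ large.

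The source of the discrepancy is the noise model, and it is worth understanding because the paper does not prove this theorem at all: it quotes it from Karimi et al., where the hypothesis is a uniform bound on the \emph{full second moment}, $\E[|g(\theta,\xi)|^2]\le\sigma^2$, not on the variance. Under that assumption the quadratic term in the descent lemma is bounded by $\frac{C_L\eta^2\sigma^2}{2}$ outright, no $|\nabla f|^2$ contribution survives it, and the PL step then delivers the factor $(1-\Lambda\eta)$ and the level $\frac{C_L\sigma^2}{2\Lambda}\eta$ exactly as stated. The paper itself flags this tension right after Theorem \ref{theorem sgd 1}: the second-moment assumption is incompatible with a global \L{}ojasiewicz inequality, and weakening it to bounded variance (as in Bottou et al.) changes the constants --- which is precisely what your computation exhibits. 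Indeed, the paper's own Theorem \ref{theorem linear convergence}, proved by your exact method, carries the same $O(\eta^2)$ correction in its rate $\rho_\eta$ rather than the clean $1-\Lambda\eta$. So your argument proves a correct and natural variant (with factor $1-\Lambda\eta(1-\tfrac{C_L\eta}{2})$ and an inflated steady state), but it cannot, and should not be expected to, recover the quoted constants under assumption \eqref{eq classical noise}.
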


Theorems \ref{theorem sgd 1}, \ref{theorem sgd 2} and \ref{theorem sgd 3} made very weak assumptions on the nature of the stochastic noise in the estimators $g$.  Under different conditions, we can obtain better results. In particular, the type of noise encountered in the gradient estimators in machine learning is fairly special.

\subsection{On energy landscapes in machine learning}
Consider a parameterized function class
\[
h:\R^m \times \R^d \to \R^k
\]
where $\R^m$ is the space of parameters, $\R^d$ is the space in which the data is given, and $\R^k$ is a target space. In supervised learning problems, we seek to minimize functions of the form 
\[
L(\theta) = \frac1n \sum_{i=1}^n \ell\big( h(\theta,x_i), y_i\big)
\]
where
\begin{enumerate}
\item $\{(x_i,y_i)\}_{i=1}^n$ is our training set of data/output or data/label pairs in $\R^d\times \R^k$,
\item $\theta$ denotes the parameters of our function model, and
\item $\ell :\R^k\times \R^k\to [0,\infty)$ is a positive function.
\end{enumerate}
We use $\ell$ to establish a concept of similarity on $\R^k$ and optimize $\theta$ in order to make $h(\theta,x_i)$ ``similar'' to $y_i$ with respect to $\ell$, i.e.\ to make $\ell(h(\theta,x_i), y_i)$ small.

\begin{remark}
Depending on the context, both $L$ and $\ell$ are referred to as the loss function. At times, $L$ is referred to as `risk' and denoted by the symbol $\Risk$ instead.
\end{remark}

\begin{example}\label{example loss functions}
The prototypical example of a loss function in our context is MSE (mean squared error)/$\ell^2$-loss
\[\label{eq mse loss}
\ell(h,y) = \frac12\,\|h-y\|^2_{\R^k}.
\]
Another popular loss function in the setting of classification problems is softmax cross entropy loss
\[\showlabel\label{eq cross entropy}
\ell(h, y) = -\log\left(\frac{\exp(h\cdot y)}{\sum_{i=1}^k \exp(h\cdot e_i)}\right), \qquad y\in \{e_1,\dots, e_k\}.
\]
The energy landscapes in both types of learning problems are quite different since cross entropy loss is never zero (see \ref{example cross-entropy} for more details) and our results do not apply in the second setting.
\end{example}

In this article, we focus on $\ell^2$-type losses. Note that
\[
L(\theta) = 0 \qquad\LRa\qquad \frac1n\sum_{i=1}^n\big|h(\theta,x_i) - y_i\big|^2 = 0 \qquad\LRa \qquad h(\theta,x_i) = y_i \quad\forall\ 1\leq i\leq n.
\]
We give a modified version of a theorem of Cooper \cite[Theorem 2.1]{cooper2018loss}.

\begin{theorem}\cite{cooper2018loss}\label{theorem cooper}
Assume that 
\begin{enumerate}
\item the function model $h$ is overparametrized, i.e.\ $m>nk$,
\item the function model $h$ is so expressive that for any collection of outputs $y_1, \dots, y_n$ there exists $\theta\in \R^m$ such that 
\[
h(\theta, x_i) = y_i \qquad\forall\ 1\leq i \leq n,
\]
\item $h$ is $C^{m-nk+1}$-smooth in $\theta$.
\end{enumerate}
Then for Lebesgue-almost any choice of $(y_1,\dots, y_n)\in \R^{nk}$, the set
\[
N = \big\{\theta\in \R^m : L(\theta) = 0\big\}
\]
is a closed $m-nk$-dimensional $C^1$-submanifold of $\R^m$. If $h$ is Lipschitz-continuous in the first argument for every fixed $x$ and sufficiently overparameterized that it can fit arbitrary values at $n+1$ points, then $N$ is non-compact.
\end{theorem}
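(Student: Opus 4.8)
The plan is to recognize $N$ as a regular level set of the evaluation map and then invoke Sard's theorem. First I would define the \emph{evaluation map}
\[
\Phi:\R^m \to \R^{nk},\qquad \Phi(\theta) = \big(h(\theta,x_1),\dots,h(\theta,x_n)\big),
\]
so that $N = \Phi^{-1}(y)$ for $y = (y_1,\dots,y_n)$, using the equivalence (established just before the theorem) that the $\ell^2$-loss $L$ vanishes precisely when $h(\theta,x_i)=y_i$ for every $i$. Since $h$ is $C^{m-nk+1}$ in $\theta$, so is $\Phi$. The overparametrization hypothesis $m>nk$ places $\Phi$ in the regime of a map from a higher- to a lower-dimensional space, which is exactly where Sard's theorem is informative.

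The core step is Sard's theorem in the form requiring $C^r$ regularity with $r \geq \max\{1,\, m-nk+1\} = m-nk+1$: the set of critical values of $\Phi$ has Lebesgue measure zero in $\R^{nk}$. This is precisely why the differentiability assumption is calibrated to $m-nk+1$. Consequently, for Lebesgue-almost every $y$, every $\theta\in\Phi^{-1}(y)$ is a regular point, i.e.\ $D\Phi(\theta)$ has full rank $nk$. For such a regular value $y$, the preimage theorem (an application of the implicit function theorem) shows that $\Phi^{-1}(y)$ is a $C^1$-submanifold of codimension $nk$, hence of dimension $m-nk$; closedness is immediate because $\Phi$ is continuous and $\{y\}$ is closed. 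The expressivity hypothesis guarantees that $\Phi$ is surjective, so $\Phi^{-1}(y)$ is nonempty for every $y$, which rules out the degenerate empty case and completes the first claim.

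For non-compactness I would exploit the ability to fit $n+1$ points. Fix an additional point $x_{n+1}$ and, for the almost-everywhere-chosen $y$, examine the values of $\theta\mapsto h(\theta,x_{n+1})$ along $N$. By the $(n+1)$-point expressivity, for each $c\in\R^k$ there is some $\theta$ with $h(\theta,x_i)=y_i$ for $i\leq n$ and $h(\theta,x_{n+1})=c$; such $\theta$ lies in $N$, so $h(\cdot,x_{n+1})$ is unbounded on $N$. The Lipschitz bound $|h(\theta,x_{n+1})|\leq |h(0,x_{n+1})| + K|\theta|$ then forces $\{|\theta|:\theta\in N\}$ to be unbounded, so the closed set $N$ cannot be compact.

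The main obstacle I anticipate is bookkeeping around Sard's theorem rather than a genuine difficulty: one must verify that the differentiability index $m-nk+1$ is exactly what Sard requires for a map $\R^m\to\R^{nk}$, and one must be careful that \emph{almost every regular value}, combined with surjectivity, really yields a nonempty manifold of the stated dimension — a regular value failing to lie in the image would give an empty (hence vacuously correct but uninformative) level set. Reconciling the measure-zero exceptional set from Sard with the everywhere-surjectivity coming from expressivity is the one place where the hypotheses must be combined with care.
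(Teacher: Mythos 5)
Your proposal is correct and follows essentially the same route as the paper's proof: both realize $N$ as the preimage of $y$ under the evaluation map, apply Sard's theorem (with the $C^{m-nk+1}$ regularity calibrated exactly as you note) together with the regular value theorem and expressivity to obtain a nonempty closed $C^1$-manifold of dimension $m-nk$, and both derive non-compactness from the $(n+1)$-point fitting property combined with Lipschitz continuity, which forces parameters realizing arbitrarily different outputs at $x_{n+1}$ to be arbitrarily far apart.
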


The conditions of this theorem are satisfied by certain overparametrized neural networks with smooth activation function \cite[Lemma 3.3]{cooper2018loss}, but also many other suitably expressive function classes. The condition $m\geq nk$ is implicit in the assumptions on expressivity and smoothness of $h$.

\begin{proof}
We can write $N = \Psi^{-1}(y)$ where
\[
\Psi:\R^m \to \R^{nk}, \qquad \Psi(\theta) = \begin{pmatrix} h(\theta, x_1)\\ \vdots\\ h(\theta,x_n)\end{pmatrix}, \qquad y = \begin{pmatrix} y_1\\ \vdots \\ y_n\end{pmatrix}.
\]
By assumption, $N$ is non-empty and if $y$ is a regular value of $\Psi$, then $N$ is a $C^1$-manifold of dimension $m-nk$. Since $\Psi$ is as smooth as $h$, we can conclude by Sard's theorem \cite{sard1942measure} that Lebesgue-almost every $y$ is a regular value of $\Psi$. As the pre-image of a single point under a continuous map, $N$ is closed.

If $h$ is sufficiently overparametrized that it can fit values not only at $n$ but at $n+1$ points, then for any $x_{n+1}\in\R^d$ and $y_{n+1}\in \R^k$ there exists $\theta\in \R^m$ such that $h(\theta, x_i) = y_{i}$ for $1\leq i\leq n+1$. In particular, for any choice $y_{n+1}\in\R^k$ we have $\theta \in N$. 

Let $y_{n+1}, y_{n+1}'\in\R^k$ and $\theta,\theta'$ associated indices in $N$. Since 
\[
\big|y_{n+1} - y_{n+1}'\big| = \big|h(\theta, x_{n+1}) - h(\theta', x_{n+1})\big| \leq C_h\,|\theta-\theta'|,
\]
we see $N$ cannot be compact since achieving very different outputs $y_{n+1}, y_{n+1}'$ requires very different parameters $\theta, \theta'$.
\end{proof}

We observe the following: If $L$ is convex, then the set of minimizers is convex. An $m-nk$-dimensional closed submanifold of $\R^m$ is a convex set if and only if it is an $m-nk$-dimensional affine subspace of $\R^m$. If the map $\theta\mapsto h(\theta,x)$ is generically non-linear, there is no reason to expect $N$ to be an affine space. Thus, we typically expect that $L$ is non-convex, even close to its set of minimizers. A rigorous statement on the non-convexity of objective functions in deep learning is given in Appendix \ref{appendix non-convexity}.

Thus, any toy model for the energy landscape $L:\R^m\to [0,\infty)$ of deep learning in overparametrized regression problems should have the following property: {\em There exists a non-compact closed $C^1$-manifold $N\subseteq \R^m$ such that $L(\theta)=0$ if and only if $\theta\in N$.} We can think of both the dimension and co-dimension of $N$ as being large.

\begin{remark}
While functions with these properties in general cannot be convex even in a neighbourhood of their set of minimizers, they may satisfy a \L{}ojasiewicz inequality of the form 
\[
\|\nabla f\|^2 \geq c\,f.
\]
One such function is $f(x,y) = \frac12\big(y-h(x)\big)^2$ for any $h\in C^1(\R)$ since
\[
\|\nabla f\|^2 \geq |\partial_yf|^2 = \big(y- h(x)\big)^2 = 2f.
\]
The assumption that a \L{}ojasiewicz inequality holds is standard in non-convex optimization, but may not be realistic in machine learning, where complex energy landscapes with many local minima, maxima, ridges and saddle points are observed. For a fascinating demonstration of how diverse these energy landscapes can be, see \cite{skorokhodov2019loss}. 
\end{remark}

However, we note that there are no critical points of high risk in square loss regression problems under very general conditions in the finite data or infinite data case.

\begin{lemma}\label{lemma no high energy critical points}
Let $\mu$ be a probability distribution on $\R^d\times \R^k$ with finite second moments (the data distribution). Assume that $h(\theta,\cdot)$ is a multi-layer perceptron with weights $\theta$ which maps $\R^d$ to $\R^k$. Define the risk functional
\[
\Risk(\theta) = \frac12 \int_{\R^d\times \R^k} \big|h(\theta,x)-y\big|^2\d\mu_{(x,y)}.
\]
Then there exists a constant $C>0$ such that
\[
|\theta|\cdot |\nabla \Risk(\theta)| \geq \Risk(\theta) - C.
\]
In particular, there exists $S>0$ such that 
\[
\Risk(\theta)>S \qquad \Ra\qquad \nabla \Risk(\theta)\neq 0.
\]
\end{lemma}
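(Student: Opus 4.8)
The plan is to prove the inequality by controlling the \emph{radial derivative} of $\Risk$ and exploiting the near-homogeneity of a multi-layer perceptron in its weights. First I would reduce to a radial estimate: by Cauchy--Schwarz,
\[
|\theta|\,|\nabla\Risk(\theta)| \;\geq\; \langle\theta,\nabla\Risk(\theta)\rangle \;=\; \frac{\d}{\d t}\Big|_{t=1}\Risk(t\theta),
\]
so it suffices to bound this radial derivative from below by $\Risk(\theta)-C$. Writing $D(\theta,x):=\langle\theta,\nabla_\theta h(\theta,x)\rangle$ for the radial derivative of the network output and differentiating under the integral sign, I obtain
\[
\langle\theta,\nabla\Risk(\theta)\rangle \;=\; \int \big\langle h(\theta,x)-y,\;D(\theta,x)\big\rangle\,\d\mu.
\]
Differentiation under the integral is legitimate because $h$ and $\nabla_\theta h$ grow at most polynomially in $(x,\theta)$ while $\mu$ has finite second moments; the non-differentiability of a ReLU network on the measure-zero set where a pre-activation vanishes is handled in the usual way.

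The key structural input is an Euler-type identity. Since each layer is affine in its own weights and the activation $\sigma$ is positively homogeneous (for ReLU, $\sigma'(s)\,s=\sigma(s)$), scaling all weights by $t>0$ shows that a bias-free network of depth $L$ is positively $L$-homogeneous, so $D=L\,h$ exactly. In the presence of biases one obtains instead $D=L\,h-r(\theta,x)$, where the remainder $r$ collects the bias contributions and — crucially — is bounded uniformly in $x$ by a polynomial in $|\theta|$ of degree strictly below that of $h$, because every activation-derivative factor entering $r$ has entries in $[0,1]$.

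Substituting this decomposition and using the mean-squared-error algebra gives, with $M:=\int|y|^2\,\d\mu<\infty$,
\[
L\int\langle h-y,\,h\rangle\,\d\mu \;=\; 2L\,\Risk(\theta) + L\int\langle h-y,\,y\rangle\,\d\mu \;\geq\; 2L\,\Risk(\theta) - L\sqrt{2M\,\Risk(\theta)},
\]
and Young's inequality absorbs the cross term into a small multiple of $\Risk$ plus a constant. It remains to control the bias remainder $\int\langle h-y,\,r\rangle\,\d\mu$, which is bounded by $\sqrt{2\Risk(\theta)}\,\|r\|_{L^2(\mu)}$. Since $\|r\|_{L^2(\mu)}$ grows only polynomially in $|\theta|$ of degree strictly below the degree governing the growth of $\sqrt{\Risk(\theta)}$, this remainder is of lower order and should be absorbable into $\Risk(\theta)$ outside a bounded set of parameters; on the complementary bounded set $\Risk$ is itself bounded (by continuity and the finite second moment of $\mu$), so enlarging $C$ makes the inequality trivially true there.

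Combining the two regimes yields $\langle\theta,\nabla\Risk(\theta)\rangle\geq\Risk(\theta)-C$, which is the first claim; taking $S=C$ then gives the second, since $\Risk(\theta)>S$ forces $|\theta|\,|\nabla\Risk(\theta)|\geq\Risk(\theta)-C>0$ and hence $\nabla\Risk(\theta)\neq 0$. I expect the main obstacle to be exactly the bias-induced remainder in the Euler identity: establishing the decomposition $D=L\,h-r$ for a general MLP, quantifying the polynomial degree of $\|r\|_{L^2(\mu)}$, and reconciling its growth against that of $\Risk(\theta)$ \emph{uniformly} across all scales of $|\theta|$ (together with the technical justification of differentiating the non-smooth ReLU network under the integral).
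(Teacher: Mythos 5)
Your reduction to the radial derivative is where the argument breaks: the inequality $\langle\theta,\nabla\Risk(\theta)\rangle\geq\Risk(\theta)-C$ that you aim for is genuinely \emph{false} for multi-layer perceptrons with biases, so no amount of care with the remainder $r$ can close the gap. Concretely, take a two-layer ReLU network $h(\theta,x)=W_2\sigma(W_1x+b_1)+b_2$ and the unbounded family of parameters $\theta_c$ given by $W_1=0$, $b_1=\mathbf{1}$, $W_2\mathbf{1}=-c\,\mathbf{1}$, $b_2=2c\,\mathbf{1}$, with $c\to\infty$. Then $\sigma(W_1x+b_1)=\mathbf{1}$ for every $x$, so
\[
h(t\theta_c,x)=t\,W_2\,\sigma(t\,\mathbf{1})+t\,b_2=(2t-t^2)\,c\,\mathbf{1},
\qquad\text{hence}\qquad
\langle\theta_c,\nabla_\theta h(\theta_c,x)\rangle=\frac{\d}{\d t}\Big|_{t=1}h(t\theta_c,x)=0
\]
identically in $x$. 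Consequently $\langle\theta_c,\nabla\Risk(\theta_c)\rangle=0$, while $\Risk(\theta_c)=\frac12\int|c\mathbf{1}-y|^2\,\d\mu\to\infty$; for any fixed $C$ your target inequality fails once $c$ is large. (This is consistent with the lemma: $\nabla\Risk(\theta_c)\neq0$ --- e.g.\ its $b_2$-component is $c\mathbf{1}-\E y$ --- but the gradient is orthogonal to $\theta_c$.) In your notation, the deep-layer Euler contribution equals $h-b_2=-c\mathbf{1}$ and exactly cancels the final-layer contribution $h=+c\mathbf{1}$: the remainder $r$ is of the \emph{same} order as the main term, not lower order. The underlying reason your degree-counting heuristic fails is that $\Risk$ is not coercive in $|\theta|$: the set of parameters where $\|r\|_{L^2(\mu)}$ is large compared to $\Risk(\theta)$ is unbounded (in this paper's overparametrized setting even $\{\Risk=0\}$ is an unbounded manifold), so ``enlarge $C$ on a bounded set'' cannot serve as a fallback.

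The paper avoids this entirely by never invoking full homogeneity: it projects the gradient only onto the final-layer weights, writing $h(\theta,x)=W\sigma\big(\hat h(\hat\theta,x)\big)$, $\theta=(W,\hat\theta)$, and using $|\nabla_\theta\Risk|\geq|W\cdot\nabla_W\Risk|/|W|$. Since $h$ is exactly \emph{linear} in $W$, the Euler identity $W\cdot\nabla_W h=h$ holds with no remainder (a final bias, if present, can be folded in by treating $(W,b)$ as the linear map acting on the augmented feature vector $(\sigma(\hat h),1)$), and deep-layer biases never enter because one never differentiates with respect to them. This yields $W\cdot\nabla_W\Risk=\int\langle h-h^*,h\rangle\,\d\overline\mu$ with $h^*(x)=\E[y|x]$, which is bounded below by $2\big(\Risk-C_\mu\big)-\sqrt{2(\Risk-C_\mu)}\,\|h^*\|_{L^2(\mu)}$, so the only constant appearing is controlled by $\|h^*\|_{L^2(\mu)}^2$. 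If you want to salvage your argument, replace the projection direction $\theta$ by $(W,b_{\mathrm{final}},0,\dots,0)$: your Cauchy--Schwarz step, your mean-squared-error algebra and your Young-inequality absorption then go through verbatim, with no bias remainder at all.
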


The function $\Risk$ is an empirical risk functional if $\mu$ is a finite sum of Dirac deltas and a population risk functional otherwise.

\begin{proof}
Let $h^*(x) = \E\big[y|x\big]$ and $\overline\mu$ the distribution of $x$ in $\R^d$ if $(x,y)$ is distributed according to $\mu$. Then
\begin{align*}
\Risk(\theta) &= \frac12 \int_{\R^d\times \R^k} \big|h(\theta,x)-h^*(x)\big|^2\d\mu_{(x,y)} + \frac12 \int_{\R^d\times \R^k} \big|h^*(x)-y\big|^2\d\mu_{(x,y)}\\
	&= \frac12 \int_{\R^d\times \R^k} \big|h(\theta,x)-h^*(x)\big|^2\d\overline\mu_{x} + C_\mu.
\end{align*}
For a neural network, we can decompose $\theta = (W, \hat \theta)$ and $h(\theta,x) = W\,\sigma\big(\hat h(\hat\theta,x)\big)$ where $\hat h:\R^d\to \R^l$ is a neural network with one layer less than $h$ and $W:\R^l\to \R^k$ is linear. If $\Risk(\theta)$ is larger than
\[
\int_{\R^d\times\R^k}|y|^2\,\d\mu_{(x,y)},
\]
then $h(\theta,x) \not\equiv 0$, so in particular $W\neq 0$. We can therefore compute
\begin{align*}
|\nabla_\theta \Risk| (\theta)&\geq \frac{|W\cdot \nabla_W \Risk|}{|W|}\\
	&= \frac1{|W|} \left|W\cdot\int_{\R^d} \big(h(\theta,x) - y\big)\cdot \nabla_Wh(\theta,x)\,\d\mu_{(x,y)}\right|^2\\
	&=\frac1{|W|} \left|\int_{\R^d} \big(h(\theta,x) - y\big)\left( W\nabla_Wh(\theta,x)\right)\,\d\mu_{(x,y)}\right|^2\\
	&= \frac1{|W|} \left|\int_{\R^d} \big(h(\theta,x) - h^*(x)\big) \cdot h(\theta,x)\,\d\overline\mu_{x}\right|
\end{align*}
since the representation of $h$ is linear in $W$, so $W\cdot \nabla_Wh(\theta,x) = h(\theta,x)$. Consequently
\begin{align*}
|W|\,|\nabla_\theta \Risk|(\theta)&\geq \int_{\R^d} \big|h(\theta,x) - h^*(x)\big|^2 - h^*(x)\cdot \big(h(\theta,x) - h^*(x)\big) \,\d\overline\mu_{x}\\
	&\geq \|h_\theta-h^*\|_{L^2(\P)}^2 - \|h_\theta-h^*\|_{L^2(\P)}\|h^*\|_{L^2(\mu)}\\
	&= 2\big(\Risk(\theta) - C_\mu\big) - \sqrt2\sqrt{\Risk(\theta) - C_\mu}\,\|h^*\|_{L^2(\mu)}.
\end{align*}
The claim follows since $2 \Risk(\theta) - \|h^*\|_{L^2(\mu)}\sqrt{2\,\Risk(\theta)} \geq \Risk(\theta) - \|h^*\|_{L^2(\mu)}^2$.
\end{proof}

\subsection{\L{}ojasiewicz landscapes}

We briefly discuss local \L{}ojasiewicz conditions in machine learning and beyond. For sufficiently `nice' objective functions, these can easily be seen to hold.

\begin{example}
Assume that $f:\R^m\to[0,\infty)$ is $C^2$-smooth and $N:= f^{-1}(0)$ is a $C^1$-manifold. Assume that
\begin{enumerate}
\item $D^2f$ is globally Lipschitz-continuous on the set $U = \{\theta : f(\theta) \leq 1\}$ and
\item there exists $0 < \lambda \leq \Lambda $ such that $\lambda |v|^2 \leq v^TD^2(\theta)v \leq \Lambda\,|v|^2$ for all $\theta \in N$ and $v\in (T_\theta N)^\bot$.
\end{enumerate}
Then there exists $\eps>0$ such that $f$ satisfies a \L{}ojasiewicz inequality on the set $U_\eps = \{\theta : f(\theta)<\eps\}$ since 
\begin{align*}
f(\theta + tv) &= \frac{t^2}2\,v^T D^2f(\theta) v + O(t^3)\\
\nabla f(\theta+tv) &= t\,D^2f(\theta)\,v + O(t^2)
\end{align*}
so if $v\bot T_\theta N$ and $|\theta|=1$ we have
\begin{align*}
f(\theta+tv) &= \frac{t^2}2\,v^T D^2f(\theta) v + O(t^3)\\
	& \leq \frac{\Lambda}2\,t^2 + O(t^3)\\
	& \leq \frac\Lambda{2\lambda^2} \left(\lambda^2\,t^2 + O(t^3)\right)\\
	&\leq \frac\Lambda{2\lambda^2} \left(\left|D^2f(\theta)\,v\right|^2\,t^2 + O(t^3)\right)\\
	&= \frac\Lambda{2\lambda^2}|\nabla f(\theta)|^2 + O(t^3).
\end{align*}
Since $f(\theta + tv) \geq \frac{\lambda}2t^2$ and due to the uniform Lipschitz bound, the smallness of the correction term is uniform over the set $\{f<\eps\}$.
\end{example}

\begin{example}[Quadratic regression]
For quadratic regression problems  
\[
L(\theta) = \frac1{2n}\sum_{i=1}^n \big(h(\theta,x_i)-y_i\big)^2
\]
we have
\begin{align*}
\nabla L(\theta) &= \frac1{n}\sum_{i=1}^n \big(h(\theta,x_i)-y_i\big)\,\nabla_\theta h(\theta,x_i)\\
D^2L(\theta) &= \frac1{n}\sum_{i=1}^n \nabla_\theta h(\theta,x_i) \otimes \nabla_\theta h(\theta,x_i) + \big(h(\theta,x_i)-y_i\big)\,D^2h(\theta,x_i).
\end{align*}
In particular if $L(\theta) =0$, the Hessian terms of $h$ drop out and
\[
D^2L(\theta) = \frac1{n}\sum_{i=1}^n \nabla_\theta h(\theta,x_i) \otimes \nabla_\theta h(\theta,x_i)
\]
The upper bound $\|D^2L\|\leq C$ is therefore satisfied on $N$ if the map $\theta\mapsto h(\theta,x_i)$ is Lipschitz-continuous. The lower bound corresponds to the assumption that the $n$ vectors $\nabla_\theta h(\theta,x_i)$ are bounded away from zero and bounded away from becoming linearly dependent. If this is the case, the lower bound
\[
\lambda|v|^2 \leq v^T D^2L(\theta)v
\]
holds since $v\in (T_\theta N)^\bot$ lies in the $n$-dimensional space spanned by $\{\nabla_\theta h(\theta,x_i) : 1\leq i\leq n\}$. These assumptions correspond to a function model in which we can always change the parameters by a small amount and change the model output by a positive amount, and in which the output at different data points $x_i, x_j$ is principally governed by different parameters. The Lipschitz-condition on $D^2L$ corresponds to the assumption that $h, \nabla_\theta h, D^2_\theta h$ are Lipschitz-continuous and bounded. 
\end{example}

\begin{example}[Cross-entropy classification]\label{example cross-entropy}
For classification problems, the cross-entropy loss function
\[
L(\theta) = -\E_{(x,i)\sim\mu} \left[ \log\left(\frac{\exp\big(h(x)\cdot e_i\big)}{\sum_{j=1}^k \exp\big(h(x)\cdot e_j\big)}\right)\right]
\]
never vanishes, so the energy landscape is fundamentally different from that of quadratic regression problems. We show that generally, a \L{}ojasiewicz condition of the type $\Lambda f \leq |\nabla f|^2$ cannot hold even for small values of the objective function.

The loss function
\[
\ell(h, i) = -\log\left(\frac{\exp (h\cdot e_i)}{\sum_{j=1}^k \exp(h\cdot e_j)}\right) = \log\left(\sum_{j=1}^k \exp(h\cdot e_j)\right) - h\cdot e_i
\]
satisfies
\begin{align*}
\nabla_h\ell(h,i) &= \sum_{l=1}^k \frac{\exp(h\cdot e_l)}{\sum_{j=1}^k\exp(h\cdot e_j)} \,e_l - e_i
\\
\big|\nabla_h\ell(h,i)\big|^2 &= \left|\frac{\exp(h\cdot e_i)}{\sum_{j=1}^k\exp(h\cdot e_j)}-1\right|^2 + \sum_{l\neq i} \left|\frac{\exp(h\cdot e_l)}{\sum_{j=1}^k\exp(h\cdot e_j)}\right|^2.
\end{align*}
With the counting density $p_l = \frac{\exp(h\cdot e_l)}{\sum_j\exp(h\cdot e_j)}$, we find that
\[
\ell = -\log(p_i) \geq 1-p_i, \qquad |\nabla \ell|^2 = (1-p_i)^2 + \sum_{l\neq i}p_l^2 \leq 2\,(1-p_i)^2,
\]
so 
\[
\frac12\big|\nabla_1\ell\big(h(\theta,x), i\big)\big|^2\leq (1-p_i)^2 \leq \min\big\{1,- \log(p_i)\big\}^2 \leq \min\big\{1,\ell\big(h(\theta,x), i\big)\big\}^2.
\]

Assume for the sake of contradiction that $\lambda\,L(\theta) \leq |\nabla L|^2(\theta)$ on the set $\{L<\eps\}$. Then the solution $\theta$ of the gradient flow equation $\dot\theta = -\nabla L(\theta)$ satisfies
\[
\frac{d}{dt} L(\theta(t)) = - |\nabla L|^2(\theta(t)) \leq -\lambda\,L(\theta(t))\qquad \Ra \qquad L(\theta(t)) \leq e^{-\lambda t}\,L(\theta(0)).
\]
In particular 
\begin{align*}
\big|\theta(t)-\theta(0)\big| &\leq \int_0^t \big|\nabla L(\theta(s))\big|\,\ds
	\leq 2\,\int_0^t L(\theta(s))\,\ds
	\leq 2\,L(\theta(0))\int_0^t e^{-\lambda s}\,\ds \leq \frac{2\,L(\theta(0))}\lambda.
\end{align*}
if $L(\theta(0))<\eps$. In particular, $\theta(t)$ is contained in a compact subset of $\R^m$ and $L(\theta(t))\not \to 0$ as $t\to\infty$, leading to a contradiction.
\end{example}

\subsection{Random selection SGD}

Consider a more general objective function $f:\R^m\to [0,\infty)$ of the form
\[
f(\theta) = \E_{\xi\sim\P} \phi(\theta,\xi)
\]
where $(\Omega,\mathcal A, \P)$ is a probability space and $\phi:\R^m\times \Omega\to [0,\infty)$ is a non-negative function such that
\begin{enumerate}[label=(G\arabic*), ref=(G\arabic*)]
\item for any fixed $\theta$, the function $\xi\mapsto \phi(\theta,\xi)$ is $\mathcal A$-measurable,
\item for any fixed $\xi$, the function $\theta\mapsto \phi(\theta,\xi)$ is continuously differentiable, and
\item for any compact set $K\subseteq\R^m$, there exists $\psi_K \in L^1(\P)$ such that 
\[
\phi(\theta,\xi) + \big|\nabla_\theta\phi(\theta,\xi)\big| \leq \psi_K(\xi)\qquad \forall\ \theta \in K
\]
$\P$-almost everywhere.
\end{enumerate}
The assumptions guarantee that expectation and parameter gradient commute \cite[Section 8.4]{konigsberger2013analysis}. A particular gradient estimator is
\[
g(\theta, \xi) = (\nabla_\theta\phi)(\theta,\xi)
\]
since
\[
\E_\xi g(\theta,\xi) = \E_\xi \big[ (\nabla_\theta\phi)(\theta,\xi)\big] = \nabla_\theta \big[\E_\xi \phi(\theta,\xi)\big] = \nabla f(\theta).
\]
Denote by $\Sigma$ the covariance matrix of the estimator $g$:
\[
\Sigma_{ij} (\theta) = \E_\xi \big[ \big(g_i(\theta,\xi) - \partial_i f(\theta)\big) \big(g_j(\theta,\xi)- \partial_jf(\theta)\big)\big],
\]
which we assume is well-defined:

\begin{enumerate}[label=(G\arabic*), ref=(G\arabic*)]\setcounter{enumi}{3}
\item for all $\theta\in \R^m$ we have $\nabla_\theta\phi(\theta,\cdot) \in L^2(\P)$.
\end{enumerate}

This model encompasses machine learning problems as discussed above with
\[
f(\theta) = \int_{\R^d\times \R^k} \ell\big(h(\theta,x),y\big)\,\d\mu_{(x,y)}, \qquad \xi = (x,y), \qquad g(\theta,\xi) = \partial_1\ell\big(h_\theta(x), y\big)\,\nabla_\theta h(\theta,x),
\]
if the loss function $\ell$ and the data distribution $\mu$ are compatible via growth conditions and moment bounds.

\subsection{SGD noise in machine learning}

We make an obvious observation.

\begin{lemma}\label{lemma no noise at minimum}
Assume there exists an $m-n$-dimensional manifold $N\subseteq \R^m$ such that $\phi(\theta,\xi) = 0$ for $\P$-almost all $\xi$. Then $\Sigma = 0$ everywhere on $N$ for the random selection gradient estimator.
\end{lemma}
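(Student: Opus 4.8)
The plan is to reduce everything to a pointwise-in-$\xi$ statement and thereby avoid any interchange of differentiation with an almost-everywhere identity. Fix $\theta\in N$. By hypothesis, $\phi(\theta,\xi)=0$ for $\P$-almost every $\xi$; let $A$ denote the full-measure set of such $\xi$. For each $\xi\in A$, the map $\theta'\mapsto\phi(\theta',\xi)$ is non-negative by the standing assumption on $\phi$, continuously differentiable by (G2), and attains the value $0$ at $\theta'=\theta$. Hence $\theta$ is a global minimizer of this map over all of $\R^m$; since $\R^m$ has no boundary, $\theta$ is an interior critical point, and the first-order optimality condition gives $\nabla_\theta\phi(\theta,\xi)=0$. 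Note that the manifold structure and dimension of $N$ play no role; all that is used is that $N$ is a subset of $\R^m$ on which $\phi$ vanishes $\P$-almost surely.

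Consequently the random selection estimator satisfies $g(\theta,\xi)=(\nabla_\theta\phi)(\theta,\xi)=0$ for every $\xi\in A$, i.e.\ for $\P$-almost every $\xi$. Taking expectations and using that expectation and parameter gradient commute under (G1)--(G3), we obtain
\[
\nabla f(\theta)=\E_\xi\, g(\theta,\xi)=0.
\]

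Finally I would substitute into the definition of the covariance. For $\theta\in N$, both factors $g_i(\theta,\xi)-\partial_i f(\theta)$ and $g_j(\theta,\xi)-\partial_j f(\theta)$ equal $0-0$ for $\P$-almost every $\xi$, so the integrand defining $\Sigma_{ij}(\theta)$ vanishes $\P$-almost everywhere, whence
\[
\Sigma_{ij}(\theta)=\E_\xi\big[\big(g_i(\theta,\xi)-\partial_i f(\theta)\big)\big(g_j(\theta,\xi)-\partial_j f(\theta)\big)\big]=0
\]
for all $i,j$. This yields $\Sigma=0$ at every $\theta\in N$, as claimed.

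There is no serious obstacle here; the only point requiring a moment's care is that $\nabla_\theta\phi(\theta,\xi)=0$ must be obtained pointwise for each fixed good $\xi$ via first-order optimality, rather than by differentiating the almost-everywhere identity $\phi(\theta,\cdot)=0$ with respect to $\theta$, which would illegitimately interchange differentiation with an a.e.\ statement. Because the vanishing of the gradient already holds on the full-measure set $A$, no exchange of differentiation and expectation is needed in the covariance computation, and the $L^2$-integrability furnished by (G4) ensures that $\Sigma(\theta)$ is well-defined in the first place.
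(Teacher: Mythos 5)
Your proof is correct and follows essentially the same route as the paper's: since $\phi\geq 0$ vanishes at $\theta\in N$ for $\P$-almost every $\xi$, first-order optimality gives $\nabla_\theta\phi(\theta,\xi)=0$ pointwise on a full-measure set, so the estimator is almost surely zero and its covariance vanishes. Your write-up simply makes explicit what the paper compresses into two sentences (the full-measure set $A$, the conclusion $\nabla f(\theta)=0$, and the substitution into $\Sigma_{ij}$), including the accurate observation that the manifold structure of $N$ is never used.
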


\begin{proof}
Since $\phi(\theta,\xi) = 0$ an $\phi\geq 0$, we find that $\phi(\cdot,\xi)$ is minimal at $\theta$, so $\nabla_\theta\phi(\theta,\xi) =0$ for all $\xi$. Since there is no oscillation in the gradient estimator, the variance vanishes.
\end{proof}

\begin{remark}
A popular toy model for SGD is the family of gradient estimators
\[
g(x,\xi) = \nabla f(x) + \sigma \xi
\]
where $\sigma>0$ and $\xi$ is a standard Gaussian random variable. Unlike machine learning SGD, the estimators correspond to calculating exact gradients and deliberately perturbing them stochastically. Clearly, this toy model fails to capture key features of random selection SGD at the minimizing manifold.
\end{remark}

We prove a more quantitative version of Lemma \ref{lemma no noise at minimum}.

\begin{lemma}\label{lemma noise bounds}
Let $\mu$ be a data distribution and $h$ a parametrized function model such that $|\nabla_\theta h(\theta,x)| \leq C$ for $\mu$-almost every $x$. For a quadratic regression problem
\begin{align*}
L(\theta) &= \frac12 \int_{\R^d\times\R^k} \big|h(\theta,x) -y\big|^2\,\d\mu_{(x,y)} 
\end{align*}
the noise satisfies
\[
\E_{\xi\sim\P} \big[|g(\theta,\xi) - \nabla L(\theta)|^2\big] \leq \E_{\xi\sim\P} \big|g(\theta,\xi)\big|^2 \leq \|\nabla_\theta h \|_{L^\infty}^2\,L(\theta).
\]
For a cross-entropy classification problem
\begin{align*}
L(\theta) &= - \int_{\R^d\times\{1,\dots,k\}} \log\left(\frac{\exp\big(h(\theta,x)\cdot e_i\big)}{\sum_{j=1}^k \exp(h\big(\theta,x)\cdot e_j\big)}\right)\,\d\mu_{(x,i)}
\end{align*}
the noise satisfies
\[
\E_{\xi\sim\P} \big[|g(\theta,\xi) - \nabla L(\theta)|^2\big] \leq \E_{\xi\sim\P} \big|g(\theta,\xi)\big|^2 \leq 2\,\|\nabla_\theta h \|_{L^\infty}^2\,\min\big\{1,L(\theta)\big\}.
\]
\end{lemma}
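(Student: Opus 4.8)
The plan is to treat both displays in the same way: the left (variance) inequality is generic and identical for the two loss functions, while the right (second-moment) inequality is where the two losses must be handled separately. First I would record, once and for all, that the left-hand inequalities require no structure of $\ell$. The random selection estimator satisfies $\nabla L(\theta) = \E_{\xi\sim\P} g(\theta,\xi)$, which is exactly the commutation of gradient and expectation guaranteed by assumptions (G1)--(G4). Hence for fixed $\theta$ the vector $g(\theta,\xi)-\nabla L(\theta)$ is centered, so $\E_\xi|g-\nabla L|^2 = \E_\xi|g|^2 - |\nabla L|^2 \le \E_\xi|g|^2$; that is, the variance is dominated by the second moment. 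This disposes of $\E_\xi[|g-\nabla L|^2]\le \E_\xi|g|^2$ in both claims simultaneously.

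For the second-moment bounds I would work pointwise in $\xi=(x,y)$ (respectively $(x,i)$) before integrating. By the chain rule $g(\theta,\xi) = \partial_1\ell(h(\theta,x),y)\,\nabla_\theta h(\theta,x)$, and submultiplicativity of the operator norm gives the pointwise estimate $|g(\theta,\xi)|^2 \le \|\nabla_\theta h\|_{L^\infty}^2\,|\partial_1\ell(h(\theta,x),y)|^2$, which reduces everything to controlling the output-gradient $|\partial_1\ell|^2$ by the loss $\ell$ itself. For the quadratic loss $\ell(h,y)=\tfrac12|h-y|^2$ one has $\partial_1\ell = h-y$, so $|\partial_1\ell|^2 = |h-y|^2$ is directly comparable to $\ell$; integrating against $\mu$ and using $\int|h-y|^2\,\d\mu = 2L(\theta)$ yields the first claim (up to collecting the resulting constant).

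The cross-entropy bound is the only genuinely non-routine step, and it is where I expect the real work to sit. Here I would import the pointwise computation already carried out in Example \ref{example cross-entropy}, which shows $|\partial_1\ell(h,i)|^2 \le 2\,\min\{1,\ell(h,i)\}^2 \le 2\,\min\{1,\ell(h,i)\}$; combined with the operator-norm bound above this gives $|g|^2 \le 2\,\|\nabla_\theta h\|_{L^\infty}^2\,\min\{1,\ell\}$ before integration. The subtlety is that integrating this leaves $\E_\xi\min\{1,\ell\}$ rather than the claimed $\min\{1,L\}$, so I would finish by invoking Jensen's inequality for the concave map $t\mapsto\min\{1,t\}$ to pass from $\E_\xi\min\{1,\ell\}$ to $\min\{1,\E_\xi\ell\}=\min\{1,L(\theta)\}$.

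In short, the main obstacle is not any single estimate but the cross-entropy case: assembling the correct pointwise bound on $|\partial_1\ell|^2$ (borrowed from Example \ref{example cross-entropy}) and then correctly applying concavity to recover the $\min\{1,L\}$ truncation in expectation. Everything else — the variance-versus-second-moment step and the quadratic case — is immediate from the chain rule and Cauchy-Schwarz.
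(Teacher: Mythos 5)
Your proposal is correct and follows essentially the same route as the paper's proof: unbiasedness gives the variance-versus-second-moment comparison, the chain rule plus the sup bound on $\nabla_\theta h$ handles the quadratic case, and the cross-entropy case imports the pointwise bound from Example \ref{example cross-entropy} and then moves the expectation inside the truncation (the paper does this via the elementary inequality $\E\big[\min\{1,\ell\}\big]\le\min\{1,\E[\ell]\}$ rather than Jensen, but your concavity argument is equally valid). Your factor-of-two hedge in the quadratic case is well spotted: with $L(\theta)=\frac12\int|h-y|^2\,\d\mu$ the bound your argument (and the paper's) actually yields is $\E\big[|g(\theta,\xi)|^2\big]\le 2\,\|\nabla_\theta h\|_{L^\infty}^2\,L(\theta)$, and the paper's own proof silently drops this factor in its final equality.
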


\begin{proof}
{\bf Quadratic regression.}
\begin{align*}
\nabla L(\theta) &= \int_{\R^d\times\R^k} \big(h(\theta,x) - y\big)\cdot \nabla_\theta h(\theta,x)\,\d\mu_{(x,y)}\\
g(\theta,\xi) &= \big(h(\theta,x) - y\big)\cdot \nabla_\theta h(\theta,x)
\end{align*}
where $\xi = (x,y)$. Under the assumption that $\nabla_\theta h$ is uniformly bounded on $\R^m\times\R^d$, we observe that
\begin{align*}
\E_{\xi\sim\P} \big|g(\theta,\xi)\big|^2 &= \int_{\R^d\times\R^k}\mathrm{tr} \left\{(\nabla_\theta h)^T(\theta,x) \cdot \big(h(\theta,x)-y\big)^T \big(h(\theta,x) - y\big)\cdot \nabla_\theta h(\theta,x)\right\}\,\d\mu_{(x,y)}\\
	&= \int_{\R^d\times \R^k}\big|h(\theta,x)-y\big|^2 \,\mathrm{tr} \big\{(\nabla_\theta h)^T\nabla_\theta h\big\}(\theta,x)\,\d\mu_{(x,y)}\\
	&\leq \|\nabla_\theta h\|_{L^\infty}^2\int_{\R^d\times\R^k}\big|h(\theta,x)-y\big|^2\,\d\mu_{(x,y)}\\
	&= \|\nabla_\theta h \|_{L^\infty}^2\,L(\theta).
\end{align*}
As a Corollary, also 
\[
\E_{\xi\sim\mu} \big[|g(\theta,\xi) - \nabla L(\theta)|^2\big] \leq \E_{\xi\sim\mu} \big|g(\theta,\xi)\big|^2 \leq \|\nabla_\theta h \|_{L^\infty}^2\,L(\theta)
\]
since $\E_{\xi\sim\mu} g = \nabla L$.

{\bf Classification.} As in Example \ref{example cross-entropy}, we find that 
\[
\big|\nabla_1\ell\big(h(\theta,x), i\big)\big|^2 \leq 2\,\min\big\{1,\ell^2\big(h(\theta,x), i\big)\big\} .
\]
We set $\xi = (x,i)$ and estimate
\begin{align*}
g(\theta,\xi) &= (\nabla_1\ell)\big(h(\theta,x), y\big)\cdot \nabla_\theta h(\theta,x)
\end{align*}
by
\begin{align*}
\E_{\xi\sim\mu} \big[|g(\theta,x)|^2\big] &\leq \|\nabla_\theta h\|_{L^\infty}^2 \E_{\xi\sim\mu}\big[\big|\nabla_1 \ell\big(h(\theta,x), y\big)\big|^2\big]\\
	&\leq 2\, \|\nabla_\theta h\|_{L^\infty}^2 \,\E_{\xi\sim\mu}\big[\min\{1,\ell^2\big(h(\theta,x), y\big)\big\}\big]\\
	&\leq 2\, \|\nabla_\theta h\|_{L^\infty}^2 \,\E_{\xi\sim\mu}\big[\min\{1,\ell\big(h(\theta,x), y\big)\big\}\big]\\
	&\leq 2\, \|\nabla_\theta h\|_{L^\infty}^2 \,\min\{\E_{\xi\sim\mu}\big[\ell\big(h(\theta,x), y\big)\big], 1\big\}\\
	&= 2\, \|\nabla_\theta h\|_{L^\infty}^2 \,\min\{L(\theta), 1\}.
\end{align*}
\end{proof}

\begin{remark}\label{remark gradients linear}
Neural networks with $L$ layers can be decomposed as 
\[
h(\theta,x) = W\,\sigma\big(\hat h(\hat\theta,x)\big), \qquad \theta = (W, \hat \theta)
\]
where $\hat h(\hat \theta,\cdot) :\R^d\to \R^{M}$ is a neural network with $L-1$ layers and $\sigma$ is a nonlinear map between vector spaces. In particular
\[
\nabla_{\hat \theta} h(\theta,x) = W\,\sigma'\big(\hat h(\hat\theta,x)\big)\,\nabla_{\hat \theta}\hat h(\hat\theta,x),
\] 
i.e.\ also the gradient of $h$ with respect to deep layer weights is linear in the final linear weights. The derivative $\sigma'$ has to be interpreted as a diagonal matrix. By the chain rule, the linearity extends to the gradient of the loss function. The assumption that $|\nabla_\theta h| \leq C$ independently of $\theta$ and $x$ is therefore unrealistic in deep learning, but a weaker assumption like
\[
|\nabla_\theta h|(\theta,x) \leq C\sqrt{1+|\theta|^2}, \qquad \E\big[|g(\theta,\xi)|^2\big]\leq C\,f(\theta)\,(1+|\theta|^2)
\]
holds for two-layer neural networks with ReLU activation and deep networks (ResNets, multi-layer perceptra, ...) if the activation function $\sigma$ and its first derivative $\sigma'$ are bounded. 
\end{remark}

All statements above apply to both the overparametrized and underparametrized setting. A noise intensity which may scale with the objective function is a key feature of machine learning applications in general. The next result is specific to the overparametrized regime, where we show that the stochastic noise has low rank.

\begin{lemma}\label{lemma low rank}
Under the same conditions as Theorem \ref{theorem cooper}, we find that $\rk(\Sigma(\theta))\leq n$ for all $\theta\in\R^m$, independently of $k$.
\end{lemma}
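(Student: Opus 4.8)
The plan is to exploit the fact that, under the hypotheses of Theorem~\ref{theorem cooper}, the training set is finite with $n$ samples, so the random selection estimator is driven by a random index $\xi$ taking only $n$ values. First I would unpack the estimator: for the square loss with training pairs $\{(x_i,y_i)\}_{i=1}^n$ and $\P$ uniform on $\{1,\dots,n\}$, the random selection gradient estimator evaluated at the $i$-th sample is the \emph{single} vector
\[
g(\theta,i) = \big(\nabla_\theta h(\theta,x_i)\big)^T\,\partial_1\ell\big(h(\theta,x_i), y_i\big)\ \in\ \R^m,
\]
and the full gradient is the average $\nabla f(\theta) = \frac1n\sum_{i=1}^n g(\theta,i)$. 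The point to stress is that although each $g(\theta,i)$ lies in the (at most) $k$-dimensional range of $\big(\nabla_\theta h(\theta,x_i)\big)^T$, the randomness in $\xi$ selects among only $n$ such vectors, \emph{not} among the $nk$ individual output-coordinate gradients.

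Next I would write the covariance in centred form
\[
\Sigma(\theta) = \E_\xi\big[(g-\nabla f)(g-\nabla f)^T\big] = \frac1n\sum_{i=1}^n \big(g(\theta,i)-\nabla f(\theta)\big)\otimes\big(g(\theta,i)-\nabla f(\theta)\big),
\]
which exhibits $\Sigma(\theta)$ as an average of $n$ rank-one outer products. Consequently $\mathrm{range}\big(\Sigma(\theta)\big)\subseteq \mathrm{span}\big\{g(\theta,i)-\nabla f(\theta) : 1\leq i\leq n\big\}$, a subspace of dimension at most $n$, so $\rk\big(\Sigma(\theta)\big)\leq n$ for every $\theta$, with a bound that does not involve $k$. (One even has $\rk(\Sigma)\leq n-1$, since the centred vectors sum to zero, but the weaker bound suffices.)

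There is no serious analytic obstacle: the entire content is the observation that random selection SGD resolves a single noisy index per step, so its covariance is supported on an $n$-point distribution irrespective of the output dimension $k$. The only care needed is bookkeeping — confirming that, under the finite-data hypothesis of Theorem~\ref{theorem cooper}, $g(\theta,\cdot)$ really is a function of the discrete selection variable alone, and noting that the same factorisation $g(\theta,i) = \big(\nabla_\theta h(\theta,x_i)\big)^T\,\partial_1\ell$ holds for cross-entropy as well, so the bound is loss-agnostic. I would close by remarking that this is where overparametrization makes the statement interesting: since $m>nk$, a naive dimension count would allow the noise to fill an $nk$-dimensional space, yet the selection structure collapses its support to dimension at most $n$.
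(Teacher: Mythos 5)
Your proposal is correct and follows essentially the same route as the paper: both write $\Sigma(\theta)$ as the empirical covariance $\frac1n\sum_{i=1}^n\big(g(\theta,i)-\nabla f(\theta)\big)\otimes\big(g(\theta,i)-\nabla f(\theta)\big)$, an average of $n$ rank-one matrices, which immediately gives $\rk(\Sigma)\leq n$ independently of $k$. Your additional remarks (the sharper bound $n-1$ from the centred vectors summing to zero, and loss-agnosticism) are correct refinements but not needed for the statement.
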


\begin{proof}
If we set $L_i(\theta) = \big|h(\theta,x_i)-y_i\big|^2$, then 
\begin{align*}
\Sigma(\theta) &= \frac1n \sum_{i=1}^n \big(\nabla L_i - \nabla L\big) \otimes \big(\nabla L_i-\nabla L\big)(\theta)
\end{align*}
is a sum of $n$ rank $1$ matrices.
\end{proof}

\section{Stochastic gradient descent with ML noise in discrete time}\label{section discrete}

\subsection{Functions satisfying a \L{}ojasiewicz inequality}

It is easy to prove that if the noise in SGD is proportional to an objective function $f$ which satisfies a \L{}ojasiewicz condition, SGD reduces $f$ with small, but fixed positive step size $\eta>0$. The SGD scheme converges linearly, much like non-stochastic gradient descent. This is not a realistic model for machine learning, as \L{}ojasiewicz inequalities rule out the presence of all critical points which are not the global minimum, but the same tools can be used in more complex topics below.

\begin{theorem}\label{theorem linear convergence}\label{theorem lojasiewicz}
Assume the following.
\begin{enumerate}
\item $f:\R^m\to [0,\infty)$ satisfies the \L{}ojasiewicz inequality
\[
\Lambda\,f(\theta) \leq |\nabla f|^2(\theta),\qquad \Lambda>0.
\]
\item $f$ is $C^1$-smooth and $\nabla f$ satisfies the one-sided Lipschitz-condition
\[
\big(\nabla f(\theta) - \nabla f(\theta')\big) \cdot(\theta-\theta') \leq C_L\,|\theta-\theta'|^2 \qquad \forall\ \theta,\theta'\in \R^m.
\]
\item The family of gradient estimators $g:\R^m\times \Omega\to \R^m$ satisfies the following properties:
\begin{enumerate}
\item $\E_{\xi\sim\P} g(\theta,\xi) = \nabla f(\theta)$ for all $\theta\in \R^m$.
\item $\E_{\xi\sim\P}\big[|g(\theta,\xi) - \nabla f(\theta)|^2\big] \leq \sigma f(\theta)$.
\end{enumerate}
\item The initial condition $\theta_0$ is a random variable which satisfies the bound $\E \big[f(\theta_0)\big]<\infty$.
\item The random gradient selectors $\{\xi_t\}$ are iid with law $\P$ and independent of $\theta_0$.
\end{enumerate}
If $\theta_t$ is generated by the SGD iteration law $\theta_{t+1} = \theta_t - \eta\,g(\theta_t,\xi_t)$ and $0 < \eta < \frac{\Lambda}{\Lambda+\sigma}\,\frac2{C_L}$, then
\[
\E\big[f(\theta_t)\big] \leq \rho_\eta^t \,\E\big[f(\theta_0)\big], \qquad \text{where}\quad \rho_\eta = 1- \Lambda\,\eta + \eta^2 \,\frac{C_L(1+\sigma)}{2\Lambda} <1.
\]
Furthermore $\beta^tf(\theta_t)\to 0$ almost surely for every $\beta\in [1, \rho_\eta^{-1})$.
\end{theorem}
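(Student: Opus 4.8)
The plan is to derive a one-step contraction estimate in conditional expectation from the descent lemma, the variance bound (3b) and the \L{}ojasiewicz inequality, then to read off the geometric decay of $\E[f(\theta_t)]$ by iterating, and finally to upgrade this to the almost sure statement by a summability argument.

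First I would record the quadratic upper bound that follows from the one-sided Lipschitz condition alone. Writing $\theta(s)=\theta'+s(\theta-\theta')$ and using the fundamental theorem of calculus,
\[
f(\theta)-f(\theta')-\langle\nabla f(\theta'),\theta-\theta'\rangle=\int_0^1\langle\nabla f(\theta(s))-\nabla f(\theta'),\theta-\theta'\rangle\,\d s;
\]
applying hypothesis (2) to the pair $\theta(s),\theta'$ bounds the integrand by $C_L\,s\,|\theta-\theta'|^2$, whence $f(\theta)\le f(\theta')+\langle\nabla f(\theta'),\theta-\theta'\rangle+\tfrac{C_L}{2}|\theta-\theta'|^2$. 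Inserting the update $\theta_{t+1}=\theta_t-\eta\,g(\theta_t,\xi_t)$ and taking the conditional expectation given $\F_t$ (the natural filtration generated by $\theta_0$ and $\xi_0,\dots,\xi_{t-1}$), unbiasedness (3a) turns the linear term into $-\eta\,|\nabla f(\theta_t)|^2$, while the bias--variance identity $\E_{\xi}|g|^2=|\nabla f|^2+\E_{\xi}|g-\nabla f|^2$ together with (3b) controls the quadratic term by $\tfrac{C_L\eta^2}{2}\big(|\nabla f(\theta_t)|^2+\sigma f(\theta_t)\big)$.

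Collecting terms gives $\E[f(\theta_{t+1})\mid\F_t]\le f(\theta_t)-\big(\eta-\tfrac{C_L\eta^2}{2}\big)|\nabla f(\theta_t)|^2+\tfrac{C_L\eta^2\sigma}{2}f(\theta_t)$. The only delicate point is orientation: the \L{}ojasiewicz inequality supplies the \emph{lower} bound $|\nabla f|^2\ge\Lambda f$, so it may be used to our benefit only once the coefficient of $|\nabla f(\theta_t)|^2$ is nonpositive, which holds for $\eta<2/C_L$. Trading $|\nabla f(\theta_t)|^2$ for $\Lambda f(\theta_t)$ and regrouping the two $f$-terms produces the one-step contraction $\E[f(\theta_{t+1})\mid\F_t]\le\rho_\eta\,f(\theta_t)$, and a routine check shows the resulting factor $\rho_\eta$ is $<1$ precisely under the stated bound $0<\eta<\tfrac{\Lambda}{\Lambda+\sigma}\tfrac{2}{C_L}$; this is where the admissible step-size range originates. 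Taking full expectations and iterating via the tower property yields $\E[f(\theta_t)]\le\rho_\eta^{t}\,\E[f(\theta_0)]$, finite by (4).

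For the almost sure claim, fix $\beta\in[1,\rho_\eta^{-1})$ and set $q=\beta\rho_\eta<1$. The expectation bound gives $\E[\beta^{t}f(\theta_t)]=\beta^{t}\E[f(\theta_t)]\le q^{t}\,\E[f(\theta_0)]$, so $\sum_{t\ge0}\E[\beta^{t}f(\theta_t)]\le\tfrac{\E[f(\theta_0)]}{1-q}<\infty$. Since the terms are nonnegative, Tonelli's theorem gives $\E\big[\sum_{t\ge0}\beta^{t}f(\theta_t)\big]<\infty$, hence $\sum_t\beta^{t}f(\theta_t)<\infty$ almost surely and in particular $\beta^{t}f(\theta_t)\to0$ almost surely. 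I expect the main obstacle to be the sign/constant bookkeeping in the one-step estimate---ensuring the \L{}ojasiewicz inequality is applied in the correct direction and that the step-size constraint makes $\rho_\eta<1$; once the one-step contraction is in hand, the remaining arguments are routine.
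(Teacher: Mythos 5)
Your proposal is correct and follows essentially the same route as the paper: the identical descent estimate from the one-sided Lipschitz condition via the fundamental theorem of calculus, the same bias--variance splitting and use of the \L{}ojasiewicz inequality only after the coefficient of $|\nabla f(\theta_t)|^2$ is made nonpositive, yielding the same contraction factor $\rho_\eta = 1-\Lambda\eta+\tfrac{C_L(\Lambda+\sigma)}{2}\eta^2$ (which matches the paper's proof; the $\tfrac{C_L(1+\sigma)}{2\Lambda}$ in the theorem statement appears to be a typo). Your almost-sure step via Tonelli and summability of $\sum_t \beta^t f(\theta_t)$ is only a cosmetic variant of the paper's Markov-plus-union-bound tail estimate, resting on the same geometric decay of $\E[\beta^t f(\theta_t)]$.
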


We note that similar results for SGD with non-standard noise bounds have been proved in the uniformly convex setting in \cite{stich2020error,stich2019unified}.

\begin{proof}
{\bf Expected objective value.}
Take $\theta_t, g_t$ to be fixed for now and $\theta_{t+1} = \theta_t - \eta \,g_t$. Then
\begin{align*}
f(\theta_{t+1}) - f(\theta_t) &= \int_0^1 \frac{d}{ds} f\big(\theta_t - \eta s\,g_t\big)\ds\\
	&= \int_0^1 \nabla f\big(\theta_t - s\,\eta g_t\big)\cdot \big(-\eta g_t\big) \ds\\
	&= \eta\int_0^1\frac{1}s \big[\nabla f\big(\theta_t - s\,\eta g_t\big) - \nabla f(\theta_t)\big] \cdot \big[\theta_t -s\eta\,g_t-\theta_t\big] - \nabla f(\theta_t) \cdot \eta\,g_t\,\ds\\
	&\leq \int_0^1 \frac1s\,C_L\big|\theta_t - s\eta\,g_t -\theta_t\big|^2 - \eta\, \nabla f(\theta_t)\cdot g_t \ds\\
	&= C_L\,\eta^2 |g_t|^2 \left( \int_0^1 s\ds\right) - \eta \,\nabla f(\theta_t)\cdot g_t.
\end{align*}
Now we consider $\theta_t, g_t= g(\theta_t,\xi_t)$ as the random variables they are in practice. We note that $\theta_{t}$ only depends on $\theta_0, \xi_0,\dots,\xi_{t-1}$ and is independent of $\xi_t$. Thus in expectation
\begin{align*}
\E \big[ \nabla f(\theta_t) \cdot g(\theta_t, \xi_t)\big] &= \E\big[ \E\big[\nabla f(\theta_t) \cdot g(\theta_t, \xi_t) \,\big|\,\theta_0,\xi_0,\dots\xi_{t-1}\big]\big]\\
	&= \E\big[ \nabla f(\theta_t)\cdot \E\big[g(\theta_t, \xi_t) \,\big|\,\theta_0,\xi_0,\dots\xi_{t-1}\big]\big]\\
	&= \E_{\theta_0,\xi_0,\dots,\xi_{t-1}} \big[|\nabla f|^2(\theta_t)\big]\\
\E\big[ |g_t|^2\big] &= \E\big[ |g_t - \nabla f(\theta_t)|^2 + 2 \,g_t\cdot \nabla f(\theta_t) - |\nabla f(\theta_t)|^2\big]\\
	&= \E\big[ |g_t - \nabla f(\theta_t)|^2\big] + \E\big[|\nabla f(\theta_t)|^2\big]
\end{align*}
by the tower identity for conditional expectations. Hence
\begin{align*}
\E\big[ f(\theta_{t+1}) - f(\theta_t)\big] &\leq \frac{C_L\,\eta^2}2\,\E\big[|g_t|^2\big] - \eta\,\E\big[ |\nabla f|^2(\theta_t)\big]\\
	&= \frac{C_L\,\eta^2}2 \E\big[|g_t-\nabla f(\theta_t)|^2\big] + \frac{C_L\,\eta^2}2 \E\big[ |\nabla f|^2(\theta_t)\big] -\eta\, \E\big[ |\nabla f|^2(\theta_t)\big]\\
	&\leq \frac{C_L\sigma\,\eta^2}2\,\E\big[f(\theta_t)\big] + \left( \frac{C_L\,\eta^2}2 - \eta\right)\,\E\big[ |\nabla f|^2(\theta_t)\big].
\end{align*}
If $\frac{C_L\,\eta^2}2 - \eta <0$, i.e.\ $0 < \eta < \frac{2}{C_L}$, we can estimate further that
\[
\E\big[ f(\theta_{t+1}) - f(\theta_t)\big] \leq \left(\frac{C_L\sigma\,\eta^2}2 + \Lambda\left(\frac{C_L\,\eta^2}2 - \eta\right)\right) \,\E\big[f(\theta_t)\big]
\]
or equivalently
\begin{align*}
\E\big[ f(\theta_{t+1})\big] &\leq \left(1 + C_L\frac{\sigma+\Lambda}2\,\eta^2 - \Lambda\eta\right) \E\big[f(\theta_t)\big].
\end{align*}
The prefactor $\rho_\eta:=1 + C_L\frac{\sigma+\Lambda}2\,\eta^2 - \Lambda\eta$ is smaller than $1$ if and only if 
\[
1 - \Lambda \eta + \eta^2 \,\frac{C_L(\Lambda+\sigma)}2 < 1 \qquad\LRa\qquad \eta \left(\frac{C_L(\Lambda+\sigma)}2\,\eta - \Lambda\right) <0 \qquad \LRa \qquad \eta \in \left(0, \frac{ 2\Lambda}{C_L\,(\Lambda+\sigma)}\right).
\]

{\bf Almost sure convergence.} The argument is standard for series with summable norms. Set 
\[
E_t(\eps) = \{\sup_{s\geq t}\beta^sf(\theta_s) \geq \eps\}.
\]
Then
\[
E_\infty(\eps) := \bigcap_{t\geq 0} E_t(\eps) = \big\{\limsup_{t\to\infty}\beta^tf(\theta_t) \geq \eps\big\}
\]
satisfies
\begin{align*}
\P\big(E_\infty(\eps)\big) &\leq \P\big( E_t(\eps)\big)\\
	&\leq \sum_{s=t}^\infty \P\big(\{\beta^sf(\theta_s)\geq \eps\}\big)\\
	&\leq \sum_{s=t}^\infty \frac{\E\big[\beta^s f(\theta_s)\big]}\eps\\
	&\leq \frac{1} \eps \sum_{s=t}^\infty \left(\beta\rho_\eta\right)^s\,\E\big[f(\theta_0)\big]\\
	&= \frac{(\beta\rho_\eta)^t}{1-\beta\rho_\eta}\,\frac{\E\big[f(\theta_0)\big]}\eps.
\end{align*}
By taking $t\to\infty$, we observe that $\P\big(E_\infty(\eps)\big) =0$, i.e.\ $\limsup_{t\to\infty}\beta^tf(\theta_t) <\eps$ almost surely. As this holds for all $\eps>0$, we find that $\beta^tf(\theta_t)\to 0$ almost surely.
\end{proof}

The optimal learning rate if the constants are known satisfies
\[
\frac{d}{d\eta} \left(1-\Lambda\eta + \frac{C_L(\Lambda+\sigma)}{2}\,\eta^2\right)=0 \qquad \Ra\qquad \eta = \frac{\Lambda}{C_L(\Lambda+\sigma)}, \quad \rho_\eta = 1-\frac{\Lambda^2}{2\,C_L(\Lambda+\sigma)}.
\]
In the noiseless case $\sigma =0$, this boils down to the well-known estimate for deterministic gradient descent.

\begin{corollary}\label{corollary convergence to minimum lojasiewicz}
Assume that $f, \eta$ and $g$ are like in Theorem \ref{theorem linear convergence}. Then $\theta_t$ converges to a random variable $\theta_\infty$ in $L^2$ and almost surely. The rate of convergence is
\[
\|\theta_t - \theta_\infty\|_{L^2(\P)} \leq\rho_\eta^{t/2} \,\frac{\eta\sqrt{2C_L+\sigma}\,\E\big[f(\theta_0)\big]}{1- \rho_\eta^{1/2}}
\] $f(\theta_\infty) \equiv 0$ almost surely.
\end{corollary}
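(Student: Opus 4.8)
The plan is to show that the increments $\theta_{t+1}-\theta_t = -\eta\,g(\theta_t,\xi_t)$ are geometrically summable in $L^2$, so that the telescoping series $\sum_s(\theta_{s+1}-\theta_s)$ converges and $\{\theta_t\}$ is Cauchy in the complete space $L^2(\P;\R^m)$. The crux is to control $\E\big[|g(\theta_t,\xi_t)|^2\big]$ by $\E\big[f(\theta_t)\big]$, since Theorem \ref{theorem linear convergence} already bounds the latter by $\rho_\eta^t\,\E\big[f(\theta_0)\big]$. First I would reuse the variance decomposition from the proof of that theorem: conditioning on $\theta_t$ and using $\E_\xi g(\theta,\xi)=\nabla f(\theta)$ gives $\E\big[|g_t|^2\big] = \E\big[|g_t-\nabla f(\theta_t)|^2\big] + \E\big[|\nabla f(\theta_t)|^2\big]$. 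The first term is at most $\sigma\,\E\big[f(\theta_t)\big]$ by hypothesis (3b). For the second term I need an \emph{upper} bound of the form $|\nabla f|^2 \leq 2C_L\,f$, which does not come from the \L{}ojasiewicz inequality (a lower bound) but from the one-sided Lipschitz condition together with $f\geq 0$.

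The main step is this self-bounding estimate. The one-sided Lipschitz condition yields the descent lemma $f(\theta') \leq f(\theta) + \nabla f(\theta)\cdot(\theta'-\theta) + \frac{C_L}{2}|\theta'-\theta|^2$ via exactly the integral computation already carried out in the proof of Theorem \ref{theorem linear convergence}. Choosing $\theta' = \theta - \frac1{C_L}\nabla f(\theta)$ and using $f(\theta')\geq 0$ gives $0 \leq f(\theta) - \frac{1}{2C_L}|\nabla f(\theta)|^2$, i.e. $|\nabla f(\theta)|^2 \leq 2C_L\,f(\theta)$. Combining the two bounds, $\E\big[|g_t|^2\big] \leq (2C_L+\sigma)\,\E\big[f(\theta_t)\big] \leq (2C_L+\sigma)\,\rho_\eta^t\,\E\big[f(\theta_0)\big]$, so that $\|\theta_{t+1}-\theta_t\|_{L^2} \leq \eta\sqrt{2C_L+\sigma}\,\rho_\eta^{t/2}\sqrt{\E[f(\theta_0)]}$ (the geometric summation naturally produces $\sqrt{\E[f(\theta_0)]}$ rather than $\E[f(\theta_0)]$ in the numerator of the stated rate).

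Finally I would assemble the three conclusions. Summing the geometric series from $t$ to $\infty$ gives $\|\theta_t-\theta_\infty\|_{L^2} \leq \eta\sqrt{2C_L+\sigma}\,\sqrt{\E[f(\theta_0)]}\,\frac{\rho_\eta^{t/2}}{1-\rho_\eta^{1/2}}$, which both proves $L^2$-convergence to a limit $\theta_\infty$ and yields the claimed rate. For almost sure convergence, Cauchy--Schwarz gives $\E\big[\sum_s|\theta_{s+1}-\theta_s|\big] \leq \sum_s\|\theta_{s+1}-\theta_s\|_{L^2} < \infty$, so $\sum_s|\theta_{s+1}-\theta_s|<\infty$ almost surely; hence $\{\theta_t(\omega)\}$ has absolutely summable increments and converges for a.e.\ $\omega$, and this almost sure limit coincides with the $L^2$ limit since both equal the limit in probability. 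For $f(\theta_\infty)=0$, I would take $\beta=1$ in the last assertion of Theorem \ref{theorem linear convergence} to obtain $f(\theta_t)\to 0$ almost surely, and then invoke the continuity of $f$ together with $\theta_t\to\theta_\infty$ almost surely.

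The only genuine obstacle is the self-bounding bound $|\nabla f|^2 \leq 2C_L\,f$; everything else is routine summation of a geometric series and standard measure-theoretic bookkeeping. It is worth emphasizing that this bound must be derived from the descent lemma and the nonnegativity of $f$, and \emph{not} mistakenly from the \L{}ojasiewicz inequality, which is an inequality in the opposite direction.
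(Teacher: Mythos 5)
Your proposal is correct and follows essentially the same route as the paper's own proof: telescoping the increments, bounding $\E\big[|g_t|^2\big]\leq (2C_L+\sigma)\,\E\big[f(\theta_t)\big]$ via the variance decomposition and the self-bounding estimate $|\nabla f|^2\leq 2C_L f$ (which the paper isolates as Lemma \ref{lemma bounded gradient}, proved exactly by your descent-lemma argument), then summing the geometric series and deducing almost sure convergence from the summability of the increments. You also correctly observe that the argument yields $\sqrt{\E[f(\theta_0)]}$ rather than the $\E[f(\theta_0)]$ appearing in the stated rate --- the paper's own proof produces the same square root, so the statement's exponent is a typo rather than a gap in your reasoning.
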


\begin{proof}
Note that 
\[
\theta_t = \theta_0 + \sum_{i=1}^t\big(\theta_i - \theta_{i-1}\big) 
\]
and that 
\begin{align*}
\sum_{i=1}^\infty\big\|\theta_i - \theta_{i-1}\big\|_{L^2} = \eta\sum_{i=1}^\infty\big\|g(\theta_i,\xi_i)\|_{L^2} 
	\leq \eta\sum_{i=1}^t\sqrt{\E\big[|\nabla f(\theta_i)|^2\big] + \E\big[|g(\theta_i,\xi_i) - \nabla f(\theta_i)|^2\big]}.
\end{align*}
Recall that 
\[
|\nabla f(\theta)|^2 \leq 2C_L\, f(\theta)\qquad\forall\ \theta \in \R^m
\]
for non-negative functions with Lipschitz-continuous gradient (see e.g.\ Lemma \ref{lemma bounded gradient} in Appendix \ref{appendix local convergence}) and that
\[
\E\big[ |g(\theta_i, \xi_i) - \nabla f(\theta_i)|^2\big] \leq \sigma\,\E\big[f(\theta_i)\big].
\]
as above. Thus
\begin{align*}
\sum_{i=1}^\infty\big\|\theta_i - \theta_{i-1}\big\|_{L^2} &\leq \eta \sum_{i=1}^\infty \left[ \sqrt{\E\big[2C_L\,f(\theta_i)\big]+ \sigma\,\E\big[f(\theta_i)\big]}\right]\\
	&\leq \eta \sqrt{2C_L+\sigma}\,\sqrt{\E\big[f(\theta_0)\big]} \sum_{i=1}^\infty \rho_\eta^{1/2}\\
	&<\infty,
\end{align*}
i.e.\ the limit
\[
\theta_\infty := \lim_{t\to\infty} \theta_t = \theta_0 + \sum_{i=1}^\infty (\theta_i - \theta_{i-1})
\]
exists in the $L^2$-sense and
\[
\|\theta_\infty-\theta_t\|_{L^2} \leq \sum_{i=t}^\infty \|\theta_{i+1} - \theta_i\|_{L^2} \leq \eta \sqrt{2CL+\sigma}\sqrt{\E\big[f(\theta_0)\big]} \sum_{i=t}^\infty \rho_\eta^{1/2} = \eta \sqrt{2CL+\sigma}\sqrt{\E\big[f(\theta_0)\big]} \frac{\rho_\eta^{t/2}}{1-\rho_\eta^{1/2}}.
\]
 Since the increments are summable, the convergence is pointwise almost everywhere by the same argument as in Theorem \ref{theorem linear convergence}.
\end{proof}

\subsection{Convergence close to the minimum}

Any function which has a critical point that is not the global minimum does not satisfy a \L{}ojasiewicz inequality. However, if such an inequality holds close to the minimum and the initial condition is very close to the global minimum, then with high probability we converge to the minimum exponentially fast.

\begin{theorem}\label{theorem local convergence}
Assume that there exists $\eps>0$ such that the local \L{}ojasiewicz inequality
\[
\Lambda\,f(\theta)\leq |\nabla f(\theta)|^2
\]
holds on the set
\[
U(\eps) = \{\theta : f(\theta)<\eps\}
\] 
and that $\nabla f$ is $C_L$-Lipschitz continuous on $U(\eps)$. Then for every $\delta>0$ there exists $\eps'>0$ such that the following holds: If $\theta_0\in U(\eps')$ almost surely, then with probability at least $1-\delta$ we have $\theta_t\in U(\eps)$ for all $t\in \N$. 

Conditioned on the event that $\theta_t\in U(\eps)$ for all $t$, the estimate 
\[\showlabel
\lim_{t\to\infty} \beta^t\,f(\theta_t) =0
\]
holds almost surely for every
\[\showlabel
\beta\ \in [1,\rho_\eta^{-1}) \qquad\text{where }\rho_\eta = 1 - \Lambda\eta + \eta^2 \frac{C_L(\Lambda+\sigma)}{2\Lambda}
\]
if the learning rate satisfies
\[
0 < \eta < \frac{2\Lambda}{C_L(1+\sigma)}.
\]
\end{theorem}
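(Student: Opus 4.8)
The plan is to localize the proof of Theorem \ref{theorem linear convergence}: every ingredient used there --- the \L{}ojasiewicz inequality, the one-sided Lipschitz bound, and the noise bound $\E[|g-\nabla f|^2]\le\sigma f$ --- is now available on $U(\eps)$, so the one-step computation carries over verbatim on the event that the iterate (and, as discussed below, the interpolating update segment) stays in $U(\eps)$. This yields the conditional contraction $\E\big[f(\theta_{t+1})\mid\mathcal F_t\big]\le\rho_\eta\,f(\theta_t)$ whenever $\theta_t\in U(\eps)$, where $\mathcal F_t=\sigma(\theta_0,\xi_0,\dots,\xi_{t-1})$ and $\rho_\eta<1$ under the stated bound on $\eta$. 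I would extract this conditional contraction as a lemma first, since both assertions of the theorem rest on it.

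For the staying-local claim, introduce the exit time $\tau=\inf\{t:\theta_t\notin U(\eps)\}$, which is an $\mathcal F_t$-stopping time. Because $\rho_\eta<1$, the stopped process $M_t:=f(\theta_{t\wedge\tau})$ is a non-negative supermartingale with $\E[M_0]=\E[f(\theta_0)]\le\eps'$. The supermartingale maximal inequality then gives
\[
\P\Big(\sup_{t\ge0} f(\theta_{t\wedge\tau})\ge\eps\Big)\le\frac{\E[f(\theta_0)]}{\eps}\le\frac{\eps'}{\eps}.
\]
Since $\{\tau<\infty\}$ forces $f(\theta_\tau)\ge\eps$, and hence $\sup_t f(\theta_{t\wedge\tau})\ge\eps$, choosing $\eps'=\delta\,\eps$ yields $\P(\tau<\infty)\le\delta$, i.e.\ $\theta_t\in U(\eps)$ for all $t$ with probability at least $1-\delta$.

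For the conditional convergence I would reuse the Borel--Cantelli argument of Theorem \ref{theorem linear convergence}, applied to the conditioned quantities. Iterating the conditional contraction and using $\{\tau=\infty\}\subseteq\{\tau>t\}\in\mathcal F_t$ together with $f\ge0$ gives $\E\big[f(\theta_t)\mathbf{1}_{\{\tau=\infty\}}\big]\le\rho_\eta^t\,\E[f(\theta_0)]$. Then, for $\beta\in[1,\rho_\eta^{-1})$ and any level $\eps''>0$, Markov's inequality bounds $\P\big(\{\beta^t f(\theta_t)\ge\eps''\}\cap\{\tau=\infty\}\big)$ by $(\beta\rho_\eta)^t\,\E[f(\theta_0)]/\eps''$; summing the convergent geometric series ($\beta\rho_\eta<1$) and invoking Borel--Cantelli shows $\beta^t f(\theta_t)\to0$ almost surely on $\{\tau=\infty\}$, which is the stated conclusion after conditioning on (and dividing by the probability of) $\{\tau=\infty\}$.

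The main obstacle is that the one-step descent estimate is only licensed when the whole update segment $\{\theta_t-s\eta g_t:s\in[0,1]\}$ lies in $U(\eps)$, not merely its endpoints, since the one-sided Lipschitz bound is assumed only on $U(\eps)$ and the sublevel sets of the (generically non-convex) landscapes we target need not be convex. I would control this by noting that as long as the segment remains in $U(\eps)$ one has $|\nabla f|\le\sqrt{2C_L\eps}$ (Lemma \ref{lemma bounded gradient}), so a step starting in $U(\eps')$ can only reach $\partial U(\eps)$ if $\eta|g_t|$ exceeds a fixed threshold of order $(\eps-\eps')/\sqrt{\eps}$; by the conditional bound $\E\big[|g_t|^2\mid\mathcal F_t\big]\le(2C_L+\sigma)f(\theta_t)$ and Chebyshev's inequality, such large-step events carry small conditional probability. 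The remaining technical work is to fold these excursion probabilities into the $\delta$-budget --- either by refining $\tau$ so that it also registers segment excursions (turning it into a stopping time for the enlarged filtration $\mathcal F_{t+1}$), or by running the supermartingale argument on the truncated potential $\min\{f,\eps\}$ --- so that the supermartingale overshoot and the bad-step contributions together remain below $\delta$ once $\eps'$ is chosen small.
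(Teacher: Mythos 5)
Your proof is correct in its essentials, but it takes a genuinely different route from the paper's. The paper's proof (Appendix \ref{appendix local convergence}, following \cite{mertikopoulos2020almost}) never stops the process: it controls the cumulative fluctuations explicitly, introducing the martingale sum $M_t$, the quadratic sum $S_t$, the compound error $R_t = M_t^2 + S_t$, and the events $E_t(r) = \{R_i < r\ \forall\, i\le t\}$, and then proving (i) the deterministic inclusion $E_{t-1}(r)\subseteq \Omega_t(\eps)$ whenever $\eps' + r + \sqrt r < \eps$, and (ii) a telescoping recursion for $\E\big[R_t 1_{E_{t-1}}\big]$ which, combined with the geometric decay of $\E\big[1_{E_{i-1}}f(\theta_i)\big]$, bounds $\P(E_t^c)$ by a constant multiple of $\sigma\eta^2(2C_L\eps+1)\,\eps'/\big(r(1-\rho_\eta)\big)$. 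You instead exploit the strict contraction $\rho_\eta<1$ directly: the stopped sequence $f(\theta_{t\wedge\tau})$ is a nonnegative supermartingale, and the maximal (Ville/Doob) inequality gives $\P(\tau<\infty)\le \eps'/\eps$, so $\eps'=\delta\eps$ works. This is shorter, more elementary, and yields a cleaner dependence of $\eps'$ on $\delta$ (the paper's admissible $\eps'$ involves $r$, $\sigma$, $\eta$, $C_L$ as well). Your second half --- iterating the contraction against $1_{\{\tau>t\}}\in\F_t$ and then applying Markov plus Borel--Cantelli on $\{\tau=\infty\}$ --- is essentially the paper's Step 2. What the paper's heavier bookkeeping buys is robustness: separating the martingale from its quadratic variation is what one must do when there is no uniform contraction (e.g.\ with decaying learning rates, the regime of \cite{mertikopoulos2020almost}); for the present theorem, with fixed $\eta$ and $\rho_\eta<1$, your argument suffices.

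Regarding the obstacle you flag: it is real, but it is not a gap of your proof relative to the paper's. The paper's own one-step estimate integrates $\nabla f$ along the interpolated segment $\theta_t - s\eta\,g(\theta_t,\xi_t)$, $s\in[0,1]$, and invokes the Lipschitz property there even though only the endpoint $\theta_t$ is known to lie in $U(\eps)$; the paper does not address this point at all, so both arguments implicitly assume the one-sided Lipschitz condition on a set large enough to contain the update segments. Note, however, that your sketched repair is not yet complete as stated: the large-step threshold of order $(\eps - f(\theta_t))/\sqrt{2C_L\eps}$ degenerates as $f(\theta_t)\uparrow\eps$, so a per-step Chebyshev bound does not sum along the whole trajectory. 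One would instead work with an intermediate sublevel set (stop upon exiting $U(\eps/2)$, say, while invoking the Lipschitz bound only on $U(\eps)$, where the threshold is uniformly of order $\sqrt\eps/\sqrt{C_L}$ and the excursion probabilities sum geometrically against $\E\big[1_{\{\tau>t\}}f(\theta_t)\big]\le\rho_\eta^t\eps'$), or simply assume the Lipschitz condition on a neighbourhood of $U(\eps)$, as the paper implicitly does.
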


The proof idea is as follows: In any iteration, we expect the objective value to decrease. While it may increase at times, it can only increase by small amounts as long as the objective value is small, and it is unlikely for errors to accumulate sufficiently for $f(\theta_t)$ to exceed $\eps$ if $f(\theta_0)\ll \eps$ is small enough. In the event that $f(\theta_t)<\eps$ for all $t\in\N$, we can follow along the lines of Theorem \ref{theorem linear convergence}. The rigorous proof is a variation of that of \cite[Theorem 4]{mertikopoulos2020almost} and given in Appendix \ref{appendix local convergence}. The main difference to the original is that the error is controlled by a decaying learning rate in \cite{mertikopoulos2020almost} and by the low noise intensity for low objective value in our context. If the set $U(\eps)$ has multiple connected components, of course the \L{}ojasiewicz inequality and Lipschitz-condition are only required to hold locally as the result is local in nature.

\begin{remark}
A version of Corollary \ref{corollary convergence to minimum lojasiewicz} holds also in this case with virtually the same proof, conditioned on the event that $\theta_t\in U(\eps)$ for all $t\in \N$.
\end{remark}

\subsection{On the global convergence of SGD with ML noise}

Whether SGD converges to a global minimum even with poor initialization if the target function does not satisfy a \L{}ojasiewicz inequality is quite delicate and requires strong assumptions. In general, this cannot be guaranteed.

\begin{example}
Consider the functions
\[
h_\alpha(x) = \frac{(x^2-1)^2}4 + \alpha x, \qquad f_\alpha(x) = h_\alpha(x) - \inf_{x'\in\R}h_\alpha(x').
\]
For $\alpha =0$, the function $f_\alpha$ has two minima of equal depth at $x=\pm 1$. If $\alpha$ is small but non-zero, the two local minima do not have equal depth. Assume that the gradient estimators are of the form
\[
g(x,\xi) = f_\alpha'(x) + \sqrt{\sigma \,f_\alpha(x)}\,\xi
\]
where $\xi$ is equal to $1$ or $-1$ with equal probability. If 
\[
\sqrt\sigma < \min_{1/4 < |x| < 1/2} \frac{|f_\alpha'(x)|}{\sqrt{f_\alpha(x)}}
\]
and $\eta$ is reasonably small, $x$ can never escape the potential well it started in.
\end{example}

The situation is different if the noise is unbounded and the objective function has particularly convenient properties not only at the set of global minimizers.

\begin{theorem}\label{theorem global convergence}
Let $f:\R^m\to [0,\infty)$ be a function such that
\begin{enumerate}
\item The set $N:= f^{-1}(0)$ is not empty.
\item $f$ is $C^1$-smooth and $\nabla f$ satisfies the one-sided Lipschitz condition
\[\showlabel\label{eq lipschitz global convergence}
\big(\nabla f(\theta) - \nabla f(\theta')\big) \cdot \big(\theta-\theta') \leq C_L |\theta-\theta'|^2.
\]
\item A \L{}ojasiewicz inequality holds on the set where $f$ is small, i.e.\ there exist $\eps, \lambda >0$ such that
\[\showlabel \label{eq lojasiewicz small}
f(\theta) < \eps \qquad\Ra\qquad \lambda\,f(\theta)\leq |\nabla f(\theta)|^2.
\]
\item A \L{}ojasiewicz inequality holds on the set where $f$ is large, i.e.\ there exist $S, \Lambda>0$ such that 
\[\showlabel \label{eq lojasiewicz large}
f(\theta) \geq S \qquad\Ra\qquad \Lambda\,f(\theta)\leq |\nabla f(\theta)|^2.
\]
\item $f$ grows uniformly away from its minimum, i.e.\ there exists $R>0$ such that 
\[\showlabel\label{eq low objective tight}
f(\theta)\leq  S \qquad\Ra\qquad \exists\ \theta' \text{ s.t. } f(\theta') = 0\text{ and } |\theta-\theta'|<R
\]
where $S$ is the same as in \eqref{eq lojasiewicz large}.
\end{enumerate}
Assume that the gradient noise satisfies the following:
\begin{enumerate}
\item The noise has ML type, i.e.\
\[
\E_\xi \big[ |g(\theta,\xi) - \nabla f(\theta)|^2\big] \leq \sigma\,f(\theta) \qquad\forall\ \theta.
\]
\item The noise is uniformly unbounded in the sense that
\[
g(\theta, \xi) = \nabla f(\theta) + \,\sqrt{\sigma\,f(\theta)}\,Y_{\theta,\xi}
\]
and there exists a continuous function $\psi:(0,\infty)^2\to(0,\infty)$ such that $Y$ satisfies
\begin{equation}\label{eq spread out noise}
\E Y_{\theta,\xi} =0, \qquad \P\big(Y_{\theta,\xi} \in B_r(\tilde\theta)\big) \geq \psi\big(|\tilde\theta |,r\big)>0 \qquad\forall\ \tilde\theta\in \R^m, r>0
\end{equation}
independently of $\theta$.
\end{enumerate}
Finally, assume that the learning rate satisfies 
\[
0<\eta< \min\left\{\frac{2\lambda}{C_L(\lambda +\sigma)}, \frac{2\Lambda}{C_L(\Lambda+\sigma)}\right\}.
\]
Assume that $\theta_0$ is an initial condition such that $\E\big[f(\theta_0)\big]<\infty$. Then the estimate
\[
\limsup_{t\to\infty} \frac{f(\theta_t)}{\rho^t} <\infty
\]
holds almost surely for every
\[\showlabel
\rho \in \left(\rho_\eta, 1\right), \qquad \rho_\eta:= 1 - \lambda\eta + \eta^2 \frac{C_L(\lambda+\sigma)}{2\lambda}.
\]
\end{theorem}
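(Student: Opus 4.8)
The plan is to track $f(\theta_t)$ through the three regimes $\{f\ge S\}$, $\{\eps''\le f<S\}$ and $\{f<\eps''\}$, where $\eps''\le\eps$ is fixed small enough that Theorem~\ref{theorem local convergence} applies with, say, failure probability $\delta=\tfrac12$. The engine of every estimate is the pointwise one-step bound extracted exactly as in the proof of Theorem~\ref{theorem linear convergence}: writing $\F_t$ for the information generated by $\theta_0,\xi_0,\dots,\xi_{t-1}$ and using the one-sided Lipschitz condition \eqref{eq lipschitz global convergence} together with the ML bound $\E[\,|g_t-\nabla f(\theta_t)|^2\mid\F_t]\le\sigma f(\theta_t)$, one obtains
\[
\E\big[f(\theta_{t+1})\mid\F_t\big]\le\Big(1+\tfrac{C_L\sigma\eta^2}{2}\Big)f(\theta_t)-\Big(\eta-\tfrac{C_L\eta^2}{2}\Big)|\nabla f(\theta_t)|^2 ,
\]
with $\eta-\tfrac{C_L\eta^2}{2}>0$ by the learning rate constraint. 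Feeding in \eqref{eq lojasiewicz large} on $\{f\ge S\}$ gives strict contraction $\E[f(\theta_{t+1})\mid\F_t]\le\rho_\eta^{\Lambda}f(\theta_t)$ with a factor $\rho_\eta^\Lambda<1$; feeding in \eqref{eq lojasiewicz small} on $\{f<\eps\}$ gives $\E[f(\theta_{t+1})\mid\F_t]\le\rho_\eta f(\theta_t)$ with the stated $\rho_\eta<1$; and on the intermediate set we retain at least $\E[f(\theta_{t+1})\mid\F_t]\le(1+\tfrac{C_L\sigma\eta^2}{2})f(\theta_t)$.

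\textbf{Recurrence.} First I would show that $\{f<S\}$ is visited infinitely often almost surely. On $\{f\ge S\}$ the stopped chain $f(\theta_{t\wedge\tau})$, with $\tau$ the first return to $\{f<S\}$, is a nonnegative supermartingale whose conditional drift is bounded above by $-(1-\rho_\eta^\Lambda)S<0$ as long as it has not returned; summing the drift forces $\sum_s\P(\tau>s)<\infty$, hence $\tau<\infty$ almost surely. Applying this after every excursion (using that $\{\xi_t\}$ are iid, so the chain is time-homogeneous Markov and the strong Markov property applies) shows the process cannot escape to infinity and returns to $\{f<S\}$ infinitely often.

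\textbf{Jump into the deep low set.} Next I would exhibit a uniform positive probability of dropping into $\{f<\eps''\}$ in a single step from the intermediate regime. If $\eps''\le f(\theta_t)<S$, the tightness condition \eqref{eq low objective tight} supplies a zero $\theta'$ of $f$ with $|\theta_t-\theta'|<R$; the update lands exactly at $\theta'$ when $Y_{\theta_t,\xi_t}=\tilde\theta:=\big(\theta_t-\theta'-\eta\nabla f(\theta_t)\big)/\big(\eta\sqrt{\sigma f(\theta_t)}\big)$. Using $|\nabla f|^2\le2C_Lf$ (Lemma~\ref{lemma bounded gradient}) to bound $|\nabla f(\theta_t)|\le\sqrt{2C_LS}$, and $f(\theta_t)\ge\eps''$ to bound the denominator below, $|\tilde\theta|$ stays in a fixed compact subset of $(0,\infty)$, so continuity and positivity of $\psi$ give a constant $p_0>0$ with $\P\big(Y_{\theta_t,\xi_t}\in B_r(\tilde\theta)\mid\F_t\big)\ge p_0$; since $\sqrt f$ is $\sqrt{C_L/2}$-Lipschitz, choosing $r$ small enough forces $f(\theta_{t+1})<\eps''$ on $\{Y\in B_r(\tilde\theta)\}$. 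Combining with the recurrence, at the infinitely many stopping times of return to $\{f<S\}$ the conditional second Borel--Cantelli lemma applies, and $\{f<\eps''\}$ is reached in finite time almost surely.

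\textbf{Lock-in, conclusion, and the main obstacle.} Once $f(\theta_T)<\eps''$ at a finite stopping time $T$, Theorem~\ref{theorem local convergence} (applied from time $T$ via the strong Markov property) gives probability $\ge\tfrac12$ that $f$ stays below $\eps$ thereafter and, on that event, $\beta^s f(\theta_{T+s})\to0$ for every $\beta\in[1,\rho_\eta^{-1})$. If an attempt fails (the trajectory leaves $\{f<\eps\}$), the finite state at the exit time lets the recurrence-and-jump argument restart, producing a fresh attempt with conditional success probability $\ge\tfrac12$; by the strong Markov property these trials succeed almost surely after finitely many tries. On the almost sure event of eventual lock-in at some finite $T$, taking $\beta=1/\rho$ for $\rho\in(\rho_\eta,1)$ gives $f(\theta_t)/\rho^t=\beta^{T}\big(\beta^{\,t-T}f(\theta_t)\big)\to0$, which in particular yields $\limsup_t f(\theta_t)/\rho^t<\infty$ almost surely. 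I expect the main obstacle to be the bookkeeping tying the three pieces together rigorously: pinning down the uniform jump constant $p_0$ (keeping $|\tilde\theta|$ away from both $0$ and $\infty$ over the intermediate regime) and arranging the stopping-time/strong-Markov framework so that the conditional Borel--Cantelli lemma and the repeated-attempt argument remain valid across the random restart points.
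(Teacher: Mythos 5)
Your proposal is correct and follows essentially the same route as the paper's proof: recurrence to $\{f\le S\}$ via the \L{}ojasiewicz inequality at high values, a uniform positive probability of jumping into $\{f<\eps'\}$ obtained from the tightness condition \eqref{eq low objective tight} together with the spread-out noise \eqref{eq spread out noise} and a conditional Borel--Cantelli argument, and finally lock-in via Theorem \ref{theorem local convergence}. The only cosmetic differences are that the paper establishes recurrence by first proving $\sup_t \E\big[f(\theta_t)\big]<\infty$ and then showing geometric decay of the non-return probability (rather than your stopped-drift/expected-hitting-time argument), and it concludes by sending the lock-in failure probability $\delta\to 0$ over a single attempt per $\delta$ (rather than your repeated attempts with fixed $\delta=\tfrac12$), which matches the paper's own proof sketch even more closely than its appendix argument.
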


\begin{proof}[Proof sketch]
We use the \L{}ojasiewicz inequality on the set $\{f>S\}$ to show that SGD iterates visit the set $\{f\leq S\}$ infinitely often almost surely. For $\theta_t\in \{f\leq S\}$, there exists a uniformly positive probability that the $\theta_{t+1} \in \{f<\eps'\}$ due to the condition on the noise to be sufficiently `spread out' and on the set $\{f\leq S\}$ to be sufficiently close to $\{f=0\}$. Thus for infinite time, we visit the set $\{f<\eps'\}$ almost surely. 

Once we enter $\{f<\eps'\}$, we remain trapped in the set $\{f<\eps\}$ with high probability by Theorem \ref{theorem local convergence}, and in that case, we almost surely observe that $f\to 0$ at a linear rate. Even if we leave from $\{f<\eps\}$, we almost surely visit again. The probability to escape infinitely often vanishes, so we expect that $f(\theta_t)\leq \rho_\eta^{t-T}\,f(\theta_T)$ for all $t\geq T$ for almost every realization of SGD for some random time $T$.  
\end{proof}

The full proof is given in Appendix \ref{appendix global convergence}. With a variation of the proof of Corollary \ref{corollary convergence to minimum lojasiewicz}, the following can be obtained. Note that the rate of convergence $\rho_\eta$ depends only on the \L{}ojasiewicz constant on the set where $f$ is small. 

\begin{corollary}\label{corollary global convergence argument}
Assume that $f, \eta$ and $g$ are like in Theorem \ref{theorem global convergence}. Then $\theta_t$ converges to a minimizer of $f$ almost surely.
\end{corollary}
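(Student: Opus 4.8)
The plan is to mimic the summable-increments argument of Corollary \ref{corollary convergence to minimum lojasiewicz}, but to carry it out \emph{pathwise} rather than in $L^2$, since Theorem \ref{theorem global convergence} only furnishes an almost sure geometric decay of $f(\theta_t)$ and gives no control on $\E[f(\theta_t)]$ (the random prefactor may well have infinite expectation). Concretely, I would write $\theta_t = \theta_0 + \sum_{i=1}^{t}(\theta_i - \theta_{i-1})$ with increments $\theta_{i}-\theta_{i-1} = -\eta\, g(\theta_{i-1},\xi_{i-1})$, and aim to show that $\sum_{t} |g(\theta_t,\xi_t)| < \infty$ almost surely. This makes $\{\theta_t\}$ a Cauchy sequence on a full-measure event, hence convergent to some $\theta_\infty$; continuity of $f$ together with $f(\theta_t)\to 0$ then forces $f(\theta_\infty)=0$, i.e. $\theta_\infty \in N$ is a global minimizer of $f$.

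To bound the increments I would use the structural form of the noise, $g(\theta,\xi) = \nabla f(\theta) + \sqrt{\sigma f(\theta)}\,Y_{\theta,\xi}$, so that $|g(\theta_t,\xi_t)| \le |\nabla f(\theta_t)| + \sqrt{\sigma f(\theta_t)}\,|Y_{\theta_t,\xi_t}|$. The gradient part is controlled by the elementary bound $|\nabla f|^2 \le 2C_L f$, valid for non-negative functions with one-sided Lipschitz gradient (Lemma \ref{lemma bounded gradient}), giving $|\nabla f(\theta_t)| \le \sqrt{2C_L f(\theta_t)}$. On the almost sure event $A = \{\limsup_t f(\theta_t)/\rho^t < \infty\}$ supplied by Theorem \ref{theorem global convergence}, we have $f(\theta_t) \le Z\rho^t$ for all large $t$ with a random but pathwise-finite constant $Z$, so $\sum_t \sqrt{f(\theta_t)} \le \sqrt{Z}\sum_t \rho^{t/2} < \infty$ and the whole gradient contribution is summable on $A$.

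The noise part is where the argument genuinely departs from the $L^2$ proof. On $A$ we again have $\sqrt{f(\theta_t)}\,|Y_{\theta_t,\xi_t}| \le \sqrt{Z}\,\rho^{t/2}|Y_{\theta_t,\xi_t}|$, and since $\sqrt{Z}$ is a pathwise constant it suffices to show $W := \sum_t \rho^{t/2}|Y_{\theta_t,\xi_t}| < \infty$ almost surely. Here I would first extract from the ML noise bound $\E_\xi[|g-\nabla f|^2]\le \sigma f$ the uniform second moment estimate $\E_\xi[|Y_{\theta,\xi}|^2] \le 1$, whence $\E[|Y_{\theta_t,\xi_t}|]\le 1$ after conditioning on the $\sigma$-algebra $\mathcal F_{t-1}$ generated by $\theta_0,\xi_0,\dots,\xi_{t-1}$ and using that $\xi_t$ is independent of $\theta_t$. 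Tonelli then gives $\E[W] = \sum_t \rho^{t/2}\,\E[|Y_{\theta_t,\xi_t}|] \le \sum_t \rho^{t/2} < \infty$, so $W < \infty$ almost surely. Intersecting this full-measure event with $A$ yields $\sum_t|g(\theta_t,\xi_t)|<\infty$ almost surely, which completes the plan.

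The main obstacle is precisely the point just finessed: unlike Corollary \ref{corollary convergence to minimum lojasiewicz}, there is no geometric decay of $\E[f(\theta_t)]$ to lean on, so the random decay constant $Z$ and the noise sequence $\{Y_{\theta_t,\xi_t}\}$ are coupled and cannot both be handled in expectation simultaneously. The resolution is to separate them: the decay constant $Z$ enters only as a pathwise multiplicative factor $\sqrt{Z}$ and is discarded, while the residual series $\sum_t \rho^{t/2}|Y_{\theta_t,\xi_t}|$ depends only on the uniformly bounded conditional second moments of the noise and is summable in expectation. One should finally check that a single $\rho \in (\rho_\eta,1)$ may be fixed once and for all so that both the decay bound from Theorem \ref{theorem global convergence} and the summability $\sum_t \rho^{t/2}<\infty$ hold for that same $\rho$, which is immediate.
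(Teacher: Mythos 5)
Your proof is correct, but it reaches the conclusion by a different technical route than the paper. The paper also mimics the summable-increment argument of Corollary \ref{corollary convergence to minimum lojasiewicz}, but it stays in (restricted) $L^2$: it introduces the events $U_T = \{f(\theta_t)\leq \beta^t \text{ for all } t\geq T\}$ with $\beta\in(\rho_\eta,1)$, on which the random decay prefactor is replaced by the deterministic bound $\beta^t$, proves
\[
\E\big[\,|\theta_{t+1}-\theta_t|^2\, 1_{U_T}\big] \leq \eta^2\left(2C_L+\sigma\right)\beta^t,
\]
deduces almost everywhere convergence on each $U_T$ exactly as in Corollary \ref{corollary convergence to minimum lojasiewicz}, and concludes from $\P\big(\bigcup_{T} U_T\big)=1$, which is what Theorem \ref{theorem global convergence} supplies. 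You instead work entirely pathwise: the random constant $Z$ is kept as a multiplicative factor and decoupled from the noise series $\sum_t \rho^{t/2}|Y_{\theta_t,\xi_t}|$, which is then controlled in expectation by Tonelli using the uniformly bounded conditional moments of $Y$. So the coupling problem you correctly identify (the random decay constant and the noise cannot both be treated in expectation) is resolved by truncation to events where the constant becomes deterministic in the paper, versus pathwise separation of the constant from a series summable in expectation in your argument. Your route has a genuine advantage: $1_{U_T}$ is not $\F_t$-measurable (the event constrains the whole future trajectory), so the paper's variance decomposition under this indicator strictly speaking requires passing to the adapted truncation $\{f(\theta_s)\leq\beta^s,\ T\leq s\leq t\}$ and handling the cross term; your pathwise split avoids any adaptedness issue of this kind. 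The price is that you invoke the structural form $g = \nabla f + \sqrt{\sigma f}\,Y$ with controlled $Y$, but this is an explicit hypothesis of Theorem \ref{theorem global convergence}, so nothing is lost.

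One small point to tighten: the extraction of $\E_\xi\big[|Y_{\theta,\xi}|^2\big]\leq 1$ from the ML bound $\E_\xi\big[|g-\nabla f|^2\big]\leq \sigma f$ is only forced at points where $f(\theta)>0$. Where $f(\theta_t)=0$ the noise contribution $\sqrt{\sigma f(\theta_t)}\,|Y_{\theta_t,\xi_t}|$ to the increment vanishes anyway, so you should insert the $\F_t$-measurable indicator $1_{\{f(\theta_t)>0\}}$ into the definition of $W$; the conditional estimate $\E\big[1_{\{f(\theta_t)>0\}}|Y_{\theta_t,\xi_t}|\,\big|\,\F_t\big]\leq 1$ then holds and the Tonelli step goes through unchanged.
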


\begin{remark}
Our results suggest that, in a fairly general class of functions, SGD with ML noise converges to a global minimizer exponentially fast (or at least that the objective function decays exponentially along SGD iterates). However, it may take a very long time to reach the set in which exponential convergence is achieved, especially if the dimension $m$ of the parameter space and the co-dimension $m-n$ of the minimizer manifold $N$ are both high. 
\end{remark}

We give a few simple examples of situations in which Theorem \ref{theorem global convergence} can in fact be applied. Despite its weaknesses, we believe this mechanism to be a major driving factor behind the success of SGD in the machine learning of overparametrized neural networks.

\begin{remark}
If the function $f$ satisfies the conditions of Theorem \ref{theorem global convergence}, noise of the form
\[
g(\theta,\xi) = \nabla f(\theta) + \sqrt{\sigma\,f(\theta)}\,Y_\xi
\]
is admissible, where where $Y\sim \mathcal N(0,I)$ is standard Gaussian noise.
\end{remark}

\begin{example}
The following functions satisfy the conditions of Theorem \ref{theorem global convergence}, but do not satisfy a global \L{}ojasiewicz inequality:
\begin{enumerate}
\item $f(x,y) = \sin(x) + \sin(y)+2$. The set of minimizers is a lattice, $f$ is bounded, and every minimizer is non-degenerate.
\item $f_\eps(x) = x^2 + 2\eps\,\sin^2(x/\eps)$ for any fixed $\eps>0$. The only minimizer is at $x=0$. Far away from the origin, the oscillatory perturbation becomes negligible in both $f_\eps$ and $f_\eps'$.
\end{enumerate}
More generally, if $f$ is a perturbation of a quadratic form in the following sense, then Theorem \ref{theorem global convergence} applies:
\begin{itemize}
	\item $f\geq 0$.
	\item The set $\{f=0\}$ is a finite union of disjoint compact $C^2$-manifolds $N_1, \dots, N_k$ (potentially of different dimensions). If $\theta\in N_k$, then $D^2f(\theta)$ has rank $m-\dim(N_k)$ at $x$ and the smallest non-zero singular value of $D^2f$ is bounded away from zero.
	\item The perturbation becomes negligible at infinity, i.e. there exists a positive definite symmetric matrix $A$ such that
	\[
	\lim_{|\theta|\to\infty}\frac{f(\theta)}{\frac12\,\theta^TA\theta} = 1, \qquad \lim_{|\theta|\to\infty}\frac{|\nabla f(\theta) - A\theta|}{|A\theta|} = 0.
	\]
\end{itemize}
Also the first example could be generalized to small perturbations of periodic functions with non-degenerate minimizers.
\end{example}

\begin{example}
The following functions {\em do not} satisfy the conditions of Theorem \ref{theorem global convergence} since there are low energy points arbitrarily far away from the set of global minimizers:
\[
f_1, f_2:\R\to\R, \qquad f_1(x) = 1+ \sin^2(x) - \frac1{x^2+1}, \qquad f_2(x) = 1+ x^2 \sin^2(x) - \frac1{x^2+1}.
\]
In this one-dimensional setting, that problem could presumably be solved by appealing to the recurrence of random walks, but the situation is hopeless in analogous constructions in dimension three or higher.
\end{example}

Let us consider the conditions placed in Theorem \ref{theorem global convergence} in the context of deep learning.

\begin{remark}
\begin{itemize}
\item As the parameter gradient of a deep neural network with respect to deep layer coefficients is linear in the final layer coefficients (see Remark \ref{remark gradients linear}), the Lipschitz-condition \eqref{eq lojasiewicz small} cannot hold globally. If the activation function is smooth, it holds locally.
\item Generically, the Hessian of $L$ has maximal rank on $N$ due to Theorem \ref{theorem cooper}, so the local \L{}ojasiewicz inequality \eqref{eq lojasiewicz small} should hold at least locally on $\R^m$. 
\item Due to Lemma \ref{lemma no high energy critical points}, the objective function has no critical points where it is large. While \eqref{eq lojasiewicz large} may not be satisfied as such, a condition of this type does not violate the spirit of the minimization problem under consideration.
\item It is currently unknown to us whether the condition \eqref{eq low objective tight} that the set of low objective values is within a bounded tube around $N$ is generically satisfied or not. We offer the following rationale: Let $\theta$ such that $L(\theta)\leq S$. Choose $n' = \frac{m-nk}k$ additional points $x_{n+1}, \dots, x_{n+n'}$ (assuming that $n'$ is an integer) and set $\hat y_i = h(\theta,x_i)$ for $1\leq i \leq n+n'$. If the map
\[
\widehat \Phi:\R^m\to \R^{nk+n'k} = \R^m, \qquad \widehat\Phi(\theta) = \big(h(\theta, x_1), \dots, h(\theta, x_n)\big)
\]
has a Lipschitz-continuous inverse, then there exists $\theta'$ such that 
\begin{enumerate}
\item $\Phi(\theta', x_i) = y_i$ for $1\leq i\leq n$ and $\Phi(\theta', x_i) = \hat y_i$ for $n+1\leq i \leq n+n'$.
\item $\theta$ and $\theta'$ are somewhat close as \begin{align*}
|\theta-\theta'| &\leq C\,|y-y'| 
	= C\sqrt{\sum_{i=1}^n |y_i-\hat y_i|^2}
	=C\sqrt{\sum_{i=1}^n |y_i-h(\theta,x_i)|^2}\\
	& = C\sqrt{n\,L(\theta)} \leq C\sqrt{nS}.
\end{align*}
\end{enumerate}
Thus a condition of the type \eqref{eq low objective tight} may hold for nicely parametrized models, but the constant is expected to be somewhat large if the data set is large.

\item The type of noise specified in \eqref{eq spread out noise} is unrealistic in overparametrized learning models as it is omni-directional, whereas realistic noise is necessarily low rank due to Lemma \ref{lemma low rank}. The fact that the noise is `spread out' is needed to guarantee that we may randomly `jump' into the set $\{f<\eps\}$ from anywhere in the set $\{f\leq S\}$. Identifying more realistic geometric conditions with similar guarantees remains an open problem.
\end{itemize}
\end{remark}

\begin{remark}
Theorem \ref{theorem global convergence}, were it to apply, could be viewed as a negative result in the context of implicit regularization in machine learning. We can decompose the mean squared error population risk functional as
\begin{align*}
\Risk(h) &= \int_{\R^d} \big|h(x)-y\big|^2 \,\P(\d x\otimes \d y)\\
	&= \int_{\R^d} \big|h(x)-h^*(x)\big|^2 \,\P(\d x\otimes \d y) + \int_{\R^d} \big|h^*(x)-y\big|^2 \,\P(\d x\otimes \d y)\\
	&= \int_{\R^d} \big|h(x)-h^*(x)\big|^2 \,\P(\d x\otimes \d y) + C_0
\end{align*}
where $h^*(x) = \E[y|x]$ is the risk minimizer in the class of measurable functions and $C_0 = \Risk(h^*)$ is the minimum Bayes risk. If the distribution $\P$ admits any uncertainty in the output $y$ given observations $x$, then $C_0>0$. However, in overparametrized learning the empirical risk
\[
\widehat\Risk_n(h) = \frac1n\sum_{i=1}^n \big|h(x_i)-y_i\big|^2
\]
can be zero. Thus if $C_0$ is large and the parameters of a model are trained by SGD with small positive learning rate, then after a long time, we expect $\Risk$ and $\widehat\Risk_n$ to differ greatly at the parameters $\theta_t$. To avoid overfitting the training data, we therefore require an early stopping strategy. If the observations are virtually noiseless (which is the case for benchmark image classification problems), then this rationale may not apply, and SGD may perform well without early stopping.
\end{remark}

\section{A numerical illustration}\label{section numerical}

\begin{figure}
\includegraphics[width = 0.31\textwidth]{./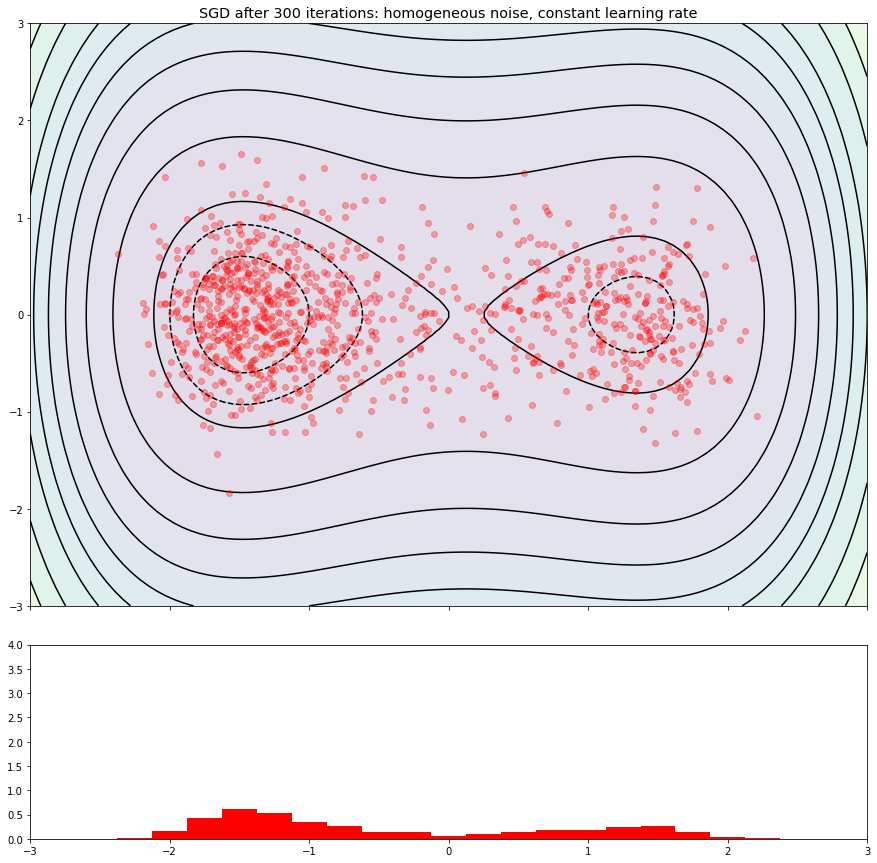}\hfill 
\includegraphics[width = 0.31\textwidth]{./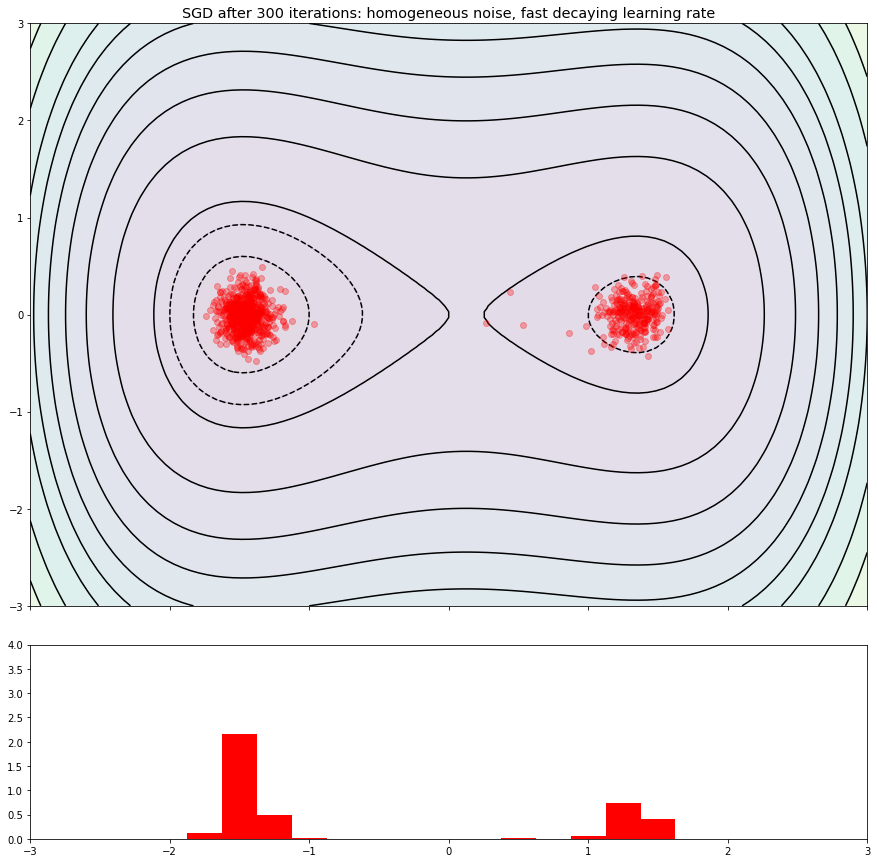}\hfill 
\includegraphics[width = 0.31\textwidth]{./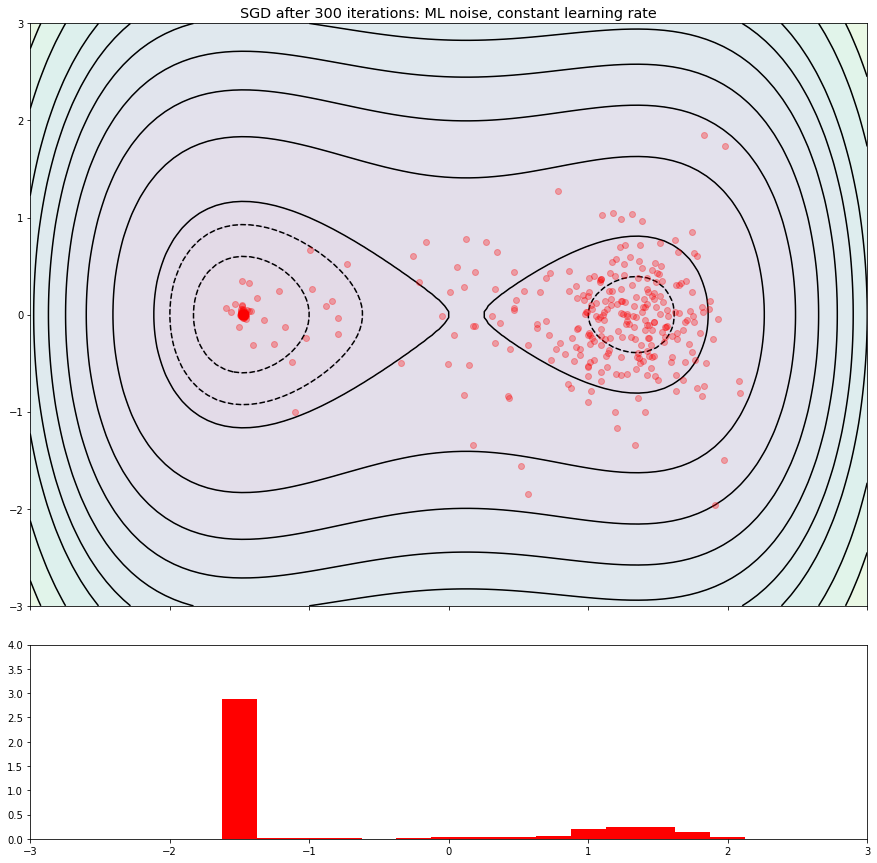}

\vspace{4mm}
\includegraphics[width = 0.31\textwidth]{./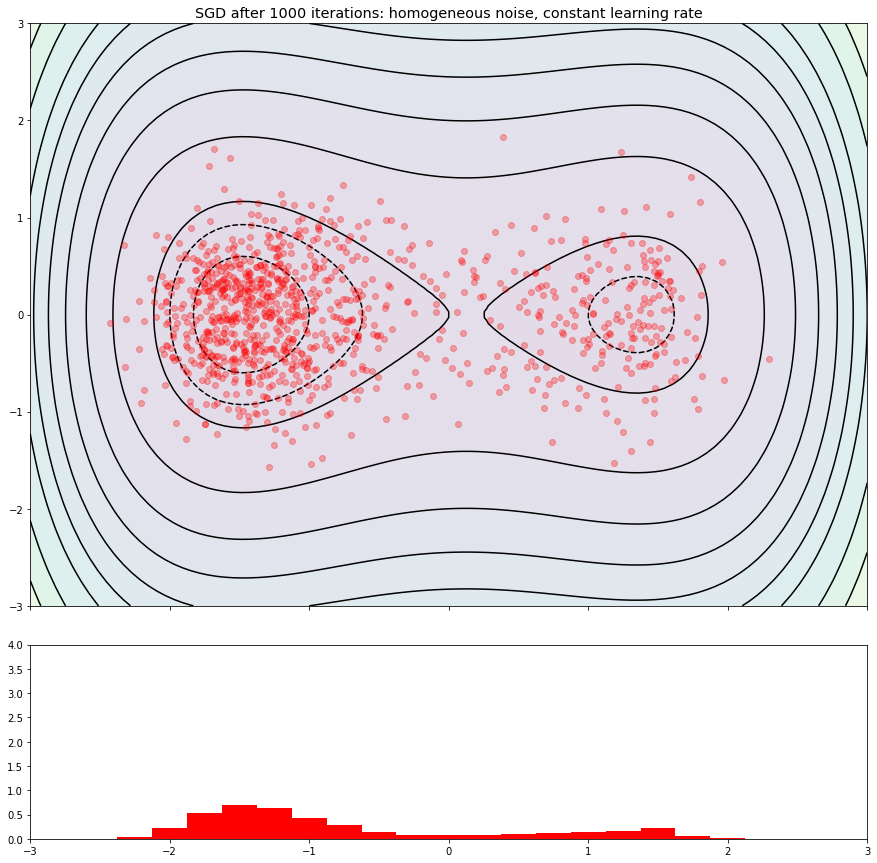}\hfill 
\includegraphics[width = 0.31\textwidth]{./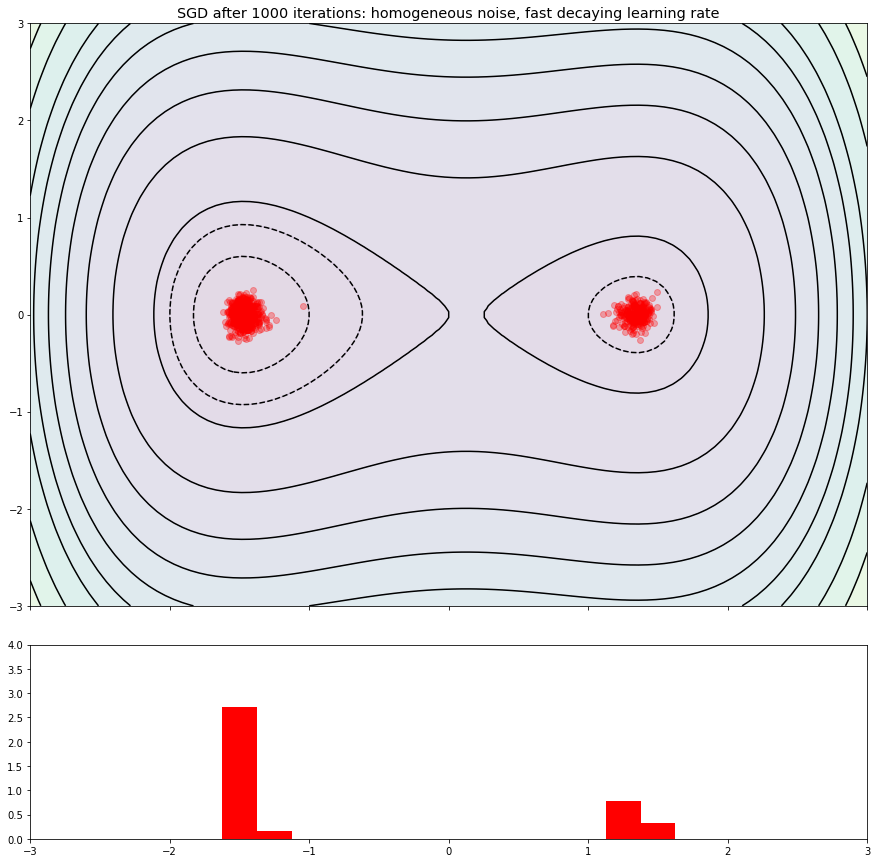}\hfill 
\includegraphics[width = 0.31\textwidth]{./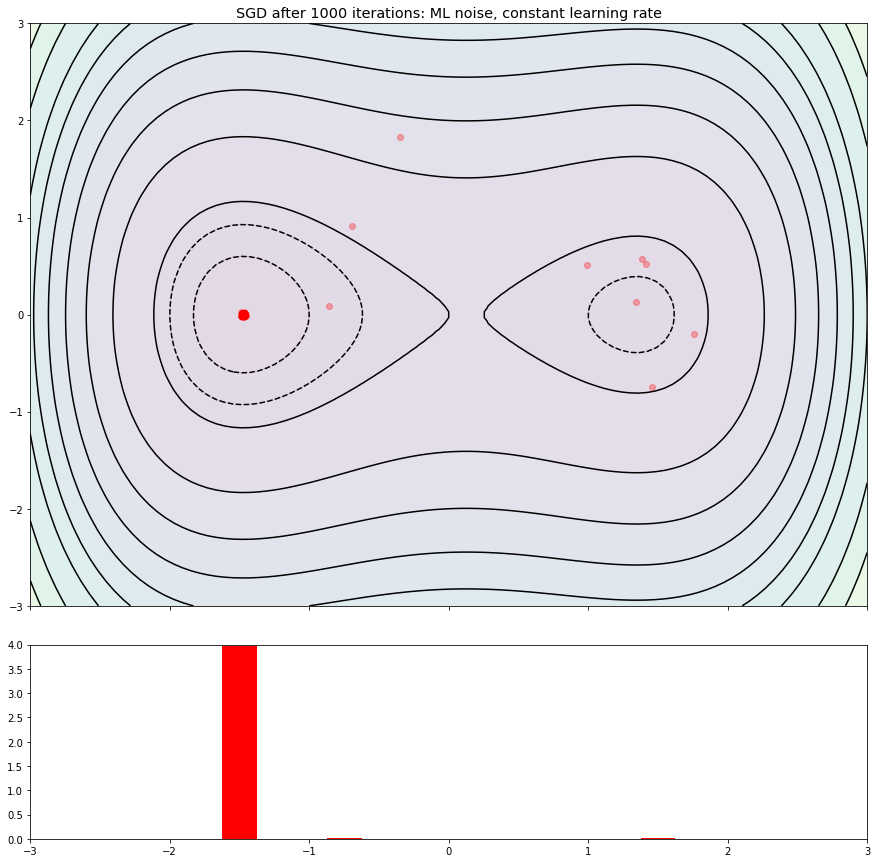}
\caption{SGD with homogeneous noise and constant learning rate (left) or decaying learning rate (middle) vs SGD with ML noise (right) after 300 iterations (top row) and after 1000 iterations (bottom row). For 1000 realizations of SGD, we mark the final position by a red dot in the energy landscape and plot a histogram density estimator of the $x$-coordinate below. 
\label{figure comparison}}
\end{figure}

We compare toy models for stochastic gradient descent
\begin{equation}\label{eq comparing sgds}
x_{n+1} = x_n - \eta \big(\nabla f(x_n) + \sigma Y_n\big), \qquad \tilde x_{n+1} = \tilde x_n - \eta \big(\nabla f(\tilde x_n) + \widetilde \sigma \sqrt{f(\tilde x_n)} \,Y_n\big)
\end{equation}
where $\eta$ is a learning rate, $\sigma, \widetilde \sigma>0$ control the noise level, and $Y_n$ is an is a standard Gaussian. For a fair comparison, $\sigma$ and $\widetilde\sigma$ have to be chosen such that the noise has the same magnitude at a certain point of interest.
Consider the function
\[
f:\R^2\to \R, \qquad f(x,y) = \frac{x^4}4 - x^2 +\alpha x + y^2 + c_\alpha
\]
where $\alpha \geq 0$ is a parameter and $c_\alpha$ is chosen such that $\inf_{(x,y)} f(x,y) = 0$. For any $\alpha\in(0,1]$, the function $f$ has two distinct local minima at $(x_-,0)$ and $(x_+,0)$ where $x_- < 0 < x_+$. Clearly, we have $f(x_-,0) < f(x_+,0)$. 

We considered 1000 realizations of SGD according to the schemes \eqref{eq comparing sgds} with $\alpha = 0.25$, learning rate $\eta = 0.05$ and noise $\widetilde\sigma = 4.0$. The parameter $\sigma$ was chosen such that the noise at the ridge between the minima of $f$ has the same intensity for both algorithms to give them equal opportunity to escape the local minimum. In all runs, the initial condition was $(x_0, y_0) = (3,0)$.

In Figure \ref{figure comparison}, we see that SGD with ML type noise and constant learning rate approaches the global minimum rapidly whereas SGD with homogeneous Gaussian noise and constant learning rate forms a cloud around the minima which has higher density at the global minimum. SGD with homogeneous noise and decaying learning rate forms more focussed clouds around the minima, but about 25\% of trajectories do not escape the local minimum. The decaying learning rate is chosen as 
\begin{equation}\label{eq learning rates}
\eta_t = t_0\,\frac{\eta}{t+t_1}, \qquad t_0 = 20, \quad t_1=5.
\end{equation}
We make similar observations if the learning rate decays of the same order but less rapidly with $t_0 =200$ and $t_1=50$, but with less focussed point clouds.

After 300 iterations, the point cloud of SGD with homogeneous noise and positive learning rate is comparable to that of SGD with ML noise at the local minimum. This seems to be due to the fact that both start closer to the local minimum and are equally likely to escape the local minimum due to our scaling of the noise.

Once a trajectory of SGD with ML noise enters the potential well of the global minimum, it is unlikely to escape, whereas SGD with homogeneous Gaussian noise exchanges particles back and forth between the two wells. Once within the potential well of the global minimum, a trajectory of SGD with ML noise converges to the global minimum rapidly. Therefore, after 1000 iterations, trajectories of SGD with ML noise are almost guaranteed to have found the global minimum, whereas trajectories of SGD with homogeneous noise are somewhat likely to be found in the potential well of the local minimum, independently of whether the learning rate is constant or decaying.

\section{Conclusion}

More realistic abstract models for the noise of stochastic gradient descent in machine learning may explain help explain some of the success that SGD has enjoyed in this specific non-convex optimization task. We believe our results are indicate why SGD often finds global minima/low loss local minima rather than positive loss local minima.

Our results give an indication that it may be admissible in these applications to leave the learning rate uniformly positive. This may be particularly relevant for online learning tasks, where new data is added to the model. Decaying learning rates significantly diminish the effect that new data can have in finite time.
The reason that the learning rate should be small in these optimization tasks is related both to the roughness of the loss landscape and the presence of stochastic noise. The estimates resemble non-stochastic gradient descent more closely than SGD with homogeneous noise estimates.

There are several pressing questions that were not explored in this work, both on the theoretical and practical side.

\begin{enumerate}
\item The biggest deficiency of our global convergence result is the reliance on omni-directional noise while realistic noise in machine learning SGD is low rank, at least for overparametrized models. Understanding the geometry of realistic noise and its impact on the convergence of SGD is an important open problem.

\item The objective functions we employ serve as toy models for the energy landscape of $L^2$-regression problems in deep learning. The energy landscape of classification problems and the associated gradient noise are quite different as noted in Example \ref{example loss functions} and Lemma \ref{lemma noise bounds}. Understanding the behavior of SGD in classification-like loss landscapes and noise models remains an open problem.

\item While our assumptions are fairly general in the context of optimization theory, they likely are too restrictive in the setting of deep learning. Understanding the geometry of deep learning problem, and whether they allow for similar results, is the next important step in this approach to the analysis of SGD. 

\item The Lipschitz-constant of the gradient of the objective function is not generally uniformly controlled over the parameter space. If a minimum is too steep, gradient descent may escape from it exponentially fast due to finite step size effects. This mechanism cannot be captured by continuum models and renders the positive step-size analysis invalid in the context of deep learning, unless we assume or enforce confinement to a bounded domain by other means.

\item In practice we do not use random selection SGD (choosing a batch of data samples randomly from the training set), but random pass SGD (passing through the entire training set batch by batch before repeating the same data point). The random directions in consecutive iterations are therefore not truly iid, and the noise may be less oscillatory in practice than random selection estimation would suggest. We believe the impact of this difference to be negligible for large data sets, but a rigorous connection has not been established to the best of our knowledge.

\item Typically, advanced optimizers like gradient descent with momentum, Nesterov's accelerated gradient descent or ADAM are used  in deep learning with stochastically estimated gradients. It remains to be seen whether noise of ML type allows for stronger estimates and global convergence guarantees also in that setting.

\item The main goal of this article was to understand SGD in toy models inspired by problems of deep learning. Another interesting direction is whether artificially perturbing (exact or estimated) gradients by omni-directional noise of ML type can improve the convergence of a gradient descent type optimization algorithm.
\end{enumerate}

\bibliographystyle{../../alphaabbr}
\bibliography{../../NN_bibliography}

\begin{thebibliography}{JKNvW21}

\bibitem[AZ17]{allen2017natasha}
Z.~Allen-Zhu.
\newblock Natasha 2: Faster non-convex optimization than sgd.
\newblock {\em arXiv preprint arXiv:1708.08694}, 2017.

\bibitem[BAWA18]{bernstein2018convergence}
J.~Bernstein, K.~Azizzadenesheli, Y.-X. Wang, and A.~Anandkumar.
\newblock Convergence rate of sign stochastic gradient descent for non-convex
  functions.
\newblock 2018.

\bibitem[BCN18]{bottou2018optimization}
L.~Bottou, F.~E. Curtis, and J.~Nocedal.
\newblock Optimization methods for large-scale machine learning.
\newblock {\em Siam Review}, 60(2):223--311, 2018.

\bibitem[BM13]{bach2013non}
F.~Bach and E.~Moulines.
\newblock Non-strongly-convex smooth stochastic approximation with convergence
  rate o (1/n).
\newblock {\em arXiv preprint arXiv:1306.2119}, 2013.

\bibitem[Coo18]{cooper2018loss}
Y.~Cooper.
\newblock The loss landscape of overparameterized neural networks.
\newblock {\em arXiv:1804.10200 [cs.LG]}, 2018.

\bibitem[DBBU20]{defossez2020convergence}
A.~D{\'e}fossez, L.~Bottou, F.~Bach, and N.~Usunier.
\newblock On the convergence of adam and adagrad.
\newblock {\em arXiv preprint arXiv:2003.02395}, 2020.

\bibitem[DDB17]{dieuleveut2017bridging}
A.~Dieuleveut, A.~Durmus, and F.~Bach.
\newblock Bridging the gap between constant step size stochastic gradient
  descent and markov chains.
\newblock {\em arXiv preprint arXiv:1707.06386}, 2017.

\bibitem[DK21]{dereich2021convergence}
S.~Dereich and S.~Kassing.
\newblock Convergence of stochastic gradient descent schemes for
  lojasiewicz-landscapes.
\newblock {\em arXiv preprint arXiv:2102.09385}, 2021.

\bibitem[FGJ20]{fehrman2020convergence}
B.~Fehrman, B.~Gess, and A.~Jentzen.
\newblock Convergence rates for the stochastic gradient descent method for
  non-convex objective functions.
\newblock {\em Journal of Machine Learning Research}, 21, 2020.

\bibitem[GL13]{ghadimi2013stochastic}
S.~Ghadimi and G.~Lan.
\newblock Stochastic first-and zeroth-order methods for nonconvex stochastic
  programming.
\newblock {\em SIAM Journal on Optimization}, 23(4):2341--2368, 2013.

\bibitem[GLZ16]{ghadimi2016mini}
S.~Ghadimi, G.~Lan, and H.~Zhang.
\newblock Mini-batch stochastic approximation methods for nonconvex stochastic
  composite optimization.
\newblock {\em Mathematical Programming}, 155(1-2):267--305, 2016.

\bibitem[HIMM19]{hsieh2019convergence}
Y.-G. Hsieh, F.~Iutzeler, J.~Malick, and P.~Mertikopoulos.
\newblock On the convergence of single-call stochastic extra-gradient methods.
\newblock {\em arXiv preprint arXiv:1908.08465}, 2019.

\bibitem[HIMM20]{hsieh2020explore}
Y.-G. Hsieh, F.~Iutzeler, J.~Malick, and P.~Mertikopoulos.
\newblock Explore aggressively, update conservatively: Stochastic extragradient
  methods with variable stepsize scaling.
\newblock {\em arXiv preprint arXiv:2003.10162}, 2020.

\bibitem[JKNvW21]{jentzen2021strong}
A.~Jentzen, B.~Kuckuck, A.~Neufeld, and P.~von Wurstemberger.
\newblock Strong error analysis for stochastic gradient descent optimization
  algorithms.
\newblock {\em IMA Journal of Numerical Analysis}, 41(1):455--492, 2021.

\bibitem[Kle06]{klenke2006wahrscheinlichkeitstheorie}
A.~Klenke.
\newblock {\em Wahrscheinlichkeitstheorie}, volume~1.
\newblock Springer, 2006.

\bibitem[KNS16]{karimi2016linear}
H.~Karimi, J.~Nutini, and M.~Schmidt.
\newblock Linear convergence of gradient and proximal-gradient methods under
  the polyak-{\l}ojasiewicz condition.
\newblock In {\em Joint European Conference on Machine Learning and Knowledge
  Discovery in Databases}, pages 795--811. Springer, 2016.

\bibitem[K{\"o}n13]{konigsberger2013analysis}
K.~K{\"o}nigsberger.
\newblock {\em Analysis 2}.
\newblock Springer-Verlag, 2013.

\bibitem[KY03]{kushner2003stochastic}
H.~Kushner and G.~G. Yin.
\newblock {\em Stochastic approximation and recursive algorithms and
  applications}, volume~35.
\newblock Springer Science \& Business Media, 2003.

\bibitem[LTW17]{li2017stochastic}
Q.~Li, C.~Tai, and E.~Weinan.
\newblock Stochastic modified equations and adaptive stochastic gradient
  algorithms.
\newblock In {\em International Conference on Machine Learning}, pages
  2101--2110. PMLR, 2017.

\bibitem[MB11]{moulines2011non}
E.~Moulines and F.~Bach.
\newblock Non-asymptotic analysis of stochastic approximation algorithms for
  machine learning.
\newblock {\em Advances in neural information processing systems}, 24:451--459,
  2011.

\bibitem[MHKC20]{mertikopoulos2020almost}
P.~Mertikopoulos, N.~Hallak, A.~Kavis, and V.~Cevher.
\newblock On the almost sure convergence of stochastic gradient descent in
  non-convex problems.
\newblock {\em arXiv preprint arXiv:2006.11144}, 2020.

\bibitem[NWS14]{needell2014stochastic}
D.~Needell, R.~Ward, and N.~Srebro.
\newblock Stochastic gradient descent, weighted sampling, and the randomized
  kaczmarz algorithm.
\newblock {\em Advances in neural information processing systems},
  27:1017--1025, 2014.

\bibitem[Pat20]{patel2020stopping}
V.~Patel.
\newblock Stopping criteria for, and strong convergence of, stochastic gradient
  descent on bottou-curtis-nocedal functions.
\newblock {\em arXiv preprint arXiv:2004.00475}, 2020.

\bibitem[RM51]{robbins1951stochastic}
H.~Robbins and S.~Monro.
\newblock A stochastic approximation method.
\newblock {\em The annals of mathematical statistics}, pages 400--407, 1951.

\bibitem[RSS11]{rakhlin2011making}
A.~Rakhlin, O.~Shamir, and K.~Sridharan.
\newblock Making gradient descent optimal for strongly convex stochastic
  optimization.
\newblock {\em arXiv preprint arXiv:1109.5647}, 2011.

\bibitem[Sar42]{sard1942measure}
A.~Sard.
\newblock The measure of the critical values of differentiable maps.
\newblock {\em Bulletin of the American Mathematical Society}, 48(12):883--890,
  1942.

\bibitem[SB19]{skorokhodov2019loss}
I.~Skorokhodov and M.~Burtsev.
\newblock Loss landscape sightseeing with multi-point optimization.
\newblock {\em arXiv preprint arXiv:1910.03867}, 2019.

\bibitem[SK20]{stich2020error}
S.~U. Stich and S.~P. Karimireddy.
\newblock The error-feedback framework: Better rates for sgd with delayed
  gradients and compressed updates.
\newblock {\em Journal of Machine Learning Research}, 21:1--36, 2020.

\bibitem[Sti19]{stich2019unified}
S.~U. Stich.
\newblock Unified optimal analysis of the (stochastic) gradient method.
\newblock {\em arXiv preprint arXiv:1907.04232}, 2019.

\bibitem[VBS19]{pmlr-v89-vaswani19a}
S.~Vaswani, F.~Bach, and M.~Schmidt.
\newblock Fast and faster convergence of sgd for over-parameterized models and
  an accelerated perceptron.
\newblock In K.~Chaudhuri and M.~Sugiyama, editors, {\em Proceedings of the
  Twenty-Second International Conference on Artificial Intelligence and
  Statistics}, volume~89 of {\em Proceedings of Machine Learning Research},
  pages 1195--1204. PMLR, 16--18 Apr 2019.

\bibitem[Woj21]{continuous_sgd}
S.~Wojtowytsch.
\newblock Stochastic gradient descent with noise of machine learning type. part
  ii: Continuous time analysis.
\newblock {\em arXiv:2106.02588 [cs.LG]}, 2021.

\bibitem[WWB19]{ward2019adagrad}
R.~Ward, X.~Wu, and L.~Bottou.
\newblock Adagrad stepsizes: Sharp convergence over nonconvex landscapes.
\newblock In {\em International Conference on Machine Learning}, pages
  6677--6686. PMLR, 2019.

\bibitem[XWW20]{xie2020linear}
Y.~Xie, X.~Wu, and R.~Ward.
\newblock Linear convergence of adaptive stochastic gradient descent.
\newblock In {\em International Conference on Artificial Intelligence and
  Statistics}, pages 1475--1485. PMLR, 2020.

\end{thebibliography}

\appendix

\section{On the non-convexity of objective functions in deep learning}\label{appendix non-convexity}

Under general conditions, energy landscapes in machine learning regression problems {\em have to} be non-convex. The following result follows from Theorem \ref{theorem cooper}.

\begin{corollary}
Assume that $h:\R^m\times\R^d\to\R$ is a parameterized function model, which is at least $C^2$-smooth in $\theta$ for fixed $x$. Let $L_y(\theta) = \frac1n\sum_{i=1}^n \big(h(\theta,x_i) - y_i\big)^2$ where $y=(y_1,\dots,y_n)\in\R^n$. 

\begin{enumerate}
\item If $L_y$ is convex for every $y\in\R^n$, the map $\theta\mapsto h(\theta,x_i)$ is linear for all $x_i$.

\item Assume that $L_y(\theta) =0$ and that there exists $1\leq j \leq n$ such that $D^2h(\theta,x_j)$ has rank $k>n$. Then for every $\eps>0$, there exists $y'\in \R^m$ such that $|y-y'|<\eps$ and $D^2L_{y'}(\theta)$ has a negative eigenvalue.

\item Assume that $L_y(\theta) =0$ and that there exists $1\leq j \leq n$ such that $D^2h(\theta,x_j)$ has rank $k>n$. Assume furthermore that the gradients $\nabla_\theta h(\theta,x_1), \dots, \nabla_\theta h(\theta, x_n)$ are linearly independent. Then for every $\eps>0$, there exists $\theta'\in \R^m$ such that $|\theta-\theta'|<\eps$ and $D^2L_{y}(\theta')$ has a negative eigenvalue.
\end{enumerate}
\end{corollary}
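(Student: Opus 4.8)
The common starting point for all three parts is the Hessian formula
\[
D^2 L_y(\theta) = \frac2n\sum_{i=1}^n \nabla_\theta h(\theta,x_i)\otimes\nabla_\theta h(\theta,x_i) + \frac2n\sum_{i=1}^n\big(h(\theta,x_i)-y_i\big)\,D^2_\theta h(\theta,x_i),
\]
in which the Gram term $M(\theta):=\frac2n\sum_i \nabla_\theta h(\theta,x_i)\otimes\nabla_\theta h(\theta,x_i)$ is positive semidefinite of rank at most $n$, while the second term is an arbitrary symmetric matrix weighted by the residuals $h(\theta,x_i)-y_i$, which we can steer. For part (1) I would argue by contraposition: if $\theta\mapsto h(\theta,x_{i_0})$ fails to be affine, then $D^2_\theta h(\theta_0,x_{i_0})\neq 0$ at some $\theta_0$, and by symmetry there is a unit vector $v$ with $c:=v^TD^2_\theta h(\theta_0,x_{i_0})v\neq 0$. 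Keeping $y_i=h(\theta_0,x_i)$ for $i\neq i_0$ annihilates all residuals but the $i_0$-th, and choosing $y_{i_0}=h(\theta_0,x_{i_0})+\sign(c)\,T$ yields $v^TD^2L_y(\theta_0)v=\frac2n\big[\sum_i(v\cdot\nabla_\theta h(\theta_0,x_i))^2-T|c|\big]$, which is negative for $T$ large; this contradicts convexity of $L_y$ for that $y$. Hence $D^2_\theta h(\cdot,x_i)\equiv 0$ and each $h(\cdot,x_i)$ is affine (linear).

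Parts (2) and (3) both reduce to a linear-algebra statement about $M+\eps B$, where $B:=D^2_\theta h(\theta,x_j)$ and $K:=\ker M$. The direction I need is: if $K\not\subseteq\ker B$, then $M+\eps B$ has a negative eigenvalue for some arbitrarily small $\eps$. To see this, pick $v\in K$ with $Bv\neq 0$ and a vector $z$ with $v^TBz\neq 0$ (e.g.\ $z=Bv$, giving $v^TBz=|Bv|^2>0$), and test with $w=v+tz$. Since $Mv=0$ and $v^TMz=0$, the quadratic form $w^T(M+\eps B)w$ contains a term $2\eps t\,v^TBz$ linear in $t$ that dominates the $O(t^2)$ remainder for $|t|$ small, so a sign choice of $t$ (relative to $\eps$) makes it negative. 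The point to stress is that the naive condition ``$v^TBv\neq 0$ for some $v\in K$'' is genuinely \emph{insufficient}: it can fail while $K\not\subseteq\ker B$ still holds, and one must use the full kernel non-containment together with this second-order, coupling test vector.

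For part (2) the residuals are dialed freely: perturbing a single coordinate, $y'=y-\delta e_j$, gives $D^2L_{y'}(\theta)=M+\frac2n\delta\,B$ with $|y-y'|=|\delta|$ arbitrarily small. The dimension count $\rk(B)>n\Rightarrow \dim\ker B\le m-n-1<m-n\le\dim K$ forces $K\not\subseteq\ker B$, and the claim above delivers the negative eigenvalue.

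For part (3) the residuals can no longer be chosen independently, since moving $\theta$ also moves the Gram term $M(\theta)$. Because the gradients $\nabla_\theta h(\theta,x_i)$ are linearly independent, $\Psi(\theta')=\big(h(\theta',x_1),\dots,h(\theta',x_n)\big)$ is a submersion at $\theta$, so by the implicit function theorem there is a smooth path $\theta'(s)$ with $\theta'(0)=\theta$, $h(\theta'(s),x_i)=y_i$ for $i\neq j$, and $h(\theta'(s),x_j)=y_j+s$; along it $D^2L_y(\theta'(s))=M(\theta'(s))+\frac2n s\,D^2_\theta h(\theta'(s),x_j)$ exactly. The estimate that makes this survive is that the moving Gram term is harmless on $K$: since $v\cdot\nabla_\theta h(\theta'(s),x_i)=O(s)$ for $v\in K$, one gets $v^TM(\theta'(s))v=O(s^2)$, the same order as the effect exploited in part (2). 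I expect the main obstacle to be precisely pushing the part-(2) perturbation argument through this simultaneous $O(s^2)$ motion of the positive-semidefinite part; in particular the degenerate case $v^TBv=0$ with $Bv\neq 0$, where the useful and the obstructing contributions are both second order, will require optimizing a genuinely two-parameter test vector $w=v+\kappa s\,z$ and checking that the resulting quadratic in $\kappa$ attains a strictly negative value.
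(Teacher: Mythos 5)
Parts (1) and (2) of your proposal are correct and essentially coincide with the paper's own argument: (1) is the same contraposition with a single large residual, and (2) is the paper's reduction to a linear-algebra dichotomy for the pencil $M+\delta B$ (the paper isolates this as a lemma: for $A$ positive semidefinite of rank at most $n$ and $B$ symmetric of rank at least $n+1$, one of $A\pm sB$ has a negative eigenvalue). Your formulation via $K=\ker M\not\subseteq\ker B$ with coupling vector $z=Bv$ is an equivalent, slightly cleaner packaging of the paper's two cases, and your remark that the ``naive'' condition can fail is exactly the paper's second case. One small slip: in your coupling display the $\tau$-independent term $\eps\,v^TBv$ is silently dropped, so the argument reads correctly only when $v^TBv=0$; the complementary case needs the sign freedom in $\delta$ (which part (2) does provide).

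Part (3) is where your proposal has a genuine gap, and you flag it yourself: the proof ends with the expectation that ``optimizing a genuinely two-parameter test vector'' will close the degenerate case, but that optimization is precisely the hard step and it does not close as set up. In the degenerate case, $u^TBu=0$ for all $u\in K$ forces, by polarization, $u^TBu'=0$ for all $u,u'\in K$, so any coupling vector $z$ with $v^TBz\neq0$ must lie \emph{outside} $K$. Writing $H(s)=M_s+\tfrac{2s}{n}B_s$ with $M_s=M(\theta'(s))$, $B_s=D^2_\theta h(\theta'(s),x_j)$, and taking $z=Bv$, one then has $z^TM_sz=\Theta(1)$ and $z^TM_sv=O(s)$, so the minimum of $w^TH(s)w$ over $w=v+\tau z$ equals $a-b^2/c$ with $a=v^TH(s)v$, $b=z^TH(s)v$, $c=z^TH(s)z$, i.e.\ (granting enough regularity to expand) $s^2\big[\alpha_v-(\gamma_v+\tfrac2n|Bv|^2)^2/\beta_v\big]+o(s^2)$, where $\alpha_v\ge0$ collects the $O(s^2)$ motion of the Gram term and the drift of $D^2h$ along the path, $\beta_v=z^TM_0z>0$, and $\gamma_v$ is the first-order Gram cross term. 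Negativity of the bracket is an inequality among these constants that you never establish and that nothing in the hypotheses visibly forces for your chosen $v$, $z$, and path (in particular $\gamma_v$ can have either sign and can cancel $\tfrac2n|Bv|^2$); the sign of $s$ cannot rescue it, since the bracket is even in $s$ to leading order. There is a second, regularity-level gap: $h$ is only assumed $C^2$, so $B_s=B+o(1)$ with no rate, and in the degenerate case the term $\tfrac{2s}{n}v^T(B_s-B)v=s\cdot o(1)$ has uncontrolled sign and need not be dominated by any $O(s^2)$ gain, so the expansion cannot be closed at this regularity without a further idea. For what it is worth, the paper's own proof of (3) is also only a sketch at this point: it expands $D^2L_y(\theta+tv)$ as Gram term plus $t\,c\,B+o(t)$, omitting the $O(t)$ variation of the Gram matrix, and concludes ``as before''; that is robust in the non-degenerate case, where the gain is linear in $t$, but it faces the same obstruction in the degenerate case that your proposal does. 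So you have correctly located the crux of part (3), but your proposal does not prove it.
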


The first statement is fairly weak, as the proof requires us to consider the convexity of $L_y$ far away from the minimum. The second statement shows that even close to the minimum, $L_y$ can be non-convex if the function model is sufficiently far from being `low dimensional' -- this statement concerns perturbations in $y$. The third claim is analogous, but concerns perturbations in $\theta$. It is therefore stronger, as it shows that $L_y$ is not convex in any neighborhood of a given point in the set of minimizers. 

\begin{proof}
{\bf First claim.}
Compute 
\begin{align*}
\nabla L_y(\theta) &= \frac2n \sum_{i=1}^n \big(h(\theta,x_i) - y_i\big)\,\nabla_\theta h(\theta,x_i)\\
D^2 L_y(\theta) &= \frac2n \sum_{i=1}^n \left[\nabla h(\theta, x_i) \otimes \nabla h(\theta, x_i) + \big(h(\theta,x_i) - y_i\big)\,D^2h(\theta, x_i)\right].
\end{align*}
If $D^2h(\theta, x_i)\neq 0$, there exists $v\in \R^d$ such that $v^T D^2h(\theta,x_i)v \neq 0$ since the Hessian matrix is symmetric. Thus
\[
\liminf_{|y_i|\to\infty} v^T \,D^2L_y(\theta)\,v = - \lim_{|y_i|\to \infty} y_i\,v^T D^2h(\theta,x_i)\,v = \pm\infty,
\]
meaning that $L_y$ cannot be convex. Thus if $L_y$ is convex for all $y$,  by necessity $D^2h(\theta, x_i)\equiv 0$ for all $i$, i.e.\ $\theta\mapsto h(\theta,x_i)$ is linear.

{\bf Second claim.} Set $y_j = h(\theta, x_j)$ for $j\neq i$ and $y_i = h(\theta,x_i) \pm \eps/2$. Then by a simple result in linear algebra (see Lemma \ref{lemma linear algebra} below), the matrix
\[
D^2L_y(\theta) = \frac2n \sum_{i=1}^n \left[\nabla h(\theta, x_i) \otimes \nabla h(\theta, x_i) \right] + \frac\eps2\,D^2h(\theta, x_i)
\]
has a negative eigenvalue.

{\bf Third claim.} Let $V= \mathrm{span}\{\nabla f(\theta,x_j) : j\neq i\}$. Since the set of gradients is linearly independent, there exists $v$ such that $v\bot V$ but $v^T \nabla f(\theta, x_i) \neq 0$. Thus to leading order
\[
D^2L_y(\theta + tv) = D^2L_y(\theta) = \frac2n \sum_{i=1}^n \left[\nabla h(\theta, x_i) \otimes \nabla h(\theta, x_i) \right] + t \big(\nabla f(\theta,x_i)\cdot v\big) D^2h(\theta, x_i) + o(t).
\]
As before, we conclude that $D^2L_y(\theta+tv)$ has a negative eigenvalue if $t>0$ is small enough.
\end{proof}

\begin{remark}
If $\theta\mapsto h(\theta,x_i)$ is linear, then $D^2h(\theta,x_i)$ vanishes, i.e.\ $\mathrm{rk}(D^2h) =0$. The large discrepancy between the conditions ensuring local convexity and global convexity is in fact necessary. Consider $h(\theta, x_i) = \theta_i$ for all $i\geq 1$ and
\[
h(\theta, x_1) = \theta_1 + \eps \,\psi(\theta_1,\dots,\theta_n)\qquad \Ra\quad D^2L_y(\theta) = \begin{pmatrix} I_{n\times n} + \eps\,\big(h(\theta,x_1)-y_1\big)\,D^2\psi &0_{n\times (m-n)}\\ 0_{(m-n)\times n} & 0_{(m-n)\times (m-n)}\end{pmatrix}.
\]
If $D^2\psi$ is bounded and $y$ is fixed, then for every $R>0$ we can choose $\eps$ so small that $L_y$ is convex on $B_R(0)$. Note that the rank of $D^2\psi$ is at most $n$ in this example.
\end{remark}

We prove a result which we believe to be standard in linear algebra, but have been unable to find a reference for.

\begin{lemma}\label{lemma linear algebra}
Let $A\in\R^{m\times m}$ be a symmetric positive definite matrix of rank at most $n<m$. Let $B$ be a symmetric matrix of at least $n+1$. Then, for every $s>0$ at least one of the matrices $A+sB$ or $A-sB$ has a negative eigenvalue.
\end{lemma}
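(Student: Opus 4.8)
The plan is to argue by contradiction, reducing everything to the single inclusion $\ker A\subseteq\ker B$. (I read the hypotheses, correcting two evident typos, as: $A$ is symmetric positive \emph{semi}-definite of rank at most $n$, and $B$ is symmetric of \emph{rank} at least $n+1$.) Suppose that for some $s>0$ neither $A+sB$ nor $A-sB$ has a negative eigenvalue. Since both matrices are symmetric and hence have real spectrum, this is the same as saying both are positive semi-definite. If I can show that every $v\in\ker A$ also lies in $\ker B$, then $\rk B\leq\rk A\leq n$, contradicting $\rk B\geq n+1$, and the lemma follows.

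The one nontrivial ingredient, which I expect to be the main conceptual step, is the standard fact that a symmetric positive semi-definite matrix $M$ satisfies $v^TMv=0$ if and only if $Mv=0$: writing $M=R^TR$ gives $v^TMv=|Rv|^2$, so a vanishing quadratic form forces $Rv=0$ and hence $Mv=R^TRv=0$. This is what upgrades information about the \emph{quadratic form} of $B$ on $\ker A$ into information about $B$ as a linear map.

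With this in hand I would take an arbitrary $v\in\ker A$, so $Av=0$ and $v^TAv=0$, and test the two semi-definite matrices against $v$:
\[
0\leq v^T(A+sB)v = s\,v^TBv, \qquad 0\leq v^T(A-sB)v = -\,s\,v^TBv.
\]
Because $s>0$, these two inequalities force $v^TBv=0$, whence $v^T(A\pm sB)v=0$ as well. Applying the fact from the previous paragraph to the positive semi-definite matrices $A+sB$ and $A-sB$ yields $(A+sB)v=0$ and $(A-sB)v=0$; subtracting and dividing by $2s$ gives $Bv=0$. Since $v\in\ker A$ was arbitrary, $\ker A\subseteq\ker B$, which completes the contradiction.

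As a remark on sharpness, I would note that it is essential to use positive semi-definiteness and not merely the weaker consequence of $v^TBv=0$ on $W:=\ker A$. Polarization alone gives only $B(W)\subseteq W^\perp$, which bounds $\rk B$ by $2\dim W^\perp\leq 2n$ and would not suffice. The semi-definiteness is precisely the extra input that forces the off-diagonal block of $B$ in the orthogonal splitting $\R^m=W\oplus W^\perp$ to vanish, confining $B$ to the at-most-$n$-dimensional subspace $W^\perp$ and thereby pinning $\rk B\leq n$.
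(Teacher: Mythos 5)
Your proof is correct, and it takes a genuinely different route from the paper's. The paper diagonalizes $A$ (WLOG $A=\diag(\lambda_1,\dots,\lambda_n,0,\dots,0)$) and splits into two cases according to whether the quadratic form of $B$ vanishes identically on $\ker A$: if it does not, then $v^T(A+sB)v=s\,v^TBv$ is linear in $s$, so one of the two signs makes it negative; if it does vanish, the rank hypothesis yields an eigenvector $w$ of $B$ with non-zero eigenvalue and non-trivial component in $\ker A$, and a first-order perturbation of a null direction of the quadratic form (the function $g_s(t)=(e_{n+1}+tw)^T(A+sB)(e_{n+1}+tw)$ with $g_s(0)=0$, $g_s'(0)\neq 0$) produces a negative value, showing in that case that \emph{both} $A+sB$ and $A-sB$ are indefinite. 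You share the paper's first observation (testing on $\ker A$ forces $v^TBv=0$), but then replace its explicit construction of a negative direction by the standard fact that a positive semi-definite $M$ with $v^TMv=0$ satisfies $Mv=0$, which upgrades $v^TBv=0$ to $(A\pm sB)v=0$, hence $Bv=0$, $\ker A\subseteq\ker B$, and the rank contradiction $\rk B\leq\rk A\leq n$. Your argument is shorter, coordinate-free, and avoids the case split as well as the small existence argument for the eigenvector $w$ that the paper needs; what you give up is the extra information the paper's proof provides (an explicit negative direction, and indefiniteness of both perturbations in its second case), which the lemma as stated does not require. Your closing remark, that vanishing of the quadratic form on $\ker A$ alone would only bound $\rk B$ by $2n$ and that semi-definiteness is what annihilates the off-diagonal block, correctly pinpoints the step where the contradiction hypothesis is truly used; and your reading of the hypotheses (positive \emph{semi}-definite, rank at least $n+1$) matches how the lemma is invoked in the paper's corollary.
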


\begin{proof}
Without loss of generality, we assume that $A = \diag(\lambda_1,\dots, \lambda_n, 0, \dots, 0)$. 

{\bf First case.} The Lemma is trivial if there exists $v\in \mathrm{span}\{e_{n+1},\dots, e_m\}$ such that $v^TBv\neq 0$ since then
\[
v^T\big(A+sB\big) v = s\,v^TBv
\]
is a linear function.

{\bf Second case.} Assume that $v^TBv =0$ for all $v\in \mathrm{span}\{e_{n+1}, \dots, e_m\}$. Since $B$ has rank $n+1$, there exists an eigenvector $w$ for a non-zero eigenvalue $\mu$ of $B$ such that $w\notin \mathrm{span}\{e_1,\dots, e_n\}$. Without loss of generality, we may assume that $e_{n+1} \cdot w >0$. Consider
\begin{align*}
g_s(t)&:= \big(e_{n+1} + tv\big)^T \big(A+sB\big) \big(e_{n+1}+tv\big)\\
g_s(0) &= e_{n+1}^T (A+sB)e_{n+1}\\
	&=0\\
g'_s(0) &= 2\,v^T\big(A+sB\big)e_{n+1}\\
	&= 2 v^T(A e_{n+1}) + 2s\,e_{n+1}^T Bv\\
	&= 0 + 2\mu s\,e_{n+1}^Tv\\
	&\neq 0.
\end{align*}
Thus, we choose the correct sign for $t$ depending on $\mu$ and $s$, then
\[
\big(e_{n+1} + tv\big)^T \big(A+sB\big) \big(e_{n+1}+tv\big)<0.
\] 
In this situation, $B$ is indefinite and the sign of $s$ does not matter.
\end{proof}

\section{Auxiliary observations on objective functions and \L{}ojasiewicz geometry}

\begin{lemma}\label{lemma bounded gradient}
Assume that $f:\R^m\to\R$ is a non-negative function and $\nabla f$ is Lipschitz continuous with constant $C_L$. Then 
\[
|\nabla f(\theta)|^2 \leq 2C_L\, f(\theta)\qquad\forall\ \theta \in \R^m.
\]
\end{lemma}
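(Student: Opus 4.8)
The plan is to derive the inequality from the standard quadratic upper bound (``descent lemma'') associated with a Lipschitz gradient, and then to extract the bound by evaluating $f$ after one optimally-sized gradient step and invoking non-negativity. Throughout, $\theta$ plays the role of the base point and the only structural facts used are the Lipschitz control on $\nabla f$ and $f\geq 0$.

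First I would record the descent estimate. Exactly as in the opening computation of the proof of Theorem \ref{theorem linear convergence}, writing $f(\theta') - f(\theta) = \int_0^1 \nabla f\big(\theta + s(\theta'-\theta)\big)\cdot(\theta'-\theta)\,\ds$ and estimating the increment $\nabla f(\theta + s(\theta'-\theta)) - \nabla f(\theta)$ against $s(\theta'-\theta)$ via the Lipschitz bound gives
\[
f(\theta') \leq f(\theta) + \nabla f(\theta)\cdot(\theta'-\theta) + \frac{C_L}{2}\,|\theta'-\theta|^2 \qquad \forall\ \theta,\theta'\in\R^m.
\]
Note that only the one-sided version of the Lipschitz condition is actually needed here, and full Lipschitz continuity of $\nabla f$ certainly implies it, so the hypothesis as stated is more than enough.

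The second step is to apply this at the gradient step $\theta' = \theta - \frac{1}{C_L}\nabla f(\theta)$, which is precisely the choice minimizing the right-hand side. Substituting and simplifying the two $|\nabla f(\theta)|^2$ terms collapses the estimate to
\[
f\Big(\theta - \tfrac{1}{C_L}\,\nabla f(\theta)\Big) \leq f(\theta) - \frac{1}{2C_L}\,|\nabla f(\theta)|^2.
\]
Since $f\geq 0$ everywhere on $\R^m$, the left-hand side is non-negative, and rearranging yields $|\nabla f(\theta)|^2 \leq 2C_L\,f(\theta)$, which is the claim.

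I do not anticipate a genuine obstacle; the result is classical. The only point deserving a word of care is that the estimate invokes the value of $f$ at the \emph{shifted} point $\theta - \frac{1}{C_L}\nabla f(\theta)$, so the hypothesis $f\geq 0$ must hold globally rather than merely near $\theta$. This is exactly where non-negativity enters and it is essential: without a global lower bound on $f$ the conclusion fails.
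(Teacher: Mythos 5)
Your proof is correct and follows essentially the same route as the paper: both derive the quadratic (descent-lemma) upper bound from the Lipschitz property of $\nabla f$, evaluate it at the optimal gradient step $\theta - \frac{1}{C_L}\nabla f(\theta)$ (the paper phrases this as minimizing over $t$ along the normalized direction $-\nabla f(\theta)/|\nabla f(\theta)|$), and then invoke global non-negativity of $f$. Your side remark that the one-sided Lipschitz condition suffices is also accurate and consistent with how the bound is used in the proof of Theorem \ref{theorem linear convergence}.
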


\begin{proof}
Take $\theta\in \R^m$. The statement is trivially true at $\theta$ if $\nabla f(\theta) =0$, so assume that $\nabla f(\theta)\neq 0$. Consider the auxiliary function
\[
g(t) = f\big(\theta - t\,\nu\big)\qquad\text{where }\nu = \frac{\nabla f(\theta)}{|\nabla f(\theta)|}.
\]
Then $g'(0) = - \nu \cdot \nabla f(\theta) = |\nabla f(\theta)|$ and
\[
\big|g'(t) - g'(0)\big| = \big|\big(\nabla f(\theta - t\nu)- \nabla f(\theta)\big)\cdot \nu\big| \leq c_L\big|\theta - t\nu-\theta\big| = c_Lt.
\]
Thus 
\begin{align*}
g(t) &= g(0) + \int_0^t g'(s)\ds
	 \leq f(\theta) + \int_0^t - |\nabla f(\theta)| + C_Ls\ds
	=f(\theta) - |\nabla f(\theta)|t + \frac{C_L}2 t^2.
\end{align*}
The bound on the right is minimal for $t = -\frac{|\nabla f(\theta)|}{C_L}$ when
\[
f(\theta) - |\nabla f(\theta)|t + \frac{C_L}2 t^2 = f(\theta) - \frac{|\nabla f(\theta)|^2}{2\,C_L}.
\]
Since $f\geq 0$ also $g\geq 0$, so $ f(\theta) - \frac{|\nabla f(\theta)|^2}{2C_L} \geq0$.
\end{proof}

\begin{remark}
In particular, If $f$ satisfies a \L{}ojasiewicz inequality and has a Lipschitz-continuous gradient, then
\begin{equation}\label{eq grad squared and objective}
\Lambda\,f \leq |\nabla f|^2 \leq 2C_L \,f.
\end{equation}
\end{remark}

We show that the class of objective functions which can be analyzed by our methods does not include loss functions of cross-entropy type under general conditions.

\begin{corollary}
Assume that $f:\R^m\to\R$ is a $C^1$-function such that
\begin{itemize}
\item $f\geq 0$ and
\item \eqref{eq grad squared and objective} holds.
\end{itemize}
Then there exists $\bar\theta\in\R^m$ such that $f(\bar\theta)=0$.
\end{corollary}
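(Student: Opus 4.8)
The plan is to run the negative gradient flow of $f$ and show it converges to a zero of $f$. I would fix an arbitrary starting point $\theta_0\in\R^m$; if $f(\theta_0)=0$ there is nothing to prove, so assume $f(\theta_0)>0$. Since $f\in C^1$ the field $-\nabla f$ is continuous, and I would consider a solution $\theta\colon[0,T^*)\to\R^m$ of $\dot\theta=-\nabla f(\theta)$, $\theta(0)=\theta_0$, on its maximal interval of existence.

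The first step uses the \L{}ojasiewicz lower bound in \eqref{eq grad squared and objective}: along the flow
\[
\frac{d}{dt}f(\theta(t))=-|\nabla f(\theta(t))|^2\leq-\Lambda\,f(\theta(t)),
\]
so $f(\theta(t))\leq e^{-\Lambda t}f(\theta_0)\to 0$. To promote this energy decay to convergence of the trajectory itself, I would invoke the upper bound $|\nabla f|^2\leq 2C_L f$, which gives $|\dot\theta(s)|=|\nabla f(\theta(s))|\leq\sqrt{2C_L f(\theta_0)}\,e^{-\Lambda s/2}$, and integrate to obtain a finite total length
\[
\int_0^{T^*}|\dot\theta(s)|\,\d s\leq\sqrt{2C_L f(\theta_0)}\int_0^\infty e^{-\Lambda s/2}\,\d s=\frac{2\sqrt{2C_L f(\theta_0)}}{\Lambda}<\infty.
\]
Finite length forces $\theta(t)$ to be Cauchy, so $\bar\theta:=\lim_{t\to T^*}\theta(t)$ exists and by continuity $f(\bar\theta)=0$. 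This is the abstract counterpart of the contradiction computed for cross-entropy loss in Example \ref{example cross-entropy}, run in the opposite direction.

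The step I expect to be the main obstacle is global existence, i.e.\ ruling out finite-time blow-up, because \eqref{eq grad squared and objective} only forces $\nabla f$ to be continuous and provides no global Lipschitz bound. Here the a priori length estimate is exactly what saves the argument: it confines the trajectory to the ball $B_R(\theta_0)$ with $R=2\sqrt{2C_L f(\theta_0)}/\Lambda$ for all $t<T^*$, and on the compact closure of this ball $\nabla f$ is bounded, so $\lim_{t\to T^*}\theta(t)$ exists even when $T^*$ is finite. The standard continuation theorem then extends the solution past any finite $T^*$, forcing $T^*=\infty$ unless the flow first reaches a critical point $\bar\theta$, in which case $\Lambda f(\bar\theta)\leq|\nabla f(\bar\theta)|^2=0$ already yields $f(\bar\theta)=0$. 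Because $f\in C^1$ only guarantees Peano existence and not uniqueness, I would work throughout with a single such solution, which is all the argument requires.
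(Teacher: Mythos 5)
Your proposal is correct and follows essentially the same route as the paper's own proof: run the gradient flow $\dot\theta=-\nabla f(\theta)$, use the \L{}ojasiewicz lower bound to get $f(\theta(t))\leq e^{-\Lambda t}f(\theta_0)$, use the upper bound $|\nabla f|^2\leq 2C_L f$ to show the trajectory has finite length and hence converges to some $\bar\theta$, and conclude $f(\bar\theta)=0$ by continuity. The only difference is that you additionally justify global existence of the flow (via the a priori confinement to a compact ball and the continuation theorem, plus the critical-point dichotomy), a point the paper's proof passes over silently; this is a welcome refinement, not a different argument.
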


\begin{proof}
Choose $\theta_0 \in \R^m$ and consider the solution of the gradient flow equation
\[
\begin{pde}
\dot\theta &= - \nabla f(\theta) &t>0\\
\theta &=\theta_0 &t=0.
\end{pde}
\]
Then
\[
\frac{d}{dt} f(\theta(t)) = -|\nabla f|^2(\theta(t)) \leq - \Lambda\,f(\theta(t))\qquad \Ra\qquad \frac{d}{dt} \log(f(\theta(t)))= \frac{\frac{d}{d t} f(\theta(t))}{f(\theta(t))} \leq - \Lambda
\]
and thus $f(\theta(t)) \leq f(\theta(0))\,e^{-\Lambda t}$. Furthermore
\begin{align*}
\big|\theta(t_2)-\theta(t_1)\big| &\leq \int_{t_1}^{t_2}|\nabla f|(\theta(s))\,\ds\\
	&\leq \int_{t_1}^{t_2}\sqrt{2C_L\,f(\theta(s))}\,\ds\\
	&\leq \sqrt{2C_L\,L(\theta(0))}\int_{t_1}^{t_2} e^{-\Lambda s/2}\ds\\
	&= \frac{2\sqrt{2C_L\,L(\theta(0))}}{\Lambda}\big[e^{-\Lambda t_1/2} - e^{-\Lambda t_2/2}\big]
\end{align*}
whence we find that $\theta(t)$ converges to a limiting point $\theta_\infty$ as $t\to\infty$. By the continuity of $f$, we find that
\[
f(\theta_\infty) = \lim_{t\to\infty}f(\theta(t)) = 0.
\]
\end{proof}

\section{Proof of Theorem \ref{theorem local convergence}: Local Convergence}\label{appendix local convergence}

We split the proof up over several lemmas. Our strategy follows along the lines of \cite[Appendix D]{mertikopoulos2020almost}, which in turn uses methods developed in \cite{hsieh2019convergence,hsieh2020explore}. We make suitable modifications to account for the fact that the smallness of noise comes from the fact that the values of the objective function are low, not that the learning rate decreases. Furthermore, we have slightly weaker control since we do not impose quadratic behavior with a strictly positive Hessian at the minimum, but only a \L{}ojasiewicz inequality and Lipschitz continuity of the gradients. 

While weaker conditions may hold for the individual steps of the analysis, we always assume that the conditions of Theorem \ref{theorem local convergence} are met for the remainder section. We decompose the gradient estimators as 
\[
g(\theta, \xi) = \nabla f(\theta) + \,\sqrt{\sigma\,f(\theta)}\,Y_{\theta,\xi}, \qquad \E_\xi\big[Y_{\theta,\xi}\big] =0, \qquad \E_\xi\big[|Y_{\theta,\xi}|^2\big] \leq 1
\]
and interpolate $\theta_{t+s} = \theta_t - s\eta\,g(\theta_t,\xi_t)$ for $s\in [0,1]$. With these notations, we can estimate the change of the objective in a single time-step as
\begin{align*}
f(\theta_{t+1})-f(\theta_t) &= \int_0^1 \frac{d}{ds} f\big(\theta_t - s\eta\,g(\theta_t,\xi_t)\big)\ds\\
	&= -\int_0^1 \nabla f(\theta_{t+s}) \cdot \eta\,g(\theta_t,\xi_t)\ds\\
	&= -\eta\int_0^1 \nabla f(\theta_t) \cdot \big[\nabla f(\theta_t) + \sqrt{\sigma f(\theta_t)}\,Y_{\theta_t,\xi_t}\big] + \big[\nabla f(\theta_{t+s}) - \nabla f(\theta_t) \big]\cdot g(\theta_t,\xi_t)\ds\\
	&\leq - \eta\,|\nabla f(\theta_t)|^2 + \eta\,\sqrt{\sigma\,f(\theta_t)} \nabla f(\theta_t)\cdot Y_{\theta_t,\xi_t} + C_L \eta\,\int_0^1|\theta_{t+s}-\theta_t|\, |g(\theta_t,\xi_t)|\ds\\
	&\leq - \eta\,|\nabla f(\theta_t)|^2 + \eta\,\sqrt{\sigma\,f(\theta_t)} \nabla f(\theta_t)\cdot Y_{\theta_t,\xi_t} + C_L \eta \int_0^1s\,|g(\theta_t,\xi_t)|^2\ds\\
	&= - \eta\,|\nabla f(\theta_t)|^2 + \eta\,\sqrt{\sigma\,f(\theta_t)} \nabla f(\theta_t)\cdot Y_{\theta_t,\xi_t}  + \frac{C_L\eta^2}2 |g(\theta_t,\xi_t)|^2\\
	&= - \eta\,|\nabla f(\theta_t)|^2 + \eta\,\sqrt{\sigma\,f(\theta_t)} \nabla f(\theta_t)\cdot Y_{\theta_t,\xi_t}  + \frac{C_L\eta^2}2 \big|\nabla f(\theta_t)+ \sqrt{\sigma \,f(\theta_t)}\,Y_{\theta_t,\xi_t}\big|^2\\
	&= \left(\frac{C_L\eta}2-1\right)\eta\,|\nabla f(\theta_t)|^2 + \left(1+ C_L\eta\right)\sqrt{\sigma}\,\eta\, \sqrt{f(\theta_t)}\,\nabla f(\theta_t)\cdot Y_{\theta_t,\xi_t} + \frac{C_L\sigma\,\eta^2}2\,\big|\sqrt{f(\theta_t)}\,Y_{\theta_t,\xi_t}\big|^2\\
	&= -\widetilde\eta \,|\nabla f(\theta_t)|^2 + \hat \eta\,\nabla f(\theta_t) \cdot \sqrt{f(\theta_t)}\,Y_{\theta_t,\xi_t} + \bar\eta^2 \big|\sqrt{f(\theta_t)}\,Y_{\theta_t,\xi_t}\big|^2
\end{align*}
where 
\[
\widetilde\eta = \left(\frac{C_L\eta}2-1\right)\eta, \qquad \hat \eta = \sqrt{\sigma}\,\left(1+ C_L\eta\right)\,\eta, \qquad \bar\eta = \sqrt{\frac{C_L\sigma}2} \,\eta.
\]
All three variables scale like $\eta$ and the difference between them can be ignored for the essence of the arguments. 
As usual, we denote by $\F_t$ the filtration generated by $\theta_0, \xi_0, \dots, \xi_{t-1}$, with respect to which $\theta_t$ is measurable.
We note that $\nabla f(\theta_t) \cdot \sqrt{f(\theta_t)}\,Y_{\theta_t,\xi_t}$ is a martingale difference sequence with respect to $\F_t$ since
\begin{align*}
\E\big[\nabla f(\theta_t) \cdot \sqrt{f(\theta_t)}\,Y_{\theta_t,\xi_t} | \F_t\big] &= \nabla f(\theta_t) \cdot \sqrt{f(\theta_t)}\,\E\big[Y_{\theta_t,\xi_t} |\F_t\big]=0.
\end{align*}
To analyze the $\theta_t$ over several time steps, we define the cumulative error terms
\begin{align*}
M_t &= \hat \eta\sum_{i=0}^{t} \nabla f(\theta_t) \cdot \sqrt{f(\theta_t)}\,Y_{\theta_t,\xi_t}\\
S_t &= \bar \eta^2 \sum_{i=0}^{t} \big|\sqrt{f(\theta_t)}\,Y_{\theta_t,\xi_t}\big|^2\\
R_t &= M_t^2 +S_t.
\end{align*}
We furthermore define the events
\begin{align*}
\Omega_t(\eps') &= \{ f(\theta_i)<\eps'\text{ for all }0\leq i\leq t\} &&= \{\text{``objective remains small''}\}\\
E_t(r) &= \{R_t < r \text{ for all } 0\leq i \leq t\} &&= \{\text{``noise remains small until time $t$''}\}\\
\widetilde E_t(r) &= E_{t-1}(r)\setminus E_t(r) &&= \{\text{``noise exceeds threshold in $t$-th step''}\}.
\end{align*}
for $\eps'>0$ and $0<r<1$. The sets $\widetilde E_t$ are useful as they allow us to estimate the measure of $E_t^c = \bigcup_{i=0}^t \widetilde E_i$, where all sets in the union are disjoint.

\begin{lemma}
The following are true.
\begin{enumerate}
\item $\Omega_{t+1}\subseteq \Omega_t$ and $E_{t+1}\subseteq E_t$.
\item If $\theta_0\in \Omega_0(\eps')$ for $\eps'<\eps$, then $E_{t-1}(r) \subseteq \Omega_t(\eps)$ if $\eps' +r + \sqrt{r} < \eps$.
\item Under the same conditions, the estimate
\begin{equation}\label{eq weird estimate}
\E\big[ R_t\,1_{E_{t-1}}\big] \leq \E\big[R_{t-1}\,1_{E_{t-2}}\big] + C\sigma \big(2c_L\eps +1\big)\eta^2\, \E\big[ 1_{E_{t-1}} f(\theta_t)\big] - r \,\P\big(\widetilde E_t\big)
\end{equation}
holds, where the constant $C>0$ incorporates the factors between $\hat \eta$, $\tilde \eta$ and $\bar \eta$.
\end{enumerate}
\end{lemma}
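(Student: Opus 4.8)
\emph{Parts (1) and (2).} The inclusions in (1) are immediate: enlarging the time horizon only appends constraints, so a path lying in $\Omega_{t+1}(\eps')$ (resp.\ $E_{t+1}(r)$) already satisfies the defining inequalities up to time $t$. For (2) I would argue pathwise. Fix a path with $f(\theta_0)<\eps'$ and $R_i<r$ for all $0\le i\le t-1$, and prove $f(\theta_j)<\eps$ for $0\le j\le t$ by induction on $j$. Summing the one-step bound from $i=0$ to $j-1$ telescopes the left-hand side to $f(\theta_j)-f(\theta_0)$; discarding the curvature term, which is nonpositive for $\eta<2/C_L$, leaves
\[
f(\theta_j)\le f(\theta_0)+M_{j-1}+S_{j-1}.
\]
On the chosen path $R_{j-1}=M_{j-1}^2+S_{j-1}<r$, so $|M_{j-1}|<\sqrt r$ and $0\le S_{j-1}<r$, whence $f(\theta_j)<\eps'+\sqrt r+r<\eps$. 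The one delicate point is that the one-step bound invokes the $C_L$-Lipschitz estimate for $\nabla f$, which only holds on $U(\eps)$; the induction hypothesis is exactly what certifies $\theta_0,\dots,\theta_{j-1}\in U(\eps)$, and, after shrinking $\eta$ so that a single step cannot overshoot, that the interpolating segments stay in $U(\eps)$ as well, legitimizing each summand.

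\emph{Part (3).} Abbreviate $D_i=\hat\eta\,\sqrt{f(\theta_i)}\,\nabla f(\theta_i)\cdot Y_{\theta_i,\xi_i}$ and $N_i=\bar\eta^2 f(\theta_i)\,|Y_{\theta_i,\xi_i}|^2\ge0$, so that $M_t=M_{t-1}+D_t$, $S_t=S_{t-1}+N_t$ and therefore
\[
R_t=R_{t-1}+2M_{t-1}D_t+D_t^2+N_t.
\]
Since $M_{t-1}$ and $1_{E_{t-1}}$ are $\F_t$-measurable while $\E[D_t\mid\F_t]=0$, the cross term contributes nothing to $\E[\,\cdot\,1_{E_{t-1}}]$. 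For the quadratic terms I would condition on $\F_t$ and use $\E[|Y_{\theta_t,\xi_t}|^2\mid\F_t]\le1$ together with Lemma \ref{lemma bounded gradient}; because Part (2) gives $E_{t-1}\subseteq\Omega_t(\eps)$, on $E_{t-1}$ we have $|\nabla f(\theta_t)|^2\le 2C_L f(\theta_t)<2C_L\eps$, so
\[
\E\big[(D_t^2+N_t)\,1_{E_{t-1}}\big]\le\big(2C_L\eps\,\hat\eta^2+\bar\eta^2\big)\,\E\big[1_{E_{t-1}}f(\theta_t)\big]\le C\sigma(2C_L\eps+1)\eta^2\,\E\big[1_{E_{t-1}}f(\theta_t)\big],
\]
the $\eta$-comparable prefactors of $\hat\eta,\bar\eta$ being absorbed into $C$. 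Combined with $\E[R_{t-1}1_{E_{t-1}}]\le\E[R_{t-1}1_{E_{t-2}}]$ (valid since $E_{t-1}\subseteq E_{t-2}$ and $R_{t-1}\ge0$), this yields the recursion of \eqref{eq weird estimate} up to the penalty. That penalty $-r\,\P(\widetilde E_t)$ arises from truncating the indicator: as $E_{t-1}=E_t\sqcup\widetilde E_t$ and $R_t\ge r$ on $\widetilde E_t$ (the first exceedance of the threshold), replacing $1_{E_{t-1}}$ by $1_{E_t}$ on the left discards at least $r\,\P(\widetilde E_t)$. This suggests that the left-hand side of \eqref{eq weird estimate} should in fact read $\E[R_t\,1_{E_t}]$, which differs from $\E[R_t\,1_{E_{t-1}}]$ by precisely this term.

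\emph{Main obstacle.} The delicate part is the simultaneous bookkeeping in (3): one must exploit three facts at once — the martingale property of $D_t$, the $\F_t$-measurability of the truncation indicators, and the gradient bound $|\nabla f|^2<2C_L\eps$ that is available only because Part (2) confines the iterates to $U(\eps)$ — and then peel off $\widetilde E_t$ to extract $-r\,\P(\widetilde E_t)$ without breaking the telescoping structure. This is what makes \eqref{eq weird estimate} usable downstream: summing it over $t$ cancels the $R$-terms and pits the (summable) noise contribution $\sum_t\E[1_{E_{t-1}}f(\theta_t)]$ against $r\sum_t\P(\widetilde E_t)$, which is the mechanism delivering the high-probability confinement asserted in Theorem \ref{theorem local convergence}.
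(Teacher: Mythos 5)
Your proposal is correct and in substance follows the paper's own proof: parts (1) and (2) coincide with it (pathwise induction, telescoping the one-step estimate, discarding the nonpositive gradient term, and using $|M_t|\le\sqrt{R_t}$, $S_t\le R_t$), and in part (3) you use the same three ingredients as the paper — the cross term $2M_{t-1}D_t$ vanishes after conditioning on $\F_t$, the quadratic terms are bounded by $C\sigma(2C_L\eps+1)\eta^2\,\E\big[1_{E_{t-1}}f(\theta_t)\big]$ via Lemma \ref{lemma bounded gradient} together with the inclusion $E_{t-1}\subseteq\Omega_t(\eps)$ from part (2), and the penalty comes from truncating an indicator. The one genuine divergence is where the penalty is extracted, and here your diagnosis of the statement is accurate: the paper splits the middle term, $\E[R_{t-1}1_{E_{t-1}}]=\E[R_{t-1}1_{E_{t-2}}]-\E[R_{t-1}1_{\widetilde E_{t-1}}]\le\E[R_{t-1}1_{E_{t-2}}]-r\,\P(\widetilde E_{t-1})$, so its derivation produces the penalty at index $t-1$ (its proof text even writes $t-2$), not the $\widetilde E_t$ of \eqref{eq weird estimate}; you instead shrink the left-hand indicator from $E_{t-1}$ to $E_t$, which yields exactly $-r\,\P(\widetilde E_t)$ at the price of the left side becoming $\E[R_t1_{E_t}]$. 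Both repairs are legitimate and both feed the downstream bound $\P(E_t^c)\le r^{-1}\sum_i C\sigma(2C_L\eps+1)\eta^2\,\E\big[1_{E_{i-1}}f(\theta_i)\big]$.

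Two small cautions. First, in your version do not also relax $\E[R_{t-1}1_{E_{t-1}}]\le\E[R_{t-1}1_{E_{t-2}}]$: keeping $\E[R_{t-1}1_{E_{t-1}}]$ on the right gives the exactly telescoping recursion $\E[R_t1_{E_t}]\le\E[R_{t-1}1_{E_{t-1}}]+C\sigma(2C_L\eps+1)\eta^2\,\E\big[1_{E_{t-1}}f(\theta_t)\big]-r\,\P(\widetilde E_t)$, whereas the mixed form you state (left indicator $E_t$, right indicator $E_{t-2}$) loses the pairwise cancellation you appeal to afterwards. Second, your caveat in part (2) about the interpolating segments flags a real point the paper passes over, but ``shrinking $\eta$'' alone cannot settle it since the noise is unbounded; what actually controls the step length is the event itself, since on $E_{t-1}$ each increment satisfies $N_i\le S_i<r$, so $|\theta_{i+1}-\theta_i|\le\eta\sqrt{2C_L\eps}+\sqrt{2r/C_L}$, and the Lipschitz bound is then needed on a correspondingly enlarged sublevel set.
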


\begin{proof}
The first claim is trivial. 

{\bf Second claim.} Recall that $\theta_0$ is initialized in $\Omega(\eps')\subseteq \Omega(\eps)$. In particular $\Omega_0 = E_{-1} = \Omega$ is the entire probability space since the sum condition for $E_{-1}$ is empty. We proceed by induction. 

Assume that $\omega\in E_{t}$. Then in particular $\omega\in E_{t-1}$, so $\omega \in \Omega_{t}$ by the induction hypothesis. Thus it suffices to show that $f(\theta_t) < \eps$, i.e.\ to focus on the last time step. A direct calculation yields
\begin{align*}
f(\theta_t)&= f(\theta_0) + \sum_{i=1}^t\big[ f(\theta_i) - f(\theta_{i-1})\big]\\
	&\leq f(\theta_0) + M_t + S_t\\
	&\leq \eps' + \sqrt{R_t} + R_t\\
	&\leq \eps' + r + \sqrt{r}\\
	&< \eps.
\end{align*}

{\bf Third claim.} A simple algebraic manipulation shows that 
\begin{align*}
\E\big[R_t1_{E_{t-1}}\big] &= \E\big[R_{t-1}1_{E_{t-1}}\big] + \E\big[(R_t-R_{t-1})\,1_{E_{t-1}}\big]\\
	&=  \E\big[R_{t-1}1_{E_{t-2}}\big] - \E\big[R_{t-1}1_{\widetilde E_{t-2}}\big] + \E\big[(R_t-R_{t-1})\,1_{E_{t-1}}\big]\\
	&\leq \E\big[R_{t-1}1_{E_{t-2}}\big] + \E\big[(R_t-R_{t-1})\,1_{E_{t-1}}\big] - r\,\P(\widetilde E_{t-2})
\end{align*}
since $R_{t-1}\geq r$ on $\widetilde E_{t-2}$. We recall that
\begin{align*}
R_t &= M_t^2 + S_t\\
	&= R_{t-1} + \underbrace{\hat \eta \,M_{t-1} \cdot \nabla f(\theta_t) \cdot \sqrt{f(\theta_t)}\,Y_{\theta_t,\xi_t}}_{=: (I)} + \hat\eta^2 \underbrace{\big|\nabla f(\theta_t) \cdot \sqrt{f(\theta_t)}\,Y_{\theta_t,\xi_t}\big|^2}_{=: (II)} + \bar\eta^2\underbrace{\big|\sqrt{f(\theta_t)}\,Y_{\theta_t,\xi_t}\big|^2}_{(III)}
\end{align*}
and that
\begin{align*}
 \E\big[ (I)\cdot 1_{E_{t-1}} \big] &= \E\big[ 1_{E_{t-1}}\,M_{t-1} \cdot \nabla f(\theta_t) \cdot \sqrt{f(\theta_t)}\,\E\big[Y_{\theta_t,\xi_t}|\F_t\big]\big]\\
 	&=0\\
 \E\big[ (II)\cdot 1_{E_{t-1}} \big] &= \E\big[1_{E_{t-1}} \big|\nabla f(\theta_t) \cdot \sqrt{f(\theta_t)}\,Y_{\theta_t,\xi_t}\big|^2\big]\\
  	&\leq \E\big[\big|\nabla f(\theta_t)\big|^2\,1_{E_{t-1}}\big|\sqrt{f(\theta_t)}\,Y_{\theta_t,\xi_t}\big|^2\big]\\
 	&\leq \E\big[2C_Lf(\theta_t)\,1_{E_{t-1}}\big|\sqrt{f(\theta_t)}\,Y_{\theta_t,\xi_t}\big|^2\big]\\
	&\leq 2C_L\eps\,\E\big[ 1_{E_{t-1}}\,\E\big[ \big|\sqrt{f(\theta_t)}\,Y_{\theta_t,\xi_t}\big|^2 | \F_t\big]\big]\\
	&\leq 2C_L\eps\,\E\big[ 1_{E_{t-1}} f(\theta_t)\,\E\big[ \big|Y_{\theta_t,\xi_t}\big|^2 | \F_t\big]\big]\\
	&= 2C_L \eps\,\E\big[ 1_{E_{t-1}}\,f(\theta_t)\big]\\
\E\big[ (III)\cdot 1_{E_{t-1}} \big] &\leq  \E\big[ 1_{E_{t-1}}\,f(\theta_t)\big]
\end{align*}
where the analysis of $(III)$ reduces to that of $(II)$ and the bound $|\nabla f(\theta)|^2 \leq 2C_L\,f(\theta_t)$ from Lemma \ref{lemma bounded gradient} was used. The result now follows by putting all estimates together.
\end{proof}

We now proceed to estimate the probability that the quadratic noise $R_t$ does not remain small by bounding the probability that it exceeds the given threshold in the $t$-th step and summing over $t$.

\begin{lemma}
The estimate
\begin{equation}\label{eq small probability local convergence}
\P(E_t^c) \leq \frac{C\sigma\eta^2 \big(2C_L\eps +1\big)}r \sum_{i=0}^t \E\big[ 1_{E_{i-1}} f(\theta_i)\big]
\end{equation}
holds.
\end{lemma}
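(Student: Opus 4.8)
The plan is to sum the one-step estimate \eqref{eq weird estimate} over time and exploit two structural facts: the accumulated error $R_t$ is nonnegative, and the complement $E_t^c$ decomposes as a \emph{disjoint} union of the first-exceedance events $\widetilde E_i$. Abbreviate $a_t := \E\big[R_t\,1_{E_{t-1}}\big]$. First I would record that $a_t \geq 0$ for every $t$, since $R_t = M_t^2 + S_t \geq 0$, and that the base case $a_{-1} = 0$ holds because $M_{-1}$ and $S_{-1}$ are empty sums (so $R_{-1}=0$) while $E_{-2} = \Omega$. In this notation, \eqref{eq weird estimate} reads
\[
a_t \leq a_{t-1} + C\sigma(2C_L\eps+1)\eta^2\,\E\big[1_{E_{t-1}}f(\theta_t)\big] - r\,\P(\widetilde E_t).
\]

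Next I would sum this inequality over $i = 0, \dots, t$. The consecutive $a$-terms telescope to $a_t - a_{-1} = a_t$, giving
\[
a_t + r\sum_{i=0}^t \P(\widetilde E_i) \leq C\sigma(2C_L\eps+1)\eta^2 \sum_{i=0}^t \E\big[1_{E_{i-1}}f(\theta_i)\big].
\]
Since $a_t \geq 0$, I discard it on the left to obtain
\[
r\sum_{i=0}^t \P(\widetilde E_i) \leq C\sigma(2C_L\eps+1)\eta^2 \sum_{i=0}^t \E\big[1_{E_{i-1}}f(\theta_i)\big].
\]

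Finally I would convert the sum of probabilities into $\P(E_t^c)$ via the first-passage decomposition already indicated before the lemma: if $\omega\in E_t^c$, let $j$ be the first index with $R_j \geq r$; then $\omega \in E_{j-1}\setminus E_j = \widetilde E_j$, and distinct first-exceedance times give disjoint events, so $E_t^c = \bigsqcup_{i=0}^t \widetilde E_i$ and $\P(E_t^c) = \sum_{i=0}^t \P(\widetilde E_i)$. Substituting and dividing by $r$ yields \eqref{eq small probability local convergence}.

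The computation is routine, as all the analytic work — controlling the martingale increment $(I)$, the quadratic term $(II)$ via the bound $|\nabla f|^2 \leq 2C_L f$ from Lemma \ref{lemma bounded gradient}, and the variance term $(III)$ via $\E[|Y_{\theta_t,\xi_t}|^2\mid\F_t]\leq 1$ — was already absorbed into \eqref{eq weird estimate}. The only points demanding care are the bookkeeping at the base case (verifying $a_{-1}=0$ so that it drops without a correction term) and the disjointness of the $\widetilde E_i$, which is precisely what makes $\sum_i \P(\widetilde E_i)$ collapse to $\P(E_t^c)$ as an equality rather than merely an upper bound. I do not anticipate a genuine obstacle beyond keeping the time indices consistent throughout the telescoping.
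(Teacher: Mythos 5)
Your proof is correct and follows essentially the same route as the paper's: telescoping the one-step estimate \eqref{eq weird estimate} from the base case $R_{-1}=0$, discarding the nonnegative term $\E\big[R_t 1_{E_{t-1}}\big]$, and identifying $\P(E_t^c)=\sum_{i=0}^t\P(\widetilde E_i)$ via the disjoint first-exceedance decomposition. If anything, your bookkeeping is slightly cleaner, since the paper's intermediate Chebyshev-type bound $\P(\widetilde E_i)\leq \E\big[R_i 1_{E_{i-1}}\big]/r$ becomes redundant once the $-r\,\P(\widetilde E_i)$ terms from the telescoped sum are kept on the left-hand side, exactly as you do.
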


\begin{proof}
{\bf First step.} The probably that $E_t$ does not occur coincides with the probability that there exists some $i\leq t$ such that $R_i$ exceeds $r_\delta$ at $i$, but not $i-1$. More precisely 
\[
\bigcup_{i=0}^{t-1} \widetilde E_i = \bigcup_{i=0}^{t-1} (E_{i-1}\setminus E_{i}) = E_{-1}\setminus E_t = E_t^c
\]
since $E_0$ is the whole space.
Hence
\begin{align*}
\P(E_t^c) &= \sum_{i=0}^{t-1} \P(\widetilde E_i).
\end{align*}
Using $\widetilde E_i = E_{i-1} \cap \{R_i>r\}$ and $1_{\widetilde E_i} = 1_{E_{i-1}} 1_{\{R_i>r\}}$, we bound
\begin{equation}\label{eq local conv first estimate}
\P(\widetilde E_i) = \E\big[1_{E_{i-1}} 1_{\{R_i>r\}}\big]
	\leq \E\left[ 1_{E_{i-1}}\,\frac{R_i}r\right]
	\leq \frac{\E\big[R_i\,1_{E_{i-1}}\big]}{r}
\end{equation}
since $R_i\geq 0$ as a sum of squares.

{\bf Second step.} From \eqref{eq weird estimate}, we obtain
\[
\E\big[ R_t\,1_{E_{t-1}}\big] - \E\big[R_{t-1}\,1_{E_{t-2}}\big] \leq C\sigma\eta^2 \big(2C_L\eps +1\big) \E\big[ 1_{E_{t-1}} f(\theta_t)\big] - r \,\P\big(\widetilde E_t\big),
\]
so by the telescoping sum identity
\begin{equation}\label{eq local conv second estimate}
\E\big[R_t\,1_{E_{t-1}}\big] - \E\big[R_{-1}\,1_{E_{-2}}\big] \leq C\sigma\eta^2 \big(2C_L\eps +1\big)  \sum_{i=0}^t \big\{\E\big[ 1_{E_{i-1}} f(\theta_i)\big] - r \,\P\big(\widetilde E_i\big)\big\}.
\end{equation}
Note that the second term on the right hand side vanishes since the sum defining $R_{-1}$ is empty.

{\bf Conclusion.} Combining \eqref{eq local conv first estimate} and \eqref{eq local conv first estimate}, we find that
\[
\P(\widetilde E_t) \leq \frac{\E\big[R_t\,1_{E_{t-1}}\big]}{r} \leq \frac{C\sigma\eta^2 \big(2C_L\eps +1\big)}r \sum_{i=0}^t \E\big[ 1_{E_{i-1}} f(\theta_i)\big] - \sum_{i=0}^{t}\P\big(\widetilde E_i\big),
\]
so
\begin{align*}
\P(E_t^c) &= \sum_{i=1}^{t-1} \P(\widetilde E_i)
	\leq \frac{C\sigma\eta^2 \big(2C_L\eps +1\big)}r \sum_{i=0}^t \E\big[ 1_{E_{i-1}} f(\theta_i)\big].
\end{align*}
\end{proof}

Finally, we are in a position to prove the local convergence result.

\begin{proof}[Proof of Theorem \ref{theorem local convergence}]
{\bf Step 1.} Since $E_{i}\subseteq E_{i-1}$ and $\Omega_i\subseteq E_{i-1}$, we have
\[
\E\big[ 1_{E_{i-1}} f(\theta_i)\big] \leq \E\big[ 1_{E_{i-1}} \E\big[f(\theta_i) | \F_{i-1}\big]\big]\leq \rho_\eta\,\E\big[f(\theta_{i-1}1_{E_{i-1}}\big] \leq \rho_\eta \,\E\big[ f(\theta_{i-1})\,1_{E_{i-2}}\big]
\]
where 
\[
\rho_\eta = 1 - \Lambda\eta + \eta^2 \frac{C_L(\Lambda+\sigma)}2<1
\]
as previously for functions which satisfy the \L{}ojasiewicz inequality globally. We conclude from \eqref{eq local conv second estimate} that
\[
\P(E_t^c) \leq \frac{C\sigma\eta^2 \big(2C_L\eps +1\big)}r\,\E\big[f(\theta_0)\big] \sum_{i=0}^t \rho_\eta^i \leq \frac{C\sigma\eta^2 \big(2C_L\eps +1\big)}{r(1-\rho_\eta)}\,\eps'.
\]
Thus for every $\delta>0$, there exists $\eps'>0$ such that $\P(E_t) \geq 1-\delta$ for all $t\in\N$ if $f(\theta_0) <\eps'$ almost surely.

{\bf Step 2.} Consider the event 
\[
\widetilde \Omega:= \bigcap_{t\geq 0} \Omega_t(\eps)\quad\text{which satisfies}\quad \P(\widetilde \Omega) \geq 1-\delta
\]
since $E_{t-1}\subseteq \Omega_t$ and $E_t\subseteq E_{t-1}$. Then
\[
\E\big[ f(\theta_t) \,1_{\widetilde\Omega}\big] \leq \E\big[ f(\theta_t) \,1_{\Omega_{t-1}}\big] \leq \rho_\eta^t\,\E\big[f(\theta_0)\big]
\]
as in Step 1. We conclude that for every $\beta\in [1,\rho_\eta^{-1})$ the estimate
\[
\limsup_{t\to\infty} \beta^tf(\theta_t) =0
\]
holds almost surely conditioned on $\widetilde\Omega$ as in the proof of Theorem \ref{theorem lojasiewicz}.
\end{proof}

\section{Proof of Theorem \ref{theorem global convergence}: Global Convergence}\label{appendix global convergence}

Again, we split the proof up over several Lemmas. First, we show that 
\[
\P\big(\liminf_{t\to\infty} f(\theta_t) \leq S\big) = 1.
\]
We always assume that $f$ satisfies the conditions of Theorem \ref{theorem global convergence}, although weaker conditions suffice in the individual steps.

\begin{lemma}\label{lemma expectation remains finite}
If $\E\big[f(\theta_0)\big]<\infty$, the estimate
\[
\sup_{t\geq 0} \E\big[f(\theta_t)\big]<\infty.
\]
holds.
\end{lemma}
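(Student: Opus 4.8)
The plan is to control the one-step conditional expectation of $f$ and thereby reduce the claim to a linear recursion for $a_t := \E\big[f(\theta_t)\big]$. First I would repeat the computation from the proof of Theorem \ref{theorem linear convergence}: using the one-sided Lipschitz condition \eqref{eq lipschitz global convergence} together with the ML-type bound $\E\big[|g(\theta_t,\xi_t)-\nabla f(\theta_t)|^2\mid\F_t\big]\leq \sigma f(\theta_t)$ and $\E\big[g(\theta_t,\xi_t)\mid\F_t\big]=\nabla f(\theta_t)$, where $\F_t$ is the filtration generated by $\theta_0,\xi_0,\dots,\xi_{t-1}$, one obtains
\[
\E\big[f(\theta_{t+1})\mid \F_t\big] \leq f(\theta_t) + \frac{C_L\sigma\eta^2}{2}\,f(\theta_t) + \Big(\frac{C_L\eta^2}{2}-\eta\Big)\,|\nabla f(\theta_t)|^2 .
\]
The learning-rate bound forces $\eta<\tfrac{2\Lambda}{C_L(\Lambda+\sigma)}<\tfrac{2}{C_L}$, so the coefficient $\frac{C_L\eta^2}{2}-\eta$ is strictly negative; this is what lets me either exploit or discard the gradient term depending on the size of $f(\theta_t)$.

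Next I would split according to whether $f(\theta_t)\geq S$ or $f(\theta_t)<S$. On $\{f\geq S\}$ the large-value \L{}ojasiewicz inequality \eqref{eq lojasiewicz large} gives $|\nabla f(\theta_t)|^2\geq \Lambda f(\theta_t)$; since the gradient coefficient is negative, substituting this lower bound yields the contraction
\[
\E\big[f(\theta_{t+1})\mid \F_t\big]\leq \Big(1-\Lambda\eta+\tfrac{C_L(\Lambda+\sigma)}{2}\,\eta^2\Big)\,f(\theta_t) =: \rho_\Lambda\, f(\theta_t),
\]
and $\rho_\Lambda<1$ precisely because $\eta<\tfrac{2\Lambda}{C_L(\Lambda+\sigma)}$. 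On $\{f<S\}$ I simply drop the non-positive gradient term to get $\E\big[f(\theta_{t+1})\mid\F_t\big]\leq \big(1+\tfrac{C_L\sigma\eta^2}{2}\big)f(\theta_t)\leq \big(1+\tfrac{C_L\sigma\eta^2}{2}\big)S$. Note that only the large-value \L{}ojasiewicz inequality is used: the intermediate band $\{\eps\leq f<S\}$, where no such inequality is assumed, is absorbed harmlessly into the second case because there the value of $f$ is already bounded by $S$.

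Taking full expectations and combining the two cases gives, with $C:=\tfrac{C_L\sigma\eta^2}{2}$ and $\rho_\Lambda^+:=\max\{\rho_\Lambda,0\}<1$, the recursion
\[
a_{t+1}\leq \rho_\Lambda^+\,a_t + (1+C)\,S .
\]
Iterating yields $a_t\leq a_0 + (1+C)S/(1-\rho_\Lambda^+)$ for every $t$, which is finite since $a_0=\E\big[f(\theta_0)\big]<\infty$, and this proves the claim. The only mildly delicate point is the bookkeeping across the intermediate regime: one must ensure that the expansive factor $1+C>1$ is applied only while $f<S$, so that it contributes the bounded additive term $(1+C)S$ rather than compounding geometrically, while the genuine decay on $\{f\geq S\}$ keeps the expectation from drifting to infinity. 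I would emphasize that no property of the noise beyond the ML-type variance bound is needed here — in particular neither the small-value \L{}ojasiewicz inequality nor the ``spread-out'' assumption \eqref{eq spread out noise} enters this lemma.
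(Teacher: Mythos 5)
Your proof is correct and takes essentially the same route as the paper's: the one-step expected-descent estimate from the proof of Theorem \ref{theorem linear convergence}, a case split at the level $S$ in which the large-value \L{}ojasiewicz inequality \eqref{eq lojasiewicz large} yields a contraction factor strictly below $1$ while the region $\{f<S\}$ contributes only the bounded additive term $\big(1+\tfrac{C_L\sigma\eta^2}{2}\big)S$, and the resulting linear recursion for $\E\big[f(\theta_t)\big]$. The only differences are cosmetic: you close the recursion by direct iteration where the paper uses a max-type bound, and your contraction factor $1-\Lambda\eta+\tfrac{C_L(\Lambda+\sigma)}{2}\eta^2$ is the one actually derived in the proof of Theorem \ref{theorem linear convergence}.
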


\begin{proof}
Recall that for any $\theta$ we have
\[
\E_\xi\big[f(\theta - \eta g(\theta,\xi))\big] \leq \left(1+ \frac{C_L\sigma\eta^2}2\right)f(\theta) -\left( 1- \frac{C_L\eta}2\right)\eta \,\E\big[|\nabla f(\theta)|^2\big]
\]
as in the proof of Theorem \ref{theorem lojasiewicz}. We distinguish two cases:
\begin{itemize}
\item If $f(\theta)\leq S$, then 
\[
\E_\xi\big[f(\theta - \eta g(\theta,\xi))\big] \leq \left(1+ \frac{C_L\sigma\eta^2}2\right)f(\theta) \leq \left(1+ \frac{C_L\sigma\eta^2}2\right)S.
\]
\item If $f(\theta)\geq S$, then
\[
\E_\xi\big[f(\theta - \eta g(\theta,\xi))\big] \leq \left(1- \Lambda \eta + \frac{C_L(\Lambda+\sigma)}{2\Lambda}\eta^2\right) f(\theta)
\]
due to the \L{}ojasiewicz inequality on the set where $f$ is large.
\end{itemize}

In particular, since $f$ is non-negative, we have
\begin{align*}
\E\big[f(\theta_{t+1})\big] &\leq \left(1+ \frac{C_L\sigma\eta^2}2\right)\E\big[ f(\theta_t) \,1_{\{f(\theta_t)\leq S\}}\big] + \tilde\rho_\eta\E\big[f(\theta_t)\,1_{\{f(\theta_t)>S\}}\big]\\
	&\leq \left(1+ \frac{C_L\sigma\eta^2}2\right)S + \tilde\rho_\eta \E\big[f(\theta_{t})\big] 
\end{align*}
where $\tilde \rho_\eta = 1- \Lambda \eta + \frac{C_L(\Lambda+\sigma)}{2\Lambda}\eta^2$. If we abbreviate $z_t = \E\big[f(\theta_t)\big]$, we deduce that 
\[
z_{t+1} \leq \left(1+ \frac{C_L\sigma\eta^2}2\right)S + \tilde\rho_\eta\,z_t \leq \max\left\{\frac{1+\tilde\rho_\eta}2\,z_t, \:\frac{\left(1+ \frac{C_L\sigma\eta^2}2\right)S}{1 - \frac{1+\tilde\rho_\eta}2}\right\}.
\]
Thus if $z_t$ is large, then $z_t$ decays. In fact
\[
\limsup_{t\to\infty} z_t \leq \frac{\left(1+ \frac{C_L\sigma\eta^2}2\right)S}{1 - \rho_\eta} = \frac{\left(1+ \frac{C_L\sigma\eta^2}2\right)S}{\Lambda\eta - \frac{C_L(\Lambda+\sigma)}{2\Lambda}\eta^2}
\]
independently of the initial condition.
\end{proof}

Note that the finiteness of the bound hinges on the fact that the learning rate remains uniformly positive in this simple proof. In other variants of gradient flow, it can be non-trivial to control the possibility of escape. 

The trajectories of SGD satisfy stronger bounds than the expectations. 

\begin{lemma}\label{lemma liminf S}
\[
\P\big(\liminf_{t\to\infty} f(\theta_t) \leq S\big) = 1.
\]
\end{lemma}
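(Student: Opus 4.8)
The plan is to establish the equivalent statement $\P\big(\liminf_{t\to\infty} f(\theta_t) > S\big) = 0$. First I would note that this event is contained in the increasing union $\bigcup_{T\geq 0} B_T$, where $B_T = \{f(\theta_t) > S \text{ for all } t \geq T\}$: indeed, $\liminf_{t\to\infty} f(\theta_t) > S$ forces $\inf_{t\geq T} f(\theta_t) > S$ for some finite $T$, which is precisely membership in $B_T$. Since $B_T \subseteq B_{T+1}$, continuity of measure along the increasing union reduces the claim to showing $\P(B_T) = 0$ for each fixed $T$.

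The driving mechanism is that on $\{f > S\}$ the \L{}ojasiewicz inequality \eqref{eq lojasiewicz large} holds, so the one-step estimate from the proof of Lemma \ref{lemma expectation remains finite} gives the pointwise conditional bound
\[
\E\big[f(\theta_{t+1}) \mid \F_t\big] \leq \tilde\rho_\eta\,f(\theta_t) \qquad\text{on the event } \{f(\theta_t) > S\},
\]
with $\tilde\rho_\eta < 1$ by the standing hypothesis $\eta < \frac{2\Lambda}{C_L(\Lambda+\sigma)}$. To exploit a bound that is only valid while the trajectory sits above $S$, I would introduce the first return time $\tau = \inf\{t \geq T : f(\theta_t) \leq S\}$ and form the stopped, reweighted process $Z_t = \tilde\rho_\eta^{-(t\wedge\tau)}\,f(\theta_{t\wedge\tau})$ for $t\geq T$. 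I would then verify that $(Z_t)_{t\geq T}$ is a non-negative supermartingale with respect to $(\F_t)$ by splitting on $\{\tau \leq t\}$, where the process is frozen so $Z_{t+1}=Z_t$, and on $\{\tau > t\}$, where $f(\theta_t) > S$ and the displayed decay bound cancels exactly against the factor $\tilde\rho_\eta^{-1}$ absorbed into the weight, yielding $\E[Z_{t+1}\mid\F_t] \leq Z_t$. Integrability of each $Z_t$ follows from $\sup_t \E[f(\theta_t)] < \infty$ (Lemma \ref{lemma expectation remains finite}) together with $t\wedge\tau \leq t$, so the martingale convergence theorem guarantees that $Z_t$ converges almost surely to a finite limit.

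The contradiction is then immediate: on $B_T$ we have $\tau = \infty$, hence $Z_t = \tilde\rho_\eta^{-t}\,f(\theta_t) > \tilde\rho_\eta^{-t}\,S \to \infty$ as $t\to\infty$ because $\tilde\rho_\eta < 1$, which is incompatible with almost sure convergence of $Z_t$ to a finite value. Therefore $\P(B_T) = 0$ for every $T$, and passing to the union gives $\P\big(\liminf_{t\to\infty} f(\theta_t) \leq S\big) = 1$. I expect the main obstacle to be the bookkeeping around the stopping time rather than any deep estimate: the geometric-decay bound holds only while $f(\theta_t) > S$, so one cannot condition directly on the non-adapted event $B_T$, and the stopped supermartingale $Z_t$ is precisely the device that transfers the almost sure finiteness of its limit into the desired probabilistic statement. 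Confirming the supermartingale property separately on $\{\tau \leq t\}$ and $\{\tau > t\}$, and checking integrability of $f(\theta_{t\wedge\tau})$, are the points requiring care.
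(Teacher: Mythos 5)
Your proof is correct, and it takes a recognizably different technical route from the paper while relying on the same core mechanism. Both arguments decompose $\{\liminf_t f(\theta_t)>S\}$ into a countable union of events of the form ``$f$ stays above $S$ for all $t\geq T$'' and both exploit the same one-step conditional contraction $\E[f(\theta_{t+1})\mid\F_t]\leq\tilde\rho_\eta f(\theta_t)$ on $\{f(\theta_t)>S\}$, which follows from the \L{}ojasiewicz inequality \eqref{eq lojasiewicz large} exactly as in Lemma \ref{lemma expectation remains finite}. The difference is in how the contraction is cashed in. The paper iterates it directly on indicator-weighted expectations: using $1_{\Omega_n}\leq S^{-1}1_{\Omega_{n,t}}f(\theta_{t+1})$ and the tower property it obtains $\P(\Omega_n)\leq \tilde\rho_\eta^{\,t-n}\,\E[f(\theta_n)]/S\to 0$ as $t\to\infty$ --- an entirely elementary argument needing only Markov's inequality and finiteness of $\E[f(\theta_n)]$. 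You instead package the contraction into the stopped, reweighted process $Z_t=\tilde\rho_\eta^{-(t\wedge\tau)}f(\theta_{t\wedge\tau})$, verify the supermartingale property by splitting on $\{\tau\leq t\}$ and $\{\tau>t\}$ (both steps check out, since $\{\tau>t\}\subseteq\{f(\theta_t)>S\}$ and $\{\tau\leq t\}\in\F_t$), and invoke Doob's convergence theorem for non-negative supermartingales to get a contradiction with $Z_t\geq\tilde\rho_\eta^{-t}S\to\infty$ on $B_T$. Your approach uses heavier machinery but is conceptually clean and avoids the paper's bookkeeping with the nested events $\Omega_{n,N}$; the paper's approach is more self-contained and additionally produces a quantitative rate, $\P(\Omega_n)\leq C\tilde\rho_\eta^{\,t-n}/S$, rather than just a null-set statement. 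One small point to tighten: integrability of $Z_t$ does not follow merely from ``$t\wedge\tau\leq t$''; the clean justification is $f(\theta_{t\wedge\tau})\leq\sum_{s=T}^{t}f(\theta_s)$, whose expectation is finite by Lemma \ref{lemma expectation remains finite}, after which $\E[Z_t]\leq\E[Z_T]<\infty$ propagates by the supermartingale inequality.
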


\begin{proof}
Let 
\begin{align*}
\Omega_{n, N} &= \{f(\theta_t)\geq S \text{ for all } n< t\leq N\}, \qquad
\Omega_n = \bigcap_{N\geq n}\Omega_{n,N}
\end{align*}
In particular, $\Omega_n \subseteq \Omega_{n,N}$ for all $N\geq n$ and $1_{\Omega_n}\leq 1_{\Omega_{n,t+1}}\leq 1_{\Omega_{n,t}}$ for all $t\geq n$. Thus
\begin{align*}
\P(\Omega_n) &= \E\big[1_{\Omega_n}\big]\\
	&\leq \frac1S\,\E\big[1_{\Omega_{n}}f(\theta_{t+1})\big]\\
	&\leq \frac1S\,\E\big[1_{\Omega_{n,t}}f(\theta_{t+1})\big]\\
	&= \frac1S\,\E\big[ \E\big[1_{\Omega_{n,t}}\,f(\theta_{t+1}) |\F_t\big]\big]\\
	&\leq  \frac{\rho_\eta}S\E\big[ 1_{\Omega_{n,t}} f(\theta_t)\big]\\
	&\leq \dots\\
	&\leq \rho_\eta^{t-n} \frac{\E\big[ f(\theta_n)\big]}S\\
	&\leq \rho_\eta^{t-n} \frac{\sup_{s\in\N}\E\big[ f(\theta_s)\big]}S
\end{align*}
for any $t\geq n$. Thus $\P(\Omega_n) = 0$ for all $n\in\N$. Hence also
\[
\P\big(\liminf_{t\to\infty} f(\theta_t) > S\big) \leq \P\left(\bigcup_{n=1}^\infty\Omega_n \right) =0.
\]
\end{proof}

We have shown that we visit the set $\{f<S\}$ infinitely often almost surely. We now show that in every visit $\theta_t$, the probability that $f(\theta_{t+1}) < \eps'$ is uniformly positive. Below, we will use this to show that we visit the set $\{f<\eps'\}$ infinitely often with uniformly positive probability (which then implies that SGD iterates approach the set of minimizers almost surely). In this step, we use that the noise is uniformly `spread out'.

\begin{lemma}
There exists $\gamma>0$ such that the following holds: If $f(\bar\theta)\leq S$, then 
\[
\P \big( f(\bar\theta- \eta g(\bar\theta,\xi)\big) < \eps'\big) \geq \gamma.
\]
\end{lemma}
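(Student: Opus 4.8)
The plan is to reduce the statement to the event that one SGD step lands in a small ball around a nearby zero of $f$, and to lower-bound the probability of this event through the noise. Since $\bar\theta$ is fixed, write the update as $\bar\theta-\eta g(\bar\theta,\xi)=\bar\theta-\eta\nabla f(\bar\theta)-\eta\sqrt{\sigma f(\bar\theta)}\,Y_{\bar\theta,\xi}$; if $f(\bar\theta)=0$ then $\nabla f(\bar\theta)=0$ and the claim is trivial, so assume $f(\bar\theta)>0$. First I would record the quadratic \emph{upper} bound $f(\theta)\le\frac{C_L}2|\theta-\theta^*|^2$, valid for every $\theta^*\in N$: integrating $|\nabla f|^2\le 2C_L f$ (Lemma \ref{lemma bounded gradient}) along the segment from $\theta^*$ to $\theta$ gives $\frac{d}{ds}\sqrt{f(\theta^*+sw)}\le\sqrt{C_L/2}$, whence $f(\theta)\le\frac{C_L}2|\theta-\theta^*|^2$. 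Consequently, if the update lands in $B_{\rho_0}(\theta^*)$ with $\rho_0:=\sqrt{\eps'/C_L}$, then $f$ of the update is at most $\eps'/2<\eps'$. Landing in this ball is exactly the event $Y_{\bar\theta,\xi}\in B_{r'}(\tilde\theta')$ with $\tilde\theta'=\frac{\bar\theta-\eta\nabla f(\bar\theta)-\theta^*}{\eta\sqrt{\sigma f(\bar\theta)}}$ and $r'=\frac{\rho_0}{\eta\sqrt{\sigma f(\bar\theta)}}$, so by \eqref{eq spread out noise} it suffices to bound $\psi(|\tilde\theta'|,r')$ below, uniformly over admissible $\bar\theta$.

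The main obstacle, and the crux of the argument, is that the denominator $\eta\sqrt{\sigma f(\bar\theta)}$ degenerates as $f(\bar\theta)\to0$, which would push $|\tilde\theta'|$ to infinity and make $\psi(|\tilde\theta'|,r')$ useless, \emph{unless} the numerator $|\bar\theta-\eta\nabla f(\bar\theta)-\theta^*|$ decays at the same rate. To arrange this I would split into two regimes. When $\eps\le f(\bar\theta)\le S$ the denominator is bounded below by $\eta\sqrt{\sigma\eps}$, so I take $\theta^*$ to be the zero furnished by the tightness condition \eqref{eq low objective tight}, giving $|\bar\theta-\theta^*|<R$; together with $|\nabla f(\bar\theta)|\le\sqrt{2C_LS}$ the center stays bounded. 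When $f(\bar\theta)<\eps$ I instead invoke the local \L{}ojasiewicz inequality \eqref{eq lojasiewicz small}: the gradient flow started at $\bar\theta$ (well defined by \eqref{eq lipschitz global convergence}) stays in $\{f<\eps\}$, decays like $f(\bar\theta)e^{-\lambda t}$, and converges to a zero $\theta^*\in N$ with $|\bar\theta-\theta^*|\le\frac{2\sqrt{2C_L}}{\lambda}\sqrt{f(\bar\theta)}$, exactly as in the gradient-flow estimate of Example \ref{example cross-entropy}. Combined with $|\nabla f(\bar\theta)|\le\sqrt{2C_Lf(\bar\theta)}$, the numerator is now $O(\sqrt{f(\bar\theta)})$ and cancels the degenerating denominator. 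In both regimes this yields a uniform bound $|\tilde\theta'|\le M$ together with $r'\ge r_0:=\frac{\rho_0}{\eta\sqrt{\sigma S}}>0$.

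It then remains to turn $|\tilde\theta'|\le M$ and $r'\ge r_0$ into a uniform positive lower bound on the jump probability. Since enlarging the radius only increases $\P(Y_{\bar\theta,\xi}\in B_r(\tilde\theta'))$, I may replace $r'$ by the fixed $r_0$, reducing matters to $\P(Y_{\bar\theta,\xi}\in B_{r_0}(\tilde\theta'))\ge\psi(|\tilde\theta'|,r_0)$ for $|\tilde\theta'|\in[0,M]$, uniformly in $\bar\theta$ by \eqref{eq spread out noise}. On the compact range $|\tilde\theta'|\in[r_0/8,M]$ the continuous positive map $s\mapsto\psi(s,r_0)$ has a positive infimum; for $|\tilde\theta'|<r_0/8$, where $\psi$ might degenerate near the boundary $s=0$, I observe that $B_{r_0}(\tilde\theta')$ contains a ball $B_{r_0/2}(q)$ whose center $q$ has norm in $[r_0/4,3r_0/8]$, so a second compactness argument again gives a positive bound. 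Taking $\gamma$ to be the minimum of these finitely many positive constants proves the claim. I expect the regime split and the matching of the numerator's decay to the denominator's degeneration to be the genuinely delicate point; the closing compactness and boundary bookkeeping are routine.
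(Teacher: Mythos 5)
Your proof is correct, and in the low-objective regime it takes a genuinely different route from the paper's. Both arguments coincide on the intermediate regime ($f(\bar\theta)$ between the \L{}ojasiewicz threshold and $S$): there you and the paper both use the tightness condition \eqref{eq low objective tight} to find a zero $\theta^*$ within distance $R$, the quadratic upper bound $f(\theta)\leq \frac{C_L}{2}|\theta-\theta^*|^2$ (a consequence of Lemma \ref{lemma bounded gradient}) to convert ``landing near $\theta^*$'' into ``$f<\eps'$'', and the spread-out-noise assumption \eqref{eq spread out noise} plus compactness of the admissible centers and radii to get a uniform lower bound. The divergence is in the regime where $f(\bar\theta)$ is small: the paper splits at $\eps'$ and disposes of the case $f(\bar\theta)<\eps'$ in one line via the one-step decay estimate $\E\big[f(\bar\theta-\eta g)\big]\leq \rho_\eta f(\bar\theta)$ and Markov's inequality, yielding $\P\big(f(\bar\theta-\eta g)<\eps'\big)\geq 1-\rho_\eta$ without any lower bound on the noise. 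You instead split at $\eps$ and keep the noise-jump mechanism in force there, which obliges you to supply the key cancellation: the gradient-flow argument (the same one the paper uses in its Appendix B corollary and in Example \ref{example cross-entropy}) produces a zero at distance at most $\frac{2\sqrt{2C_L}}{\lambda}\sqrt{f(\bar\theta)}$, so the numerator of the rescaled center decays at exactly the rate $\sqrt{f(\bar\theta)}$ at which the noise scale degenerates. This is a nice geometric observation — a local \L{}ojasiewicz inequality bounds the distance to the zero set by a multiple of $\sqrt{f}$ — and it makes the jump mechanism uniform over all of $\{f\leq S\}$; the price is that you invoke the unbounded-noise hypothesis \eqref{eq spread out noise} even where the paper does not need it, whereas the paper's Markov step is shorter and uses only the variance upper bound. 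A further point in your favor: you explicitly handle the boundary behavior of $\psi$ near $s=0$ (by recentering on a ball of definite norm), a technicality the paper's final infimum glosses over since $\psi$ is only assumed continuous and positive on $(0,\infty)^2$. Either case split gives a valid uniform $\gamma>0$, so your argument is a complete substitute for the paper's proof.
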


\begin{proof}
We consider two cases separately: $f(\theta)<\eps'$ or $f(\theta)>\eps'$. In the first case, we can argue by considering the gradient descent structure, while we rely on the stochastic noise in the second case.

{\bf First case.} If $f(\bar\theta)<\eps'$, then
\[
\E\big[ f(\bar\theta- \eta g(\bar\theta,\xi)\big)\big] \leq \rho_\eta\,f(\bar\theta)
\]
since $f$ satisfies a \L{}ojasiewicz inequality in this region. In particular
\[
\P\big(f(\bar\theta- \eta g(\bar\theta,\xi)\big)\geq f(\bar\theta)\big)\leq \frac{\E\big[ f(\bar\theta- \eta g(\bar\theta,\xi)\big)\big]}{f(\bar\theta)} \leq \rho_\eta<1.
\]

{\bf Second case.}
By assumption, the set of moderate energy is not too far from the set of global minimizers in Hausdorff distance, i.e.\ there exists $\theta'$ such that
\begin{enumerate}
\item $f(\theta') =0$ and
\item $|\bar\theta-\theta'|<R$.
\end{enumerate}
Due to the Lipschitz-continuity of the gradient of $f$, there exists $\tilde r>0$ depending only on $C_L$ such that $f(\theta)<\eps'$ for all $\theta\in B_{\tilde r}(\theta')$. We conclude that
\begin{align*}
\P\big( f(\bar\theta- \eta g(\bar\theta,\xi)\big) < r\big) &\geq \P\big(\bar\theta- \eta g(\bar\theta,\xi) \in B_{\tilde r}(\theta')\big)\\
	&= \P\left(Y_{\theta,\xi} \in B_{\frac{\tilde r}{\eta\sqrt{f(\theta)}}}\left(\frac{\bar\theta-\theta' - \eta\,\nabla f(\bar\theta)}{\eta\sqrt{f(\theta)}}\right)\right).	
\end{align*}
By assumption, the radius
\[
\frac{\tilde r}{\eta\sqrt{f(\theta)}} \geq \frac{\tilde r}{\eta\sqrt{S}}  >0
\]
is uniformly positive and the center of the ball
\[
\left| \frac{\theta'-\bar\theta - \eta\,\nabla f(\bar\theta)}{\eta\sqrt{f(\theta)}}\right| \leq \frac{|\theta'-\bar\theta| + \eta \sqrt{2C_LS}}{\eta\,\sqrt {\eps'}} \leq \frac{R + \eta \sqrt{2C_LS}}{\eta\,\sqrt{\eps'}}
\]
is in some large ball independent of $\theta$. Thus the probability of jumping into $\{f< \eps'\}$, albeit small, is uniformly positive with a lower bound
\[
\P\big( f(\bar\theta- \eta g(\bar\theta,\xi)\big)\in B_{r_\delta}(\theta')\big) \geq \inf\left\{\psi(s,r) : s\leq \frac{R + \eta \sqrt{2C_LS}}{\eta\,\sqrt{\eps'}}, \: r > \frac{\tilde r}{\eta\sqrt{S}} \right\}>0.
\]
\end{proof}

By Lemma \ref{lemma liminf S}, the sequence of stopping times $\tau_0=0$, 
\[
\tau_k = \inf\{n> \tau_k+1 : f(\theta_{n})\leq S\}
\]
is well-defined except on a set of measure zero. Consider the Markov process
\[
Z_{2k} = \theta_{\tau_k}, \qquad Z_{2k+1} = \theta_{\tau_k+1}.
\]
Note that we use $\tau_k+1$ for odd times, not $\tau_{k+1}$. To show the stronger statement that
\[
\P\big(\liminf_{t\to\infty} f(\theta_t) \leq \eps'\big) = 1,
\]
we use the conditional Borel-Cantelli Lemma \ref{lemma conditional Borel-Cantelli}.

\begin{lemma}\cite[\"Ubung 11.2.6]{klenke2006wahrscheinlichkeitstheorie}\label{lemma conditional Borel-Cantelli}
Let $\F_n$ be a filtration of a probability space and $A_n$ a sequence of events such that $A_n\in \F_n$ for all $n\in \N$. Define
\[
A^* = \left\{\sum_{n=1}^\infty \E\big[1_{A_n} | \F_{n-1}\big] = \infty\right\}, \qquad A_\infty = \limsup_{n\to \infty} A_n = \big\{A_n \text{ for infinitely many }n\in\N\big\}.
\]
Then $\P(A^*\Delta A_\infty) = 0$ where $A\Delta B$ denotes the symmetric difference of $A$ and $B$.
\end{lemma}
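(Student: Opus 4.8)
The plan is to realize both events through a single martingale. Write $p_n = \E[1_{A_n}\mid \F_{n-1}]$ and set $X_n = 1_{A_n} - p_n$, so that $\E[X_n\mid\F_{n-1}] = 0$, i.e.\ $\{X_n\}$ is a martingale difference sequence for $\{\F_n\}$. The partial sums $M_n = \sum_{k=1}^n X_k = S_n - T_n$ then decompose the counting process $S_n = \sum_{k=1}^n 1_{A_k}$ and the compensator $T_n = \sum_{k=1}^n p_k$. Since $p_k$ is $\F_{k-1}$-measurable, the process $T_n$ is predictable and nondecreasing, and by definition $A_\infty = \{S_\infty = \infty\}$ while $A^* = \{T_\infty = \infty\}$. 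The whole statement therefore reduces to the pathwise identity $\{S_\infty = \infty\} = \{T_\infty = \infty\}$ up to a null set. The one computation I will use repeatedly is $\E[X_k^2\mid\F_{k-1}] = p_k(1-p_k)\le p_k$.

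For the inclusion $A_\infty\subseteq A^*$ I would argue on $\{T_\infty<\infty\}$ and show $S_\infty<\infty$ there. Introduce the stopping times $\tau_c = \inf\{n : T_{n+1} > c\}$ for $c\in\N$; these are stopping times because $T_{n+1}$ is $\F_n$-measurable, and $T_{n\wedge\tau_c}\le c$ since $T$ is nondecreasing. The stopped martingale $M_{n\wedge\tau_c}$ is then bounded in $L^2$, as
\[
\E\big[M_{n\wedge\tau_c}^2\big] = \E\Big[\sum_{k=1}^{n\wedge\tau_c}\E[X_k^2\mid\F_{k-1}]\Big] \le \E\big[T_{n\wedge\tau_c}\big]\le c,
\]
so it converges almost surely. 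On $\{\tau_c = \infty\} = \{T_\infty\le c\}$ this says $M_n$ converges, and since $T_n\to T_\infty\le c$ there, $S_n = M_n + T_n$ converges as well, giving $S_\infty<\infty$. Taking the union over $c\in\N$ yields $\{T_\infty<\infty\}\subseteq\{S_\infty<\infty\}$ up to a null set, which is the claimed inclusion.

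For the reverse inclusion $A^*\subseteq A_\infty$ I would pass to the weighted martingale $N_n = \sum_{k=1}^n (1+T_k)^{-1} X_k$, which is well-defined because $(1+T_k)^{-1}$ is $\F_{k-1}$-measurable. A Riemann-sum comparison against $\int_0^\infty (1+t)^{-2}\,\d t = 1$ gives the pathwise bound
\[
\sum_{k=1}^\infty \frac{\E[X_k^2\mid\F_{k-1}]}{(1+T_k)^2} \le \sum_{k=1}^\infty \frac{T_k - T_{k-1}}{(1+T_k)^2} \le \int_0^\infty \frac{\d t}{(1+t)^2} = 1,
\]
so $N_n$ is $L^2$-bounded and converges almost surely. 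On the event $\{T_\infty=\infty\}$ the weights satisfy $1+T_n\to\infty$, so Kronecker's lemma upgrades the convergence of $N_n$ to $(1+T_n)^{-1}M_n\to 0$; hence $(1+T_n)^{-1}S_n = 1 + (1+T_n)^{-1}M_n\to 1$ and $S_n\to\infty$. This establishes $\{T_\infty=\infty\}\subseteq\{S_\infty=\infty\}$ almost surely, and combining the two inclusions proves $\P(A^*\,\Delta\,A_\infty)=0$.

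The main obstacle is not any single estimate but the bookkeeping that makes the martingale machinery apply: one must keep the indexing consistent so that $T_n$ and the weights $(1+T_n)^{-1}$ are genuinely predictable, and one must invoke almost-sure martingale convergence in the two regimes correctly --- via localization at $\tau_c$ in the first inclusion and via the uniform $L^2$ bound plus Kronecker's lemma in the second. Everything else reduces to the elementary conditional-variance identity $\E[X_k^2\mid\F_{k-1}] = p_k(1-p_k)$ and the monotonicity of $T_n$.
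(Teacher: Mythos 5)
Your proof is correct. Note that the paper itself does not prove this lemma at all --- it is quoted verbatim from Klenke's textbook (Exercise 11.2.6) and used as a black box in the proof of Corollary \ref{corollary liminf global} --- so there is no paper proof to compare against; what you have supplied is the standard martingale argument that the citation points to. Both halves are sound: the localization at $\tau_c=\inf\{n:T_{n+1}>c\}$ (predictable because $T_{n+1}$ is $\F_n$-measurable) gives an $L^2$-bounded stopped martingale and hence a.s.\ convergence of $M_n$ on $\{T_\infty\leq c\}$, which yields $\{T_\infty<\infty\}\subseteq\{S_\infty<\infty\}$ up to a null set; and the predictably weighted martingale $N_n=\sum_{k\leq n}(1+T_k)^{-1}X_k$ with the Riemann-sum bound (valid precisely because the weight sits at the right endpoint $T_k$) plus Kronecker's lemma gives the reverse inclusion. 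The only blemish is a harmless algebraic slip at the end: since $S_n=M_n+T_n$, one has
\[
\frac{S_n}{1+T_n} \;=\; \frac{M_n}{1+T_n} + \frac{T_n}{1+T_n},
\]
and the second summand tends to $1$ on $\{T_\infty=\infty\}$ rather than equalling $1$ identically; the conclusion $S_n/(1+T_n)\to 1$, hence $S_\infty=\infty$, is unaffected.
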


\begin{corollary}\label{corollary liminf global}
\[
\P\big(\liminf_{t\to\infty} f(\theta_t) \leq \eps'\big) = 1.
\]
\end{corollary}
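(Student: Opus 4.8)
The plan is to show that the subsampled sequence $f(\theta_{\tau_k+1})$ drops below $\eps'$ for infinitely many $k$ almost surely, and then invoke the conditional Borel--Cantelli Lemma \ref{lemma conditional Borel-Cantelli}. Since Lemma \ref{lemma liminf S} guarantees $\liminf_{t\to\infty}f(\theta_t)\leq S$ almost surely, the set $\{f\leq S\}$ is visited infinitely often, so each stopping time $\tau_k$ is finite almost surely and the Markov process $Z$ is well-defined outside a null set. By construction $f(\theta_{\tau_k})\leq S$ for every $k\geq 1$. I would define the events
\[
A_k = \big\{ f(\theta_{\tau_k+1}) < \eps'\big\} = \big\{ f(Z_{2k+1}) < \eps'\big\}
\]
and aim to prove that $A_k$ occurs for infinitely many $k$.

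The key input is a uniform conditional lower bound $\E\big[1_{A_k}\,\big|\,\F_{\tau_k}\big] \geq \gamma$. First I would note that, conditioned on $\F_{\tau_k}$, the position $\theta_{\tau_k}$ is fixed with $f(\theta_{\tau_k})\leq S$, while the selector $\xi_{\tau_k}$ is a fresh copy with law $\P$, independent of $\F_{\tau_k}$ by the strong Markov property (this is precisely the reason the process $Z$ was introduced, making rigorous that one SGD update applied at the stopping time $\tau_k$ behaves like a fresh step from a point of $\{f\leq S\}$). Since $\theta_{\tau_k+1} = \theta_{\tau_k} - \eta\,g(\theta_{\tau_k},\xi_{\tau_k})$, the preceding lemma yields
\[
\E\big[1_{A_k}\,\big|\,\F_{\tau_k}\big] = \P_\xi\big( f(\theta_{\tau_k} - \eta\,g(\theta_{\tau_k},\xi)) < \eps'\big) \geq \gamma > 0
\]
pointwise on $\F_{\tau_k}$, independently of where $\theta_{\tau_k}$ lies in $\{f\leq S\}$.

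To apply Lemma \ref{lemma conditional Borel-Cantelli}, I would use the filtration $\F^{(k)} := \F_{\tau_k+1}$, for which $A_k\in\F^{(k)}$ holds by measurability of $\theta_{\tau_k+1}$. Since $\tau_{k-1}+1\leq \tau_k$ gives $\F^{(k-1)}\subseteq \F_{\tau_k}$, the tower property propagates the bound downward:
\[
\E\big[1_{A_k}\,\big|\,\F^{(k-1)}\big] = \E\big[\,\E\big[1_{A_k}\,\big|\,\F_{\tau_k}\big]\,\big|\,\F^{(k-1)}\big] \geq \gamma.
\]
Hence $\sum_{k} \E\big[1_{A_k}\,|\,\F^{(k-1)}\big] \geq \sum_k \gamma = \infty$ almost surely, so the divergence event $A^*$ of Lemma \ref{lemma conditional Borel-Cantelli} has full probability. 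The lemma then gives $\P(A_\infty)=1$, i.e. $A_k$ occurs infinitely often almost surely, and on that event $f(\theta_{\tau_k+1})<\eps'$ along a subsequence of times, whence $\liminf_{t\to\infty} f(\theta_t)\leq\eps'$ almost surely.

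The main obstacle is the measure-theoretic bookkeeping at the stopping times rather than any hard estimate: one must verify carefully that $\xi_{\tau_k}$ is independent of $\F_{\tau_k}$ (the strong Markov step), that $A_k$ is adapted to $\F^{(k)}$, and that the nesting $\F^{(k-1)}\subseteq\F_{\tau_k}$ holds so that the tower property transfers the uniform bound $\gamma$ to the filtration required by the conditional Borel--Cantelli lemma. Everything else is automatic once the uniform lower bound $\gamma$ from the preceding lemma is in hand.
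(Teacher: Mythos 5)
Your proof is correct and follows essentially the same route as the paper's: visits to $\{f\le S\}$ at the stopping times $\tau_k$, the uniform one-step jump probability $\gamma$ from the preceding lemma, and the conditional Borel--Cantelli lemma applied to the events $\{f(\theta_{\tau_k+1})<\eps'\}$. The only difference is bookkeeping --- the paper conditions on the filtration generated by the auxiliary Markov process $Z_n$, while you work with the stopping-time $\sigma$-algebras $\F_{\tau_k+1}$ and make explicit the strong-Markov/tower-property steps that the paper leaves implicit.
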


\begin{proof}
Consider the filtration $\F_n$ generated by $Z_n$ and the events
\[
A_n = \{f(\theta_n) < \eps'\}.
\]
Then
\begin{align*}
\sum_{n=1}^\infty \E\big[1_{A_n} | \F_{n-1}\big] &\geq \sum_{n=1}^\infty \E\big[1_{A_{2n+1}} | \F_{2n}\big]
	\geq \sum_{n=1}^\infty \gamma
	= +\infty
\end{align*}
except on the null set where $\tau_k$ is undefined for some $k$.
Thus $\P\big(\limsup_{n\to\infty}A_n\big) = 1$, so almost surely there exist infinitely many $k\in\N$ such that $f(Z_k) < \eps'$. In particular, almost surely there exist infinitely many $t\in \N$ such that $f(\theta_t)<\eps'$.
\end{proof}

We are now ready to prove the global convergence result. 

\begin{proof}[Proof of Theorem \ref{theorem global convergence}]
Let $\beta\in [1,\rho_\eta^{-1})$ and consider the event
\[
\widehat \Omega_\beta:= \left\{ \limsup_{t\to\infty} \beta^tf(\theta_t) = 0\right\}.
\]
Choose $\delta\in(0,1)$ and associated $\eps'>0$. By Corollary \ref{corollary liminf global}, the stopping time
\[
\tau:= \inf\{t\geq 0 : f(\theta_t) < \eps'\}
\]
is finite except on a set of measure zero. Consider $\theta_\tau$ as the initial condition of a different SGD realization $\tilde\theta$ and note that the conditional independence properties which we used to obtain decay estimates still hold for the gradient estimators $\widetilde g_t = g(\theta_{t+\tau}, \xi_{t+\tau})$ with respect to the $\sigma$-algebras $\widetilde F_t$ generated by the random variables $\theta_{t+\tau}$ for $t\geq 0$. 

By Theorem \ref{theorem local convergence}, we observe that with probability at least $1-\delta$, we have
\[
\limsup_{t\to\infty} \beta^tf(\theta_{t+\tau}) =0.
\]
Thus for any $T>0$, with probability at least $1-\delta - \P(\tau>T)$ we have 
\[
\limsup_{t\to\infty} \beta^tf(\theta_{t}) \leq \limsup_{t\to\infty} \beta^{t+\tau}f(\theta_{t+\tau}) \leq \beta^T\cdot 0 = 0.
\]
Taking $T\to \infty$ and $\delta\to 0$, we find that almost surely
\[
\limsup_{t\to\infty} \beta^tf(\theta_{t}) =0.
\]
\end{proof}

We conclude by proving that not just the function values $f(\theta_t)$, but also the arguments $\theta_t$ converge.

\begin{proof}[Proof of Corollary \ref{corollary global convergence argument}]
The proof follows the same lines as that of Corollary \ref{corollary convergence to minimum lojasiewicz}. Consider the set
\[
U_T = \big\{f(\theta_t)\leq \beta^t \text{ for all }t\geq T\}.
\]
Evidently
\begin{align*}
\E\big[ \big|\theta_{t+1}-\theta_t\big|^2 1_{U_T}\big] &= \eta^2\E\big[ 1_{U_T} |g(\theta_t,\xi_t)|^2\big]\\
	&\leq \eta^2 \E\big[ 1_{U_T} |\nabla f(\theta_t)|^2\big]+ \E\big[1_{U_T}f(\theta_t)\big]\\
	&\leq \eta^2\left(2C_L+\sigma\right) \beta^t.
\end{align*}
As in the proof of Corollary \ref{corollary convergence to minimum lojasiewicz}, we deduce that $\theta_t$ converges pointwise almost everywhere on the set $U_T$. As this is true for all $T$ and 
\[
\P\left(\bigcup_{T=1}^\infty U_T \right) = 1.
\]
due to Theorem \ref{theorem global convergence}, we find that $\theta_t$ converges almost surely to a random variable $\theta_\infty$ and $f(\theta_\infty) = \lim_{t\to\infty} f(\theta_t) =0$.
\end{proof}

\end{document}